\let\oldnl\nl
\newcommand{\nonl}{\renewcommand{\nl}{\let\nl\oldnl}}
\newcommand{\mat}{\mathbf }
\newcommand{\vct}{\boldsymbol }
\newcommand{\kl}{\mathrm{KL}}
\def\mn{\mathfrak n}
\renewcommand{\hat}{\widehat}
\renewcommand{\tilde}{\widetilde}
\renewcommand{\bar}{\overline}
\definecolor{DSgray}{cmyk}{0,1,0,0}
\newcommand{\Authornote}[2]{{\small\textcolor{DSgray}{\sf$<<<${  #1: #2
}$>>>$}}}
\newcommand{\xnote}[1]{{\Authornote{Xi}{#1}}}
\begin{document}


\RUNAUTHOR{Chen, Shi, Wang, and Zhou}

\RUNTITLE{Dynamic Assortment Planning Under Nested Logit Models}

\TITLE{Dynamic Assortment Planning Under Nested Logit Models}


\ARTICLEAUTHORS{%
	\AUTHOR{Xi Chen}
	\AFF{Leonard N. Stern School of Business, New York University, \EMAIL{xc13@stern.nyu.edu}}
	\AUTHOR{Chao Shi (corresponding author)}
	\AFF{School of Information Management and Engineering, Shanghai University of Finance and Economics, \EMAIL{shi.chao@sufe.edu.cn}}
	\AUTHOR{Yining Wang}
	\AFF{Warrington College of Business,
		University of Florida, \EMAIL{yining.wang@ warrington.ufl.edu}}
	\AUTHOR{Yuan Zhou }
	\AFF{   Department of Industrial and Enterprise Systems Engineering, 	Department of Computer Science (Affiliate), University of Illinois at Urbana-Champaign, \EMAIL{yuanz@illinois.edu}}
} 

\ABSTRACT{%
We study a stylized dynamic assortment planning problem during a selling season of finite length $T$. At each time period, the seller offers an arriving customer an assortment of  substitutable products and the customer makes the purchase among offered products according to a discrete choice model. The goal of the seller is to maximize the expected revenue, or equivalently, to minimize the worst-case expected regret.  One key challenge  is that utilities of products are unknown to the seller and need to be learned. Although the dynamic assortment planning problem has received increasing attention in revenue management, most existing work is based on the multinomial logit choice models (MNL). In this paper, we study the problem of dynamic assortment planning under a more general choice model---the nested logit model, which models hierarchical choice behavior and is  ``the most widely used member of the GEV (generalized extreme value) family'' \citep{Train2009}. By leveraging the revenue-ordered structure of the optimal assortment within each nest, we develop a novel upper confidence bound (UCB) policy with an aggregated estimation scheme. Our policy simultaneously learns customers' choice behavior and makes dynamic decisions on assortments based on the current knowledge. It achieves the accumulated regret at the order of $\tilde{O}(\sqrt{MNT})$,  where $M$ is the number of nests and $N$ is the number of products in each nest. We further provide a lower bound result of $\Omega(\sqrt{MT})$, { which shows the near optimality of the upper bound when $T$ is much larger than $M$ and $N$.  When the number of items per nest $N$ is large, we further provide a discretization heuristic for
better performance of our algorithm.} Numerical results are presented to demonstrate the empirical performance of our proposed algorithms.
}


\KEYWORDS{dynamic assortment optimization,  nested logit models, regret analysis, upper confidence bound} \HISTORY{Received: August 2019; accepted: July 2020 by Dan Zhang after two rounds of revision}

\maketitle

%


\section{Introduction}

Assortment planning has a wide range of applications in retailing and online advertising. Given a large number of substitutable products,
the assortment planning problem refers to the selection of a subset of products (a.k.a., an assortment) offered to a customer such that the expected revenue is maximized. To model customers' choice behavior when facing a set of offered products, discrete choice models, which  capture demand for each product as a function of the entire assortment, have been widely used. One of the most popular discrete choice models is the \emph{multinomial logit model (MNL)}, which naturally results from the random utility theory where a customer's preference of a product is represented by the mean utility of the product with a random factor \citep{McFadden1974}. An important extension of the MNL is the nested logit model \citep{Williams77,McFadden1980,Borch-Supan1990} that models a customer's choice in a hierarchical way: a customer first selects a category of products (known as a nest), and then a product within the category. When the mean utilities of the products are given, the static assortment optimization problem under MNL or nested logit models can be efficiently solved \citep{Talluri2004,davis2014assortment}.

In many scenarios, customers' choice behavior (e.g., mean utilities of products) is not given as \emph{a priori} and cannot be easily estimated due to the insufficiency of historical data (e.g.,  fast fashion sale or online advertising). To address this challenge,  dynamic assortment planning that simultaneously learns choice behavior and makes decisions about the assortment has received a lot of attention \citep{Caro2007, Rusmevichientong2010, Saure2013, Agrawal16MNLBandit, Agrawal17Thompson, Chen:18tight, Chen:18near}. More specifically, in a dynamic assortment planning problem, the seller offers an assortment (or a set of assortments for different nests in a nested logit model) to each arriving customer in a finite time horizon $T$, observes the purchase behavior of the customer, and then updates the learned information about the underlying demand function. The goal of the seller is to maximize the cumulative expected revenue over $T$ periods. In the literature,
the \emph{regret} is often adopted to measure the performance of a given dynamic assortment planning policy, which is defined as the gap between the  expected revenue generated by the policy and the oracle expected revenue when the mean utility for each product is known as \emph{a priori}.

In existing dynamic assortment literature, the underlying choice model is usually assumed to be an MNL model \citep{Rusmevichientong2010, Saure2013, Agrawal16MNLBandit, Agrawal17Thompson,  Chen:18near}.
(The work of \citep{Saure2013} also considered other forms of choice models, in addition to the MNL model.)
 In this paper, we study this problem under a more general choice model---the two-level nested logit model. Indeed, the nested logit model is considered as ``the most widely used member of the GEV (generalized extreme value) family'' and ``has been applied by many researchers in a variety of situations'' (see Chapter 4 from \cite{Train2009}).  It is well known that the standard MNL suffers from the independence of irrelevant alternatives (IIA), which implies proportional substitution across alternatives (see Chapter 4 from \cite{Train2009}). The nested logit model relaxes the IIA assumption on alternatives in different nests and thus provides a richer set of substitution patterns. Despite the importance of the nested logit model, the dynamic assortment planning question
 under nested logit models remains an open problem in revenue management due to the complicated structure of nested logit models.


The main contribution of this paper is to develop computationally efficient policies for addressing this problem. Assume that there are $M$ nests and each nest has $N$ possible products to recommend. By leveraging the revenue-ordered structure of optimal assortments and the idea of aggregate estimation of next-level utilities, 
we propose the first upper confidence bound (UCB)-based policy, which leads to a non-asymptotic regret bound, {in which the dominating term involving $T$ is $\tilde{O}(\sqrt{MNT})$} (see Corollary \ref{cor:regret-preliminary} for a more precise bound). Here, $\tilde{O}$ hides the logarithmic dependence on $T, N$, and $M$.

Our second contribution is to understand the information-theoretical limitation of the problem. In particular,  we further provide a lower bound on the regret $\Omega(\sqrt{MT})$ (see Theorem \ref{thm:lower-bound}). {  First, this lower bound shows that when the time horizon $T$ is sufficiently large, our upper bound is within a factor of $\sqrt{N}$ of the lower bound, where $N$ is the number products within each nest and is smaller than the total number of products. The optimal dependence on $N$ is, however, a technically very challenging question and is beyond the scope of this paper. Nevertheless, for the case of $N$ being large, we introduce a discretization technique, which provides a useful heuristic leading to a much improved dependence on $N$. Through simulation studies, we found the discretization heuristic to be very effective with improved performance when there are many items per nest.
} Second, this lower bound also demonstrates a fundamental difference between the nested logit models and standard (plain) MNL models. According to \cite{Chen:18near},
the standard MNL admits a tight lower bound of $\Omega(\sqrt{T})$, independent of other problem parameters (e.g., the number of products). In contrast, for nested logit models, our lower bound shows that, in addition to $\sqrt{T}$, dependency on the number of nests $M$ is unavoidable.

The details of the proposed policies will be presented in the main paper and here we briefly highlight the key technical points  in the proposed policies:
\begin{enumerate}
\item Leveraging the revenue-ordered structure: For $N$ products in each of the $M$ nests, the total number of possible assortment combinations (i.e., the size of the action space) will be $(2^N)^M$, which is exponentially large. By leveraging the revenue-ordered structure of the optimal assortment within each nest (see Lemma \ref{lem:nested-popular} and \cite{davis2014assortment,Li2015d}), the size of the action space can be effectively reduced to $O(N^M)$.  However, the size of this reduced action space is still too large if one directly applies existing bandit learning algorithms that treat each assortment in the action space independently, which will incur a regret related to $N^M$. To address this challenge, we propose an aggregate estimation technique as follows.

\item Aggregate estimation:  A key point of the paper is that estimating utility parameters for each individual product (that will incur a large regret) is unnecessary for dynamic assortment planning. Instead, we propose an aggregate estimation technique that only estimates the preference and revenue parameters on a nest level.
More specifically, in our algorithm only \emph{level sets} of assortments within each nest are considered,
which have both \emph{unknown} aggregated revenue and utility parameters.

Another advantage of our algorithm is that it shows that estimation of exponent parameters $\{\gamma_i\}_{i=1}^M$ (see Eq.~(\ref{eq:vi}) in the nested logit model specification) is not necessary.
 Instead, we directly estimate ``nested-level utility'' $V_i(\cdot)^{\gamma_i}$ (see Eq. \eqref{eq:simpler-model} and the discussions above Eq. \eqref{eq:simpler-model}).


\item UCB policy: 
We propose an upper confidence bound (UCB) algorithm using an epoch-based strategy from \cite{Agrawal16MNLBandit},
which leads to a worst-case expected regret of $\tilde{O}(\sqrt{MNT})$.   Although the UCB  has been a well-known technique for bandit problems, adopting this high-level idea to solve a problem with specific structures certainly requires technical innovations (e.g., how to build a confidence bound on a carefully designed parameter, see Lemma \ref{lem:new-nested-concentration}).
 {    We further note that our UCB policy generalizes the one in \cite{Agrawal16MNLBandit} because in our model the ``level sets'' are constructed within each nest,
and therefore both their revenue and utility parameters are unknown (see Eq. \eqref{eq:nest-level-utility} in Sec.~\ref{sec:reduction})
This contrasts the setting in \cite{Agrawal16MNLBandit} in which the revenue parameters of each single item are known.}

\item  Discretization technique:  When $N$ is large, we introduce a discretization technique to reduce the size of the action space to $O((1/\delta)^M)$, where $\delta$ is discretization granularity. Our policy without discretization corresponds to a special case of $\delta=0$.    We are able to show that the proposed space reduction techniques lose very little in terms of optimal expected revenue, i.e., the gap of the optimal expected revenue between all the possible assortment combinations and the  reduced action space is at most $\delta$ (see Lemma \ref{lem:discretization}).
Simulation studies confirm the effectiveness of this discretization  heuristic.



\end{enumerate}

To the best of our knowledge, our policies are the first policies for dynamic assortment planning under the nested logit model,
which presents unique challenges compared to the standard MNL model as the nest-level revenues of assortment selections are not known and have to be estimated on the fly.  It is also worthwhile noting that due to the complicated structure of the nested logit model, it is technically challenging to derive a tight lower bound on the regret in terms of $N$, {and  we suspect that the current lower bound $\Omega(\sqrt{MT})$ is not tight and misses a $\sqrt{N}$ factor (see more detailed discussions in Remark \ref{rem:lowerbound} in Sec.~ \ref{sec:lower})}. Since the main focus of the paper is to derive the first efficient policy for dynamic assortment planning under nested logit models, we leave this challenging technical problem for future works.


\subsection{Related Works}

Static assortment planning with known choice behavior has been an active research area since the seminal works by \citet{Ryzin1999} and \citet{Mahajan2001}. When the customer makes the choice according to the MNL model, \citet{Talluri2004} and \citet{Gallego2004} proved an optimal  assortment will belong to revenue-ordered assortments (a.k.a. nested-by-revenue assortments). An alternative proof is provided in \cite{Liu2008}. This important structural result enables the efficient computation of static assortment planning under the MNL model, which reduces the number of candidate assortments from $2^N$ to $N$, where $N$ is the number of products per nest. When there is a set constraint on the assortment set, an efficient polynomial-time algorithm (with running time $O(N^2)$) was proposed in \cite{Rusmevichientong2010}. For nested logit models, \citet{davis2014assortment} proved an important structural result that the optimal assortment within each nest is revenue-ordered, which will also be used in designing our dynamic policies. Assuming that there are $M$ nests and $N$ products within each nest, \cite{Li2014} further proposed an efficient greedy algorithm to find an optimal assortment set with $O(N M\log M)$ time complexity. \citet{Kok11} considered the joint assortment optimization and pricing problem with a restricted number of nests.
There are several recent works on static assortment planning under variants of nested logit models. For example, \citet{Gallego2014} studied the constrained nested logit model;  \citet{Li2015d} extended the popular two-level nested logit model to a $d$-level nested logit model with $d\geq 2$; \citet{Zhang17paired} studied the paired combinatorial nested logit model. In addition, there are extensive research on static assortment optimization for more complex choice models, e.g., a robust version of MNL  \citep{rusmevichientong2012robust},  the mixture of logit models \citep{Bront2009, Mendez-Diaz2014,rusmevichientong2014assortment}, Markov chain-based choice models \citep{Blanchet2016,Desir2015}, the  generalized attraction model \citep{Wang:13:assortment}, Mallows-based choice models \citep{Desir2016}, a multiple attempt model \citep{Chung:18}, contextual MNL \citep{Wang:17:person}, and a general class of choice models based on a distribution over permutations \citep{Farias2013}.

{
\cite{davis2014assortment} considered the nested logit model studied in this paper, with
both the cases of $\gamma_i\leq 1$ and $\gamma_i>1$.
 In the case of $\gamma_i\leq 1$, they established the revenue-order
property within each nest, but considered an alternative linear programming type algorithm
to solve for optimal assortments efficiently. 
 In \cite{Li2015d} the optimization question of
$d$-level nested logit models is considered, and efficient fractional programming based methods
are developed. Our optimization subroutine (see Sec.~\ref{subsec:optimization}) turns out to be similar as the one in \cite{Li2015d} in the
 special case of $d=2$, with the difference being that upper-confidence estimates of $R_i(S_i)$ and
$V_i(S_i)^{\gamma_i}$ are used in our optimization, while in \cite{Li2015d} the full-information parameter
values were used.
}

Due to increasing popularity of data-driven revenue management, researchers have started to relax the assumption about fully available prior knowledge of customers' choice behavior and investigate dynamic assortment planning. Motivated by fast-fashion retailing, the work by \cite{Caro2007} was among the first to study the dynamic assortment planning problem, which assumes that the demand for products is independent of each other. \cite{Bertsimas:17} studied a two-step problem with separate demand estimation and assortment planning, where the first step estimates a generic ranking-based choice model and the second step solves a mixed-integer optimization for assortment planning. \cite{Rusmevichientong2010, Saure2013}, \cite{Agrawal16MNLBandit, Agrawal17Thompson}, and \cite{Chen:18near} incorporated choice models of MNL into dynamic assortment planning, formulating the problem into an online regret minimization problem. 
{ However, the extension of the plain MNL model to nested logit models is highly nontrivial and requires several technical innovations. For example, instead of estimating utility parameters for each product, we estimate nest-level aggregated quantities (see more discussions in the introduction).  Furthermore, we introduce a discretization technique to alleviate the effect of having many items per nest.} 

There is another line of recent research on investigating the assortment planning question in which each arriving customer could have a different choice behavior. For example, \cite{Golrezaei2014} and \cite{Chen:16recom} assumed that each customer's choice behavior is known but that the customers' arriving sequence can be adversarially chosen, and took into account both the revenue and inventory levels. Since the arriving sequence can be arbitrary, there is no learning component in the problem and both \cite{Golrezaei2014} and \cite{Chen:16recom} adopted the competitive ratio as the performance evaluation metric.  In addition, there are a few recent works studying joint assortment planning and pricing under MNL models (see e.g., \cite{Wang:12:joint}, \cite{Besbes:14}, and \cite{Miao:18}). It would also be an interesting future work to consider dynamic joint assortment planning and pricing under nested MNL models.

\subsection{Notations and paper organizations}

Throughout the paper, we use $f(\cdot) \lesssim g(\cdot)$ to denote that $f(\cdot) = O(g(\cdot))$,
or more specifically $\limsup_{T\to\infty}|f(T)|/|g(T)|<\infty$.
 Similarly, by $f(\cdot) \gtrsim g(\cdot)$, we denote $f(\cdot) = \Omega(g(\cdot))$. We also use $f(\cdot) \asymp g(\cdot)$ for $f(\cdot) = \Theta(g(\cdot))$. In the paper, $\tilde{O}(\cdot)$ is used to hide logarithmic factors on $T$, $N$, and $M$. The rest of the paper is organized as follows: In Section \ref{sec:app_nested}, we first provide the background of  nested logit models and introduce an important structural result on optimal assortments \citep{davis2014assortment, Li2015d}.
In Section \ref{sec:nest_LUCB}, we propose our UCB policy and establish the corresponding regret bound.
A lower bound on regret is provided in Section \ref{sec:lower}.
 The numerical results are provided in Section \ref{sec:numerical}, followed by the conclusion in Section \ref{sec:conclusions}.

\section{Model specifications and assortment space reductions}
\label{sec:app_nested}

In this section we formally introduce the nested logit assortment choice model considered in this paper.
We restrict ourselves to two-level nested logit models, where items are organized as $M$ known commodity \emph{nests}
and customers' purchasing actions are modeled by a \emph{hierarchical} multinomial logit model (more details given in Section~\ref{subsec:model-nested}).

\subsection{The nested logit model}\label{subsec:model-nested}

We use $[M]=\{1,2,\cdots,M\}$ to label the $M$  nests.
For each nest $i\in[M]$, label the items in nest $i$ by $[N_i]=\{1,2,\cdots,N_i\}$.
Each item $j\in [N_i]$
is associated with a \emph{known} revenue parameter $r_{ij}$ and an \emph{unknown} mean utility parameter $v_{ij}$.
We assume each nest has an equal number of items, i.e., $N_1=\cdots=N_M=N$.
Further, let $\{\gamma_i\}_{i\in[M]}\subseteq [0,1]$ be a collection of \emph{unknown} correlation parameters for different nests. Each parameter $\gamma_i$ is a measure of the degree of independence among the items in nest $i$: a larger value of $\gamma_i$ indicates less correlation (see Chapter 4 of \cite{Train2009}).

At each time period $t\in\{1,2,\cdots,T\}$, the retailer offers the arriving customer an assortment $S_i^{(t)}\in \mathbb S_i=2^{[N]}$ for every nest $i\in[M]$, conveniently denoted as $\mat S^{(t)}=(S_1^{(t)},\cdots,S_M^{(t)})$.
The retailer then observes a nest-level purchase option $i_t\in [M]\cup\{0\}$. If  $i_t\in[M]$, an item $j_t\in[N]$ is purchased within the nest $i_t$. On the other hand, $i_t=0$ means no purchase occurs at time $t$. The probabilistic model for the purchasing option $(i_t,j_t)$ can be formulated as below:
\begin{equation}
\Pr\left[i_t=i|\mat S^{(t)}\right] = \frac{V_i(S_i^{(t)})^{\gamma_i}}{V_0 + \sum_{i'=1}^MV_{i'}(S_{i'}^{(t)})^{\gamma_{i'}}},\quad \text{where}\;\;
V_0\equiv 1, \;\;  V_{i}(S_{i}^{(t)}) = \sum_{j\in S_{i}^{(t)}}v_{ij}\;\;\text{for $i\in[M]$};
\label{eq:vi}
\end{equation}
\begin{equation}
\Pr\left[j_t=j|i_t=i,\mat S^{(t)}\right] = \frac{v_{ij}}{\sum_{j'\in S_i^{(t)}}v_{ij'}} \;\;\;\;\text{for}\;\; i\in[M], \;\; j\in S_i^{(t)}.
\end{equation}
{ Note that when $\gamma_i=1$ for all $i \in[M]$, the nested logit model reduces to the standard MNL model.}

The retailer then collects revenue $r_{i_t,j_t}$ provided that $i_t\neq 0$.
The expected revenue $R(\mat S^{(t)})$ given the assortment combination $\mat S^{(t)}$ can then be written as
{ \begin{eqnarray}
		R(\mat S^{(t)})& =& \sum_{i=1}^M \Pr\left[i_t=i|\mat S^{(t)}\right] \sum_{j \in S_i^{(t)}} r_{ij}\Pr\left[j_t=j|i_t=i,\mat S^{(t)}\right] \nonumber\\
		& = & \frac{\sum_{i=1}^MR_i(S_i^{(t)})V_i(S_i^{(t)})^{\gamma_i}}{1 + \sum_{i=1}^MV_i(S_i^{(t)})^{\gamma_i}};
		\qquad \qquad \text{where}\;\; R_i(S_i^{(t)}) = \frac{\sum_{j\in S_i^{(t)}}r_{ij}v_{ij}}{\sum_{j\in S_i^{(t)}} v_{ij}}.
		\label{eq:ri}
\end{eqnarray}}
The objective of the seller is to minimize \emph{expected (accumulated) regret}, defined as follows:
\begin{equation}
\mathrm{Regret}(\{\mat S^{(t)}\}_{t=1}^T) := \sum_{t=1}^T R^* - \mathbb E\left[R(\mat S^{(t)})\right],\;\;\;\;\text{where}\;\;\;\; R^* =\max_{\mat S\in\mathbb S=\mathbb S_1\times\cdots\times\mathbb S_M} R(\mat S).
\label{eq:nested-regret}
\end{equation}

Throughout the paper, we make the following boundedness assumptions on revenue and utility parameters:
\begin{enumerate}[leftmargin=0.5in]
	\item[(A1)] The revenue parameters satisfy $0\leq r_{ij} \leq 1$ for all $i \in [M]$ and $j \in [N]$.
	\item[(A2)]  The utility parameters satisfy  $0<v_{ij}\leq C_V$ for all $i \in [M]$ and $j \in [N]$ with some constant $C_V \geq 1$.
\end{enumerate}
The first boundedness assumption on revenue parameters is standard in the literature (see e.g.,   Theorem 1 in \cite{Agrawal16MNLBandit}). It is also worthwhile noting that assumption (A2) is weaker than the common assumption that no purchase (with $V_0=1$) is the most frequent outcome.
{ Both assumptions can be regarded as without loss of generality as the parameter values could be normalized.}

{
We remark that in the original nested-logit model assortment planning paper \citep{davis2014assortment},
it is allowed that $\gamma_i>1$ and furthermore there is a no-purchase option \emph{within each nest}.
We assumed $\gamma_i\leq 1$ because it is the setting in which the full-information combinatorial optimization problem
is easy to solve, which is the foundation of our theoretical regret analysis.
Indeed, when $\gamma_i$ exceeds one, it is proved in the work of Davis et al.~(2014) that the combinatorial optimization question
(when all parameters are known) is NP-hard, and only approximation algorithms can be developed.

We do not allow for a no-purchase option within each nest, on the other hand, for a more technical reason.
In our proposed learning-while-doing algorithm, it is critical to count the number of times that each nest $i$ is selected by customers
until a no-purchase action \emph{on the nest level} occurs.
If we allow for no-purchase options within each nest, our algorithm will no longer be able to distinguish between the events of no-purchase
on the nest level or within nests. This leads to biased estimates of $V_i(S_i)^{\gamma_i}$ parameters and potentially linear regret.
Hence, we choose not to include no-purchase options within nests for a cleaner algorithm and analysis.
}





\subsection{Assortment space reductions}
\label{sec:reduction}

For nested logit models, the complete assortment selection space (a.k.a. action space) $\mathbb S=\mathbb S_1\times \mathbb S_2\times\cdots\mathbb \times \mathbb S_M$ is extremely large, consisting of an exponential number of candidate assortment selections  { (on the order of $(2^N)^M$)}.
Existing bandit learning approaches treating each assortment set in $\mathbb S$ independently would easily incur a regret also exponentially large. It is thus mandatory to reduce the number of candidate assortment sets in $\mathbb S$.

Fortunately, existing results on the structure of optimal $\mat S$ show that it suffices
to consider \emph{level sets} $\mathcal L_i(\theta_i) := \{j\in[N]: r_{ij}\geq\theta_i\}$ for each nest $i$. { In other words, $\mathcal L_i(\theta_i)$ is the set of products in nest $i$ with revenue larger than or equal to a given threshold $\theta_i \geq 0$}.
{Define $\mathbb P_i := \left\{\mathcal L_i(\theta_i): \theta_i\geq 0\right\}\subseteq \mathbb S_i$ to be all the possible level sets of $\mathbb S_i$  and let
\begin{equation}\label{eq:P_space}
\mathbb P := \mathbb P_1\times\mathbb P_2\times\cdots\times\mathbb P_M\subseteq\mathbb S.
\end{equation}
}

The following lemma from \cite{davis2014assortment} and \cite{Li2015d} shows that one can restrict the assortment selections to $\mathbb P$ without loss of any optimality in terms of expected revenue.
\begin{lemma}[\cite{davis2014assortment,Li2015d}]
	There exists level set threshold parameters $(\theta_1^*, \ldots, \theta_M^*)$ and $\mat S^* = (\mathcal L_1(\theta_1^*),\cdots,\mathcal L_M(\theta_M^*))\in\mathbb P$ such that the following hold:
	\begin{enumerate}
		\item $R(\mat S^*) = \max_{\mat S\in\mathbb S}R(\mat S) = R^*$;
		\item $\theta_i^* \geq \gamma_iR^* + (1-\gamma_i)R_i(S_i^*)$ for all $i\in[M]$, where $S_i^*=\mathcal L_i(\theta_i^*)$.
	\end{enumerate}
	\label{lem:nested-popular}
\end{lemma}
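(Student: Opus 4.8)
The plan is to exploit two structural features of the revenue function~\eqref{eq:ri}: once the optimal value $R^*$ is frozen, the maximization \emph{decouples} across nests, and each resulting single-nest subproblem is controlled by one elementary concavity inequality for the power map. Throughout, let $\mat S^*=(S_1^*,\dots,S_M^*)$ be a maximizer of $R(\cdot)$ over the finite space $\mathbb S$, so that $R^*=R(\mat S^*)$, and abbreviate $z=R^*$.

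First I would isolate a single nest. Fixing every nest other than $i$ and collecting the frozen terms into constants $A=\sum_{i'\neq i}R_{i'}(S_{i'}^*)V_{i'}(S_{i'}^*)^{\gamma_{i'}}$ and $B=1+\sum_{i'\neq i}V_{i'}(S_{i'}^*)^{\gamma_{i'}}$, the revenue becomes $R(\mat S)=(A+R_i(S_i)V_i(S_i)^{\gamma_i})/(B+V_i(S_i)^{\gamma_i})$ as a function of $S_i$ alone. The inequality $R(\mat S)\le R^*$ rearranges into $(R_i(S_i)-z)V_i(S_i)^{\gamma_i}\le zB-A$, which holds with equality at $S_i=S_i^*$; hence $S_i^*$ maximizes the single-nest objective $h_i(S_i):=(R_i(S_i)-z)V_i(S_i)^{\gamma_i}$ over $\mathbb S_i$. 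Evaluating the same identity at $S_i=\emptyset$ exhibits $R^*$ as a convex combination of $R_i(S_i^*)$ and the frozen ratio $A/B$, so $R_i(S_i^*)<R^*$ would make the assortment that empties nest $i$ strictly better, contradicting optimality; therefore $R_i(S_i^*)\ge R^*$ for every nonempty nest, a sign fact I will use to orient the inequalities below.

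Next I would run a one-item exchange argument on $h_i$. Writing $w=R_i(S_i^*)$ and $V=V_i(S_i^*)$, I use the factorization $h_i(S)=\big(\sum_{l\in S}(r_{il}-z)v_{il}\big)V_i(S)^{\gamma_i-1}$, so that adding an excluded item $j\notin S_i^*$ (resp.\ deleting an included item $j\in S_i^*$) shifts the first factor by $\pm(r_{ij}-z)v_{ij}$ and $V_i$ by $\pm v_{ij}$. Because $S_i^*$ maximizes $h_i$, neither move can increase it; dividing the two no-improvement inequalities by $V^{\gamma_i-1}>0$ and writing $u=+v_{ij}/V$ for the excluded case and $u=-v_{ij}/V$ for the included case, each collapses to $\pm(r_{ij}-z)v_{ij}\le (w-z)V[(1+u)^{1-\gamma_i}-1]$, with the sign of the left-hand side matching the sign of $u$. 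I then invoke the concavity bound $(1+u)^{1-\gamma_i}\le 1+(1-\gamma_i)u$, valid for $u\ge-1$ since $1-\gamma_i\in[0,1]$, to linearize the bracket; combined with $w-z\ge0$ and after cancelling $v_{ij}>0$, the two conditions become $r_{ij}\le\gamma_iR^*+(1-\gamma_i)R_i(S_i^*)$ for every excluded $j$ and $r_{ij}\ge\gamma_iR^*+(1-\gamma_i)R_i(S_i^*)$ for every included $j$. Thus $\theta_i^*:=\gamma_iR^*+(1-\gamma_i)R_i(S_i^*)$ separates the chosen items from the rejected ones, which means $S_i^*=\mathcal L_i(\theta_i^*)$ is a level set and $\theta_i^*$ already realizes the bound in the second claim; since this holds for each $i$, we get $\mat S^*\in\mathbb P$, giving the first claim. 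Empty nests and exact ties at $r_{ij}=\theta_i^*$ are absorbed by replacing $\theta_i^*$ with the smallest selected revenue (or any value exceeding all revenues when $S_i^*=\emptyset$), which can only raise $\theta_i^*$ and so preserves the inequality.

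The step I expect to be delicate is the exchange computation: the factor $V_i(S_i)^{\gamma_i}$ makes the finite differences genuinely nonlinear, and the whole argument turns on applying the concavity inequality \emph{in the correct direction}, which is exactly where the sign fact $R_i(S_i^*)\ge R^*$ is needed to keep the chain coherent for both the addition and the deletion case. The remaining ingredients — the decoupling identity, the convex-combination argument for the sign fact, and the passage from the separating threshold to a level set — are routine once $h_i$ and $\theta_i^*$ have been identified.
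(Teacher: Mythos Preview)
The paper does not give its own proof of this lemma: it is quoted as a known structural result from \cite{davis2014assortment,Li2015d} and used as a black box. So there is no in-paper argument to compare against.

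Your self-contained proof is essentially correct and follows the spirit of the cited references: freeze $R^*$, decouple into the per-nest problems $\max_{S_i}h_i(S_i)=(R_i(S_i)-R^*)V_i(S_i)^{\gamma_i}$, and run an add/delete exchange. The concavity step is applied in the right direction in both cases: from optimality you get $\pm(r_{ij}-z)v_{ij}\le P[(1+u)^{1-\gamma_i}-1]$, and since $P=(w-z)V\ge 0$ the concave-tangent bound $(1+u)^{1-\gamma_i}\le 1+(1-\gamma_i)u$ only enlarges the right-hand side, which is exactly what is needed to conclude $r_{ij}\le\theta_i^*$ (exclusion) and $r_{ij}\ge\theta_i^*$ (inclusion). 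The sign fact $R_i(S_i^*)\ge R^*$ is indeed the hinge that keeps both chains coherent.

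Two small points worth tightening. First, when $|S_i^*|=1$ the deletion move lands on the empty set and the division by $(V-v_{ij})^{\gamma_i-1}$ is unavailable; this case is trivial since then $w=r_{ij}\ge z$ gives $r_{ij}\ge\gamma_i z+(1-\gamma_i)r_{ij}$ directly. Second, your tie-handling sentence (``replace $\theta_i^*$ by the smallest selected revenue'') does not by itself rule out an \emph{excluded} item sitting exactly at that revenue. The cleanest fix is to note that whenever an excluded $j$ has $r_{ij}=\theta_i^*$, all inequalities in your chain are forced to be equalities, so $h_i(S_i^*\cup\{j\})=h_i(S_i^*)$ and hence $S_i^*\cup\{j\}$ is also globally optimal; iterating, some optimal $\mat S^*$ has each $S_i^*$ equal to the full level set $\mathcal L_i(\theta_i^*)$. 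Since the lemma only asserts existence, this suffices.
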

The first item in Lemma \ref{lem:nested-popular} is an important structural result showing that the optimal assortments are ``revenue-ordered'' within each nest. The second item is a technical result, which will be used in the proof.
Compared to the original action space $\mathbb S$, the reduced ``level set'' space $\mathbb P$ is much smaller,
with each $\mathbb P_i$ consisting of $N$ instead of $2^N$ candidate assortments.

With Lemma \ref{lem:nested-popular}, an assortment combination $\mat S=(S_1,\cdots,S_M)\in\mathbb P$ can then be parameterized by a vector $\vct\theta=(\theta_1,\cdots,\theta_M)\in([0,1]\cup\{\infty\})^M$,
such that $\mat S(\vct\theta) = (\mathcal L_1(\theta_1),\cdots,\mathcal L_M(\theta_M))$.
Note that $\mathcal L_i(\infty)=\emptyset$ indicates the empty set for nest $i$.
Denote $\mathcal K_i = [0,1]\cup\{\infty\}$, and for any $i\in[M]$, $\theta_i\in\mathcal K_i$ define
\begin{equation}\label{eq:nest-level-utility}
u_{i,\theta_i}:= V_i(\mathcal L_i(\theta_i))^{\gamma_i} \quad \text{and} \quad \phi_{i,\theta} := R_i(\mathcal L_i(\theta_i)),
\end{equation}
where $V_i(\cdot)$ and $R_i(\cdot)$ are nest-level utility parameter and expected revenue associated with the level set $\mathcal L_i(\theta_i)$
 (see definitions of $V_i$ and $R_i$ in Eq \eqref{eq:vi} and \eqref{eq:ri}, respectively). We note that it is fundamentally different from the standard MNL: the nest-level expected revenue $\phi_{i,\theta}$, which depends on utility parameters, is unknown and needs to be learned; while the revenue of each product in a standard MNL is known to the seller prior to the first selling period.
By our assumptions (A1) and (A2), it is easy to verify that $\phi_{i,\theta_i}\in[0,1]$ and $u_{i,\theta_i}\in [0, (NC_V)^{\gamma_i}]\subseteq [0,NC_V]$ for all $i \in [M]$ and $\theta_i \in \mathcal K_i$.
We also note that $(NC_V)^{\gamma_i}\leq NC_V$, since $\gamma_i\in[0,1]$ and $C_V\geq 1$.
{
Furthermore, because each nest consists of at most $N$ products, the sets $\mathcal K_i$ can be made \emph{finite}
by considering only levels $\theta_i$ corresponding to revenue parameters of the $N$ products.
}

Let $i_t\in[M]\cup\{0\}$ be the nest the customer selects at time $t$ and $r_t$ be the collected revenue.
The expected revenue for assortment $\mat S^{(t)}$ parameterized by $\vct\theta^{(t)}$ can then be expressed as
\begin{equation}
\Pr[i_t=i|\vct \theta^{(t)}] = \frac{u_{i,\theta_i}}{1+\sum_{i'=1}^Mu_{i',\theta_{i'}}}; \;\;\;\;\;\; \mathbb E[r_t|i_t=i] = \phi_{i,\theta_i}; \;\; r_t=0\;\;a.s.\;\;\text{if }i_t=0.
\label{eq:simpler-model}
\end{equation}
{Therefore, the expected revenue for an assortment combination  parameterized by $\vct\theta^{(t)}$ takes the following form:
\[
R'(\vct \theta^{(t)}) := \sum_{i=1}^M\Pr[i_t=i|\vct \theta^{(t)}]\cdot \mathbb E[r_t|i_t=i] =  \frac{{\sum_{i=1}^M\phi_{i,\theta_i}u_{i,\theta_i}}}{{1+\sum_{i=1}^Mu_{i,\theta_i}}},
\]
and the regret in \eqref{eq:nested-regret} can be equivalently written as,
\begin{equation}
\mathrm{Regret}(\{\vct\theta^{(t)}\}_{t=1}^T) := \mathbb E\sum_{t=1}^T R'(\vct\theta^*)-R'(\vct\theta^{(t)})\;\;\;\;\;\;\text{where}\;\; R'(\vct\theta^*)=\max_{\vct\theta \in \mathcal K_1\times\cdots\times\mathcal K_M}R'(\vct\theta).
\label{eq:simpler-regret}
\end{equation}
}

%

\section{UCB-based dynamic assortment planning policies}
\label{sec:nest_LUCB}

\setcounter{AlgoLine}{0}
\begin{algorithm}[t]
	\KwInput{Parameter space of $\vct\theta$: $\mathcal K_1,\cdots,\mathcal K_M$, upper bound $U$ on $\{u_{i,\theta}\}$ in \eqref{eq:nest-level-utility}.}
	\KwOutput{assortment sequences $\vct\theta^{(1)},\cdots,\vct\theta^{(T)}\in\mathcal K_1\times\cdots\times\mathcal K_M$.}
	
	
	Initialization: $\tau = 1$, $\{\mathcal E_\tau\}_{\tau=1}^{\infty} = \emptyset$, $t=1$; 
	{for every $i\in[M]$ and $\theta\in\mathcal K_i$,  set $\mathcal T(i,\theta)=\emptyset$, $T(i,\theta)= 0$, $\hat\phi_{i,\theta}=\overline\phi_{i,\theta}=1$, $\hat u_{i,\theta}=\overline u_{i,\theta}=U$;
		for all $i\in[M]$ and $\theta\in\mathcal K_i$ corresponding to the empty assortment (i.e., $\mathcal L_i(\theta)=\emptyset$),
		set $\overline\phi_{i,\theta}=\phi_{i,\theta}=\overline u_{i,\theta}=u_{i,\theta}=0$}\;
	%
	
	\While{$t\leq T$}{
		Find $\hat{\vct\theta}^{(\tau)}=\hat{\vct\theta}\gets \arg\max_{\vct\theta\in\mathcal K_1\times\cdots\times\mathcal K_M} \bar R'(\vct\theta)$, where $\bar R'(\vct\theta) = [\sum_{i=1}^M\bar\phi_{i,\theta_i}\bar u_{i,\theta_i}]/[1+\sum_{i=1}^M\bar u_{i,\theta_i}]$;\label{line:alg-nested-improved-max} \\
		\Comment{\small This optimization problem can be solved in polynomial time; see Sec.~\ref{subsec:optimization};}
		
		
		\Repeat{$i_{t-1} = 0$ or $t > T$}{\label{line:alg-nested-improved-10}
			Pick $\vct\theta^{(t)} = \hat{\vct\theta}$ and observe $i_t,r_t$ in Eq.~(\ref{eq:simpler-model})
			and update $\mathcal E_\tau \gets \mathcal E_\tau \cup\{t\}$, $t=t+1$\;
		}\label{line:alg-nested-improved-14}
		\For{each $i\in[M]$ with $\mathcal L_i(\hat\theta_i)\neq\emptyset$}{
			Compute $\hat n_{i,\tau} = \sum_{t'\in\mathcal E_\tau}\mathbb I[i_{t'}=i]$ and $\hat r_{i,\tau}=\sum_{t'\in\mathcal E_\tau}r_{t'}\mathbb I[i_{t'}=i]$;
			
			Let $\theta=\hat \theta_i$ (for notational simplicity) and update:
			$
			\mathcal T(i, \theta)\gets\mathcal T(i, \theta)\cup\{\tau\}, \quad T(i, \theta) \gets T(i, \theta)+1;
			$
			
			
			Update the utility and mean revenue estimates and as well as their associated confidence bounds:
			$\hat u_{i,\theta}=\frac{1}{T(i,\theta)}\sum_{\tau'\in\mathcal T(i,\theta)}\hat n_{i,\tau'}, \quad
			\hat\phi_{i,\theta}=\frac{\sum_{\tau'\in\mathcal T(i,\theta)}\hat r_{i,\tau'}}{\sum_{\tau'\in\mathcal T(i,\theta)\hat n_{i,\tau'}}}$;
			
			\eIf{$T(i,\theta)\geq 96\ln(2MTK)$}{
				$\bar u_{i,\theta}=\min\{U, \hat u_{i,\theta}+ \sqrt{\frac{96\max(\hat u_{i,\theta},\hat u_{i,\theta}^2)\ln(2MTK)}{T(i,\theta)}} +\frac{144\ln(2MTK)}{T(i,\theta)}\}$, $\bar\phi_{i,\theta} = \min\{1, \hat\phi_{i,\theta}+\sqrt{\frac{\ln(2MTK)}{T(i,\theta)\hat u_{i,\theta}}}\};
				$
			}
			{
				$\bar u_{i,\theta} = U, \quad \bar \phi_{i,\theta}=1$;
			}
		}
		$\tau \gets\tau + 1$\;
	}

	%
	%
	%
	\caption{The upper confidence bound (UCB) policy for dynamic assortment planning.}
	\label{alg:ucb}
\end{algorithm}

In this section we design dynamic planning policies under the nested logit model using an \emph{upper-confidence-bound (UCB)} approach. 
%
The details and pseudo-code of our proposed policy are given in Algorithm \ref{alg:ucb}.

{
The high-level idea behind Algorithm \ref{alg:ucb} is as follows:
for every nest $i$ and level set $\theta\in\mathcal K_i$, a pair of upper confidence estimates $\bar\phi_{i,\theta}$ and $\bar u_{i,\theta}$ are
constructed and maintained, estimating the nest-level revenue and utility parameters
$R_i(\mathcal L_i(\theta))=(\sum_{j\in \mathcal L_i(\theta)}r_{ij}v_{ij})/(\sum_{j\in\mathcal L_i(\theta)}v_{ij})$,
$V_i(\mathcal L_i(\theta))^{\gamma_i} = (\sum_{j\in\mathcal L_i(\theta)}v_{ij})^{\gamma_i}$.
For every potential customer, an optimal assortment combination based on current (upper) parameter estimates $\bar\phi_{i,\theta},\bar u_{i,\theta}$
are computed, which is then offered to the customers repetitively until a no-purchase action occurs.
Afterwards, the parameter estimates $\bar\phi_{i,\theta},\bar u_{i,\theta}$ are updated for all assortments provided in each nest,
and the dynamic assortment planning procedure continues until a total of $T$ customers are served.
}

We next explain a few notations used in the algorithm and then describe the details of the algorithm. The proofs of the results in this section are provided in the supplementary material.
\begin{itemize} 
	\item[-] $\mathcal E_\tau$: all iterations in epoch $\tau$ where the same assortment combination $\vct{\theta}$ is provided. {Each epoch (corresponding to Steps~\ref{line:alg-nested-improved-10}-\ref{line:alg-nested-improved-14} in Algorithm \ref{alg:ucb}) terminates whenever the no-purchase action is observed.} In other words,  one and only one ``no-purchase'' action $i_t=0$ appears at the last iteration of each epoch $\mathcal E_\tau$.
	
	\item[-] $\mathcal T(i,\theta)$: the indices of epochs in which $\theta\in\mathcal K_i$ is supplied in nest $i$;
	$T(i,\theta)=|\mathcal T(i,\theta)|$ denotes the cardinality of $\mathcal T(i,\theta)$;

	\item[-] $\hat n_{i,\tau}$: the number of iterations in the epoch $\tau$ (i.e., $\mathcal E_\tau$) in which an item in nest $i$ is purchased;
	\item[-] $\hat r_{i,\tau}$: the total revenue collected for all iterations in $\mathcal E_\tau$ in which an item in nest $i$ is purchased;
	\item[-] $\hat u_{i,\theta}, \hat\phi_{i,\theta}, \bar u_{i,\theta}, \bar\phi_{i,\theta}$: estimates of $u_{i,\theta},\phi_{i,\theta}$, and their upper confidence bounds.
\end{itemize}

{The epoch-based strategy (i.e., offering the same assortment until no-purchase is observed) in Algorithm \ref{alg:ucb}
 was first introduced by \cite{Agrawal16MNLBandit} and enjoys the favorable properties stated in the next lemma.
\begin{lemma}
		For each epoch $\mathcal E_\tau$ and nest $i\in [M]$, let $\hat\theta_i\in\mathcal K_i$ be such that assortment $\mathcal L_i(\hat\theta_i)$ is provided in nest $i$ in epoch $\tau$.
		The expectations of the number of iterations and total revenues collected in which nest $i$ is purchased (denoted by $\hat n_{i,\tau}$ and $\hat r_{i,\tau}$, respectively, in Algorithm \ref{alg:ucb}) satisfy
		the following regardless of the other offered assortments $\hat\theta_{i'}$ for $i'\neq i$ in the same epoch:
		\begin{enumerate}
			\item $\mathbb E[\hat n_{i,\tau}] = u_{i,\hat\theta_i}$;
			\item $\mathbb E[\hat r_{i,\tau}|\hat n_{i,\tau}] = \hat n_{i,\tau}\phi_{i,\hat\theta_i}$.
		\end{enumerate}
		\label{lem:negative-binomial}
\end{lemma}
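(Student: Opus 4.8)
The plan is to exploit the fact that within a single epoch $\mathcal E_\tau$ the offered combination $\hat{\vct\theta}=(\hat\theta_1,\dots,\hat\theta_M)$ is held fixed, so the per-iteration outcomes $i_t\in[M]\cup\{0\}$ form a sequence of i.i.d.\ draws from the categorical distribution in Eq.~\eqref{eq:simpler-model}, and the epoch is exactly this sequence stopped at the first no-purchase. Writing $p_0=\Pr[i_t=0\,|\,\hat{\vct\theta}]=1/(1+\sum_{i'}u_{i',\hat\theta_{i'}})$ and $p_i=\Pr[i_t=i\,|\,\hat{\vct\theta}]=u_{i,\hat\theta_i}/(1+\sum_{i'}u_{i',\hat\theta_{i'}})$, the crucial observation is that $p_i/p_0=u_{i,\hat\theta_i}$: the shared normalization cancels, so this ratio depends only on $\hat\theta_i$ and not on the remaining coordinates $\hat\theta_{i'}$, $i'\neq i$. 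This cancellation is precisely what will yield the ``regardless of the other offered assortments'' part of the claim.

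For the first assertion I would compute $\mathbb E[\hat n_{i,\tau}]$ by a one-step recursion that uses the memoryless (renewal) structure of the epoch. With probability $p_0$ the first iteration is a no-purchase and the epoch ends with $\hat n_{i,\tau}=0$; with probability $p_i$ it is a purchase in nest $i$, contributing $1$ and restarting an independent copy of the same epoch; with probability $1-p_0-p_i$ it is a purchase in some other nest, contributing $0$ and again restarting. Hence $\mathbb E[\hat n_{i,\tau}]=p_i\bigl(1+\mathbb E[\hat n_{i,\tau}]\bigr)+(1-p_0-p_i)\,\mathbb E[\hat n_{i,\tau}]$, which rearranges to $p_0\,\mathbb E[\hat n_{i,\tau}]=p_i$, i.e.\ $\mathbb E[\hat n_{i,\tau}]=p_i/p_0=u_{i,\hat\theta_i}$. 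Equivalently, one may note that the epoch length $|\mathcal E_\tau|$ is geometric with mean $1/p_0$, so it carries $|\mathcal E_\tau|-1$ purchase events in expectation $(1-p_0)/p_0$, each landing in nest $i$ with conditional probability $p_i/(1-p_0)$; multiplying gives the same $u_{i,\hat\theta_i}$.

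For the second assertion I would condition on the entire realized outcome sequence $(i_{t'})_{t'\in\mathcal E_\tau}$, which fixes both $\hat n_{i,\tau}$ and the precise iterations $t_1,\dots,t_{\hat n_{i,\tau}}$ at which nest $i$ is purchased. By Eq.~\eqref{eq:simpler-model} each such revenue satisfies $\mathbb E[r_{t_k}\,|\,i_{t_k}=i]=\phi_{i,\hat\theta_i}$, and revenues at distinct iterations are conditionally independent given the outcome sequence, so summing yields $\mathbb E[\hat r_{i,\tau}\,|\,(i_{t'})_{t'}]=\hat n_{i,\tau}\,\phi_{i,\hat\theta_i}$. Since the right-hand side is a function of $\hat n_{i,\tau}$ alone, the tower property over the coarser $\sigma$-field generated by $\hat n_{i,\tau}$ gives $\mathbb E[\hat r_{i,\tau}\,|\,\hat n_{i,\tau}]=\hat n_{i,\tau}\,\phi_{i,\hat\theta_i}$, as claimed.

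The part demanding the most care is not the arithmetic but the decoupling argument: making rigorous that within a fixed-$\hat{\vct\theta}$ epoch the iterations are genuinely i.i.d.\ and memoryless (so the renewal recursion is legitimate and the restarted copies are independent), and that the normalization constant $1+\sum_{i'}u_{i',\hat\theta_{i'}}$---which does couple all nests---cancels in the ratio $p_i/p_0$. This cancellation is the whole point, since it is what allows the policy to produce an \emph{unbiased} estimate of each nest-level utility $u_{i,\hat\theta_i}$ without ever needing to know or estimate the singletons offered in the other nests.
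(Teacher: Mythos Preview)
Your proposal is correct and follows essentially the same approach as the paper: both exploit the i.i.d.\ structure of iterations within a fixed-$\hat{\vct\theta}$ epoch, stopped at the first no-purchase, and the key cancellation $p_i/p_0=u_{i,\hat\theta_i}$. The only cosmetic difference is that the paper records the full geometric law $\Pr[\hat n_{i,\tau}=k]=\bigl(\tfrac{u_{i,\hat\theta_i}}{1+u_{i,\hat\theta_i}}\bigr)^k\tfrac{1}{1+u_{i,\hat\theta_i}}$ (citing \cite{Agrawal16MNLBandit}) and reads off the mean, whereas you compute the expectation directly via the one-step renewal recursion; your argument for the second item is more explicit than the paper's one-line sketch.
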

	
	
	The above properties motivate intuitive parameter estimators $\hat u_{i,\theta},\hat \phi_{i,\theta}$ of $u_{i,\theta}$ and $\phi_{i,\theta}$ for $\theta=\hat\theta_i$,
	which are taken to be the sample averages of $\hat n_{i,\tau}$ and $\hat r_{i,\tau}$ over all prior epochs $\mathcal E_\tau$ in which the assortment corresponding to level set $\mathcal L_i(\hat\theta_i)$ in nest $i$ is offered. {It is worth noting that in those epochs, the offered assortments in nests other than the $i$-th nest (i.e., the nests $i'$ for $i' \neq i$) can be arbitrary since the distributions of $\hat n_{i,\tau}$ and $\hat r_{i,\tau}$ are \emph{independent} of $\hat\theta_{i'}$ for $i'\neq i$.
    }
	{This key independence property enables us to combine purchasing information of vastly different assortment combinations (provided that $\hat\theta_i$ remains the same), which forms an important \emph{aggregation strategy} that avoids exponentially large regret if assortment combinations are treated independently.}
	
	
	
	\subsection{Efficient computation of $\hat{\vct\theta}$}\label{subsec:optimization}
	
	Our policy in Algorithm \ref{alg:ucb} involves a combinatorial optimization problem over all $\vct\theta\in\mathcal K_1\times\cdots\times\mathcal K_M$ (see Step \ref{line:alg-nested-improved-max} in Algorithm \ref{alg:ucb}).
	A brute-force algorithm that enumerates all such $\vct\theta$ takes $O(K^M)$ time and is computationally intractable even for small $M$ values,
	{where $K=\max_i|\{r_{ij}: j\in[N_i]\}|\leq N+1$.}
	In this section we introduce a computationally efficient procedure to compute $\hat{\vct\theta}$ by using a binary search technique. The idea behind our procedure is similar to the one introduced in \citep{Rusmevichientong2010} for dynamic assortment optimization in MNL models,
	which can also be traced to the fractional programming work as early as \citep{megiddo1978combinatorial}.
	
	For any $\lambda\in[0,1]$ and $\vct\theta=(\theta_1,\cdots,\theta_M)\in\mathcal K_1\times\cdots\times\mathcal K_M$ define potential function
	\begin{equation}
	\psi_\lambda(\vct\theta) := \sum_{i=1}^M(\bar\phi_{i,\theta_i}-\lambda)\bar u_{i,\theta_i}.
	\label{eq:potential}
	\end{equation}
	
Recall the definition of $\bar R'(\vct\theta) = \frac{\sum_{i=1}^M\bar\phi_{i,\theta_i}\bar u_{i,\theta_i}}{1+\sum_{i=1}^M\bar u_{i,\theta_i}}$ in	Step  \ref{line:alg-nested-improved-max} of Algorithm \ref{alg:ucb}. The following lemma characterizes the properties of $\psi_\lambda(\vct\theta) $ and its relationship with $\bar R^*=\max_{\vct\theta\in\mathcal K_1\times\cdots\times\mathcal K_M}\bar R'(\vct\theta)$:
	\begin{lemma}
	The following holds for all $\lambda\in[0,1]$:
	\begin{enumerate}
	\item If $\bar R^*\geq\lambda$, then there exists a $\vct\theta\in\mathcal K_1\times\cdots\times\mathcal K_M$ such that $\psi_\lambda(\vct\theta)\geq\lambda$;
	furthermore if $\bar R^*>\lambda$, then the inequality is strict;
	\item If $\bar R^*\leq\lambda$, then for all $\vct\theta\in\mathcal K_1\times\cdots\times\mathcal K_M$, $\psi_\lambda(\vct\theta)\leq\lambda$;
	furthermore if $\bar R^*<\lambda$, then the inequalities are strict.
	\end{enumerate}
	\label{lem:binary-search}
	\end{lemma}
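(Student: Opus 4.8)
The plan is to reduce the fractional comparison ``$\bar R'(\vct\theta)$ versus $\lambda$'' to a comparison of the affine quantity $\psi_\lambda(\vct\theta)$ with $\lambda$, via a Dinkelbach-style cross-multiplication. The crucial enabling fact is that the denominator of $\bar R'$ is always strictly positive, so multiplying through by it preserves inequality directions.

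First I would observe that for every $\vct\theta\in\mathcal K_1\times\cdots\times\mathcal K_M$ the denominator $D(\vct\theta):=1+\sum_{i=1}^M\bar u_{i,\theta_i}$ satisfies $D(\vct\theta)\geq 1>0$, since each $\bar u_{i,\theta_i}\geq 0$. Multiplying the candidate inequality $\bar R'(\vct\theta)\geq\lambda$ through by the positive quantity $D(\vct\theta)$ therefore preserves its direction, and after rearranging terms I obtain the pointwise equivalence
\[
\bar R'(\vct\theta)\geq\lambda
\iff
\sum_{i=1}^M\bar\phi_{i,\theta_i}\bar u_{i,\theta_i}\geq \lambda\, D(\vct\theta)
\iff
\sum_{i=1}^M(\bar\phi_{i,\theta_i}-\lambda)\bar u_{i,\theta_i}\geq\lambda
\iff
\psi_\lambda(\vct\theta)\geq\lambda.
\]
Running the identical chain with ``$\geq$'' replaced by ``$>$'' yields the strict version $\bar R'(\vct\theta)>\lambda\iff\psi_\lambda(\vct\theta)>\lambda$. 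Both equivalences hold separately for each fixed $\vct\theta$.

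With these equivalences in hand, both items follow immediately by instantiating $\vct\theta$ appropriately. For item~1, if $\bar R^*\geq\lambda$ I take $\vct\theta^*$ to be a maximizer of $\bar R'$, so that $\bar R'(\vct\theta^*)=\bar R^*\geq\lambda$, and the non-strict equivalence gives $\psi_\lambda(\vct\theta^*)\geq\lambda$; if moreover $\bar R^*>\lambda$, the strict equivalence applied to the same $\vct\theta^*$ gives $\psi_\lambda(\vct\theta^*)>\lambda$. For item~2, if $\bar R^*\leq\lambda$ then every $\vct\theta$ satisfies $\bar R'(\vct\theta)\leq\bar R^*\leq\lambda$, whence $\psi_\lambda(\vct\theta)\leq\lambda$ by the equivalence; and if $\bar R^*<\lambda$ then $\bar R'(\vct\theta)\leq\bar R^*<\lambda$ for all $\vct\theta$, so $\psi_\lambda(\vct\theta)<\lambda$ for all $\vct\theta$ by the strict equivalence.

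I do not expect a genuine obstacle here: the lemma is essentially a restatement of the fact that optimizing a linear-fractional objective with a positive denominator can be linearized by subtracting $\lambda$ times the denominator. The only points demanding care are verifying that $D(\vct\theta)$ is strictly positive (so that cross-multiplication does not flip the inequality), and keeping the strict and non-strict cases bookkept separately so that each of the four implications in the lemma statement is matched to the correct version of the equivalence.
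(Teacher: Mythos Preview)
Your proposal is correct and follows essentially the same approach as the paper: both arguments rest on the Dinkelbach-style cross-multiplication identity linking $\bar R'(\vct\theta)\lessgtr\lambda$ to $\psi_\lambda(\vct\theta)\lessgtr\lambda$, made possible by the strictly positive denominator. Your presentation is slightly cleaner in that you state the pointwise equivalence once and instantiate it, whereas the paper treats item~1 via a monotonicity-in-$\lambda$ chain and item~2 by contradiction, but the substance is identical.
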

	
	Based on Lemma \ref{lem:binary-search}, an efficient optimization algorithm computing the maximizer $\hat{\vct\theta}^{(\tau)}$ can be designed
	by a \emph{binary search} over $\lambda\in[0,1]$. In particular,
	for each fixed value of $\lambda$, the $\vct\theta^*(\lambda)=(\theta_1^*(\lambda),\cdots,\theta_M^*(\lambda))\in\mathcal K_1\times\cdots\times\mathcal K_M$ that maximizes $\psi_\lambda(\vct\theta)$
	can be found by setting $\theta_i^*(\lambda) \in \arg\max_{\theta_i\in\mathcal K_i}(\bar\phi_{i,\theta_i}-\lambda)\bar u_{i,\theta_i}$.
	If $\psi_\lambda(\vct\theta^*(\lambda)) > \lambda$, then $\bar R^* >\lambda$, because otherwise it violates the second property in Lemma \ref{lem:binary-search}.
	Similarly, if $\psi_\lambda(\vct\theta^*(\lambda))\leq \lambda$, then $\bar R^*\leq\lambda$, because otherwise it violates the second part of the first property in Lemma \ref{lem:binary-search} (note that since $\vct\theta^*(\lambda)$ is the maximizer of $\psi_\lambda(\vct\theta)$, $\psi_\lambda(\vct\theta^*(\lambda))\leq \lambda$ implies that $\psi_\lambda(\vct\theta) \leq \lambda$ for all $\vct\theta$).
	Thus, whether $\bar R^*>\lambda$ or $\bar R^*\leq\lambda$ can be determined by solely comparing $\psi_\lambda(\vct\theta^*(\lambda))$ with $\lambda$.
		
	We remark that each evaluation of $\psi_\lambda(\vct\theta^*(\lambda))$ takes $O(MK)$ time, and the entire binary search procedure
	takes time $O(MK\log(1/\epsilon))$ to approximate $\bar R^*$ up to arbitrarily small error $\epsilon$.
	This is much faster than the brute force algorithm that takes $O(K^M)$ time.
	
	{
	We also remark that, similar to other bisection type algorithms, the computation procedure outlined above computes \emph{approximate} solutions only,
	with $O(\log(1/\epsilon))$ iterations required if an error level of $\epsilon>0$ is desired.
	We suggest setting the accuracy level $\epsilon$ to
	$\epsilon=1/T$, which would inflate an additional $O(1)$ term in the cumulative regret upper bound,
	while the running time of the binary search routine is strictly polynomial in $T$.
	When the time horizon $T$ is unknown before hand,
	a doubling trick can be used to consider epochs of lengths $1,2,4,\cdots,2^\tau,\cdots,$
	and within epoch $\tau$ (of length $2^\tau$) an error level of $\epsilon_\tau = 2^{-\tau}$ can be used.
	}
	
	\subsection{Regret analysis}
	
	Below is our main regret theorem for Algorithm \ref{alg:ucb}. 
	\begin{theorem}
		{For each nest $i$ let $\mathcal K_i = \{r_{ij}: j\in[N_i]\}$.}
		The assortment sequence $\{\vct\theta^{(t)}\}_{t=1}^T$ produced by Algorithm \ref{alg:ucb} has the regret
		upper bounded as
		\begin{equation}
		\mathrm{Regret}(\{\vct\theta^{(t)}\}_{t=1}^T) \lesssim \sqrt{MKT\log(MKT)} + MKU\log^2(MKT) + O(1),
		\end{equation}
		where $K=\max_i|\mathcal K_i|$ and $U=\max_{i\in[M]}\max_{\theta\in\mathcal K_i}u_{i,\theta}$.
		\label{thm:ucb}
	\end{theorem}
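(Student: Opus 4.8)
The plan is to follow the standard optimism-based (UCB) regret template, specialized to the epoch structure and to the exact revenue functional $R'$. First I would fix a high-probability \emph{clean event} on which all upper confidence bands are simultaneously valid and tight; conditioned on it I would establish optimism, decompose the regret epoch by epoch, bound each epoch's contribution by a sensitivity estimate of $\bar R'-R'$ in terms of the confidence widths, and finally sum these widths over all epochs and nests by Cauchy--Schwarz against a global ``purchase budget''.

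\emph{Clean event and optimism.} By Lemma~\ref{lem:negative-binomial} the per-epoch statistics $\hat n_{i,\tau},\hat r_{i,\tau}$ are conditionally unbiased for $u_{i\hat\theta_i}$ and $u_{i\hat\theta_i}\phi_{i\hat\theta_i}$, and $\hat n_{i,\tau}$ is conditionally geometric with variance $u(1+u)$; a Bernstein/Freedman-type bound (the role of Lemma~\ref{lem:new-nested-concentration}) then shows that, outside an event of probability $O(1/(MKT))$, for every $(i,\theta)$ and every epoch one has $u_{i\theta}\le\bar u_{i\theta}$, $\phi_{i\theta}\le\bar\phi_{i\theta}$ together with the matching widths $\bar u_{i\theta}-u_{i\theta}\lesssim\sqrt{\max(u_{i\theta},u_{i\theta}^2)\log/T(i,\theta)}+\log/T(i,\theta)$ and $\bar\phi_{i\theta}-\phi_{i\theta}\lesssim\sqrt{\log/(T(i,\theta)u_{i\theta})}$. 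On the complementary event the regret is at most $T$ and contributes only $O(1)$. For optimism I would use the second clause of Lemma~\ref{lem:nested-popular}: since every item of $S_i^*=\mathcal L_i(\theta_i^*)$ has revenue at least $\theta_i^*$, the nest-average obeys $\phi_{i\theta_i^*}=R_i(S_i^*)\ge\theta_i^*\ge\gamma_iR^*+(1-\gamma_i)R_i(S_i^*)$, whence $\phi_{i\theta_i^*}\ge R^*$ for all $i$. Writing $R^*=\sum_i(\phi_{i\theta_i^*}-R^*)u_{i\theta_i^*}$ and replacing each $u$ by the larger $\bar u$ (legitimate because every coefficient $\phi_{i\theta_i^*}-R^*$ is nonnegative) gives $\bar R'(\vct\theta^*)\ge R^*$, so the maximizer satisfies $\bar R^*=\bar R'(\hat{\vct\theta})\ge R^*$.

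\emph{Decomposition and the sensitivity identity.} Because the same $\hat{\vct\theta}^{(\tau)}$ is offered throughout epoch $\tau$ and the epoch ends at the first no-purchase, the conditional expected epoch length is $1+V(\hat{\vct\theta}^{(\tau)})$ with $V=\sum_iu_{i\hat\theta_i}$; a tower/optional-stopping argument then gives $\mathbb E[\mathrm{Regret}]=\mathbb E\sum_\tau(1+V^{(\tau)})\bigl(R^*-R'(\hat{\vct\theta}^{(\tau)})\bigr)$, and optimism upgrades $R^*-R'$ to $\bar R^*-R'$. The algebraic heart is the exact identity, obtained from the two equality-case (Lemma~\ref{lem:binary-search}) relations $\bar R^*=\sum_i(\bar\phi_i-\bar R^*)\bar u_i$ and $R'=\sum_i(\phi_i-R')u_i$, namely
\[
(1+V)\bigl(\bar R^*-R'\bigr)=\sum_{i}(\bar\phi_{i}-\phi_{i})\bar u_{i}+\sum_{i}(\phi_{i}-\bar R^*)(\bar u_{i}-u_{i}),
\]
evaluated at $\hat{\vct\theta}$. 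The first sum is the ``revenue-estimation'' part and the second the ``utility-estimation'' part.

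\emph{Summation and the main obstacle.} The revenue part is routine: $(\bar\phi_i-\phi_i)\bar u_i\lesssim\sqrt{u_i\log/T(i,\theta)}$, and summing over epochs (reorganizing by $(i,\theta)$ so that $\sum_{k\le T(i,\theta)}k^{-1/2}\asymp\sqrt{T(i,\theta)}$) followed by Cauchy--Schwarz against the budget $\sum_{i,\theta}u_{i\theta}T(i,\theta)\le T$ (total purchases cannot exceed total time) yields the leading $\sqrt{MKT\log(MKT)}$ term. I expect the \emph{utility part} to be the crux: the crude bound $\phi_i-\bar R^*\le 1$ together with $\bar u_i-u_i\lesssim u_i\sqrt{\log/T(i,\theta)}$ would produce a spurious $\sqrt{MKUT}$ factor, which is fatal once $T\gtrsim MN^2$. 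The key is to keep the signed coefficient $\phi_i-\bar R^*$, bound it by $(\bar\phi_i-\bar R^*)_+$, and rewrite $(\bar\phi_i-\bar R^*)_+(\bar u_i-u_i)=(\bar\phi_i-\bar R^*)_+\bar u_i\cdot\frac{\bar u_i-u_i}{\bar u_i}$, so that the \emph{relative} width $\frac{\bar u_i-u_i}{\bar u_i}\lesssim\sqrt{\log/T(i,\theta)}$ and the global constraint $\sum_i(\bar\phi_i-\bar R^*)\bar u_i=\bar R^*\le 1$ together absorb the large magnitude of $u_i$ and keep this part at the same $\sqrt{MKT}$ order. Finally, the initial rounds with $T(i,\theta)<96\ln(2MTK)$, where the bands are frozen at $\bar u=U,\bar\phi=1$, together with the linear $\log/T(i,\theta)$ tails, contribute the lower-order $MKU\log^2(MKT)$ term. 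Making the $(\phi_i-\bar R^*)$ cancellation rigorous uniformly across all regimes of $u_{i\theta}$ is, I believe, the main technical obstacle.
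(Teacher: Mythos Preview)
Your high-level template (clean event, optimism, epoch decomposition, sensitivity bound, Cauchy--Schwarz summation against the global purchase budget) matches the paper's proof, and your exact identity $(1+V)(\bar R^*-R')=\sum_i(\bar\phi_i-\phi_i)\bar u_i+\sum_i(\phi_i-\bar R^*)(\bar u_i-u_i)$ is correct. The genuine difference is in how the utility-estimation term is tamed. You propose to keep the signed coefficient $(\phi_i-\bar R^*)\le(\bar\phi_i-\bar R^*)_+$, factor as $(\bar\phi_i-\bar R^*)\bar u_i\cdot(\bar u_i-u_i)/\bar u_i$, and exploit the per-epoch constraint $\sum_i(\bar\phi_i-\bar R^*)\bar u_i=\bar R^*\le 1$ plus a relative-width bound. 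This can in fact be pushed through by the case split you hint at (for $u_i\ge 1$ the relative width is $\lesssim\sqrt{\log/T(i,\theta)}$ and each $(\bar\phi_i-\bar R^*)\bar u_i\le 1$; for $u_i<1$ just use $\bar\phi_i-\bar R^*\le 1$ directly), but the paper sidesteps the obstacle you flag more simply. Its aggregation lemma (Lemma~\ref{lem:aggregation}) instead \emph{drops} the coefficient via $\bar\phi_i\le 1$ and uses the purely algebraic observation
\[
\sum_{i}\Bigl(\bar u_i\,\frac{1+\sum_{i'}u_{i'}}{1+\sum_{i'}\bar u_{i'}}-u_i\Bigr)=\frac{\sum_i(\bar u_i-u_i)}{1+\sum_{i'}\bar u_{i'}}\le\sum_i\frac{\bar u_i-u_i}{1+u_i},
\]
since $1+\sum_{i'}\bar u_{i'}\ge 1+\bar u_i\ge 1+u_i$. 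The denominator $1+u_i$ handles both regimes uniformly: when $u_i\ge 1$ it cancels the $u_i\sqrt{\log/T}$ width, and when $u_i<1$ it is harmless, so either way the per-nest contribution is $\lesssim\sqrt{u_i\log/T(i,\theta)}+\log/T(i,\theta)$ and the summation closes exactly as in your revenue-estimation part. This is shorter and avoids having to track the epoch-dependent $\bar R^{*(\tau)}$ inside the utility term.

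One minor point on optimism: your argument invokes the second clause of Lemma~\ref{lem:nested-popular}, which is specific to level-set singletons in the original nested logit model and needs $\gamma_i>0$ to conclude $R_i(S_i^*)\ge R^*$. Theorem~\ref{thm:ucb} is stated for the abstract nested singleton model, and the paper proves optimism (Corollary~\ref{cor:Rub}) purely from maximality: if $\phi_{i,\theta_i^*}<R'(\vct\theta^*)$ for some $i$ with $u_{i,\theta_i^*}>0$, replacing $\theta_i^*$ by $\infty$ would strictly increase $R'$. That abstract argument is what you actually need and is slightly cleaner than routing through Lemma~\ref{lem:nested-popular}.
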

	
	\begin{corollary}
	With $K=|\mathcal K_i|=N+1$ (for any $i\in [M]$) and $U\leq NC_V$, the regret upper bound in Theorem \ref{thm:ucb}
can be simplified to
	\begin{eqnarray}	
	\mathrm{Regret}(\{\mat S^{(t)}\}_{t=1}^T) & \lesssim &\sqrt{MNT\log(MNT)} + MN^2C_V\log^2(MNT) + O(1)
     \nonumber\\
    &= & \tilde{O}(\sqrt{MNT}+MN^2) \label{eq:regret-preliminary}
	\end{eqnarray}
	\label{cor:regret-preliminary}
	\end{corollary}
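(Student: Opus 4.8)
The plan is to derive the corollary directly from Theorem~\ref{thm:ucb} together with the reduction in Proposition~\ref{prop:reduction}; no fresh regret analysis is needed, and the work is entirely in substituting the problem-specific parameter values and transferring the bound back to the original nested model.

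First I would instantiate Theorem~\ref{thm:ucb} on the singleton model arising from a given nested logit instance. By the construction in Section~\ref{subsec:singleton}, each nest $i\in[M]$ carries exactly $N+1$ singletons (the $N$ nonempty level sets $\mathcal L_i(\theta)$ for $\theta\in\{r_{ij}:j\in[N]\}$, plus the empty assortment labeled $\theta=\infty$), so $K=|\mathcal K_i|=N+1$. Next I would bound the utility parameter $U=\max_{i\in[M]}\max_{\theta\in\mathcal K_i}u_{i\theta}$. Since $u_{i\theta}=V_i(S_i(\theta))^{\gamma_i}$, assumptions (A1)--(A2) give $u_{i\theta}\in[0,(NC_V)^{\gamma_i}]\subseteq[0,NC_V]$, using $\gamma_i\in[0,1]$ together with $C_V\geq 1$ so that $(NC_V)^{\gamma_i}\leq NC_V$; hence $U\leq NC_V$.

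Plugging $K=N+1=\Theta(N)$ and $U\leq NC_V$ into Theorem~\ref{thm:ucb} yields
\begin{equation*}
\mathrm{Regret}(\{\vct\theta^{(t)}\}_{t=1}^T)\lesssim \sqrt{M(N+1)T\log(M(N+1)T)}+M(N+1)(NC_V)\log^2(M(N+1)T)+O(1).
\end{equation*}
Because $N+1\asymp N$, the factors $\log(M(N+1)T)$ and $\log(MNT)$ agree up to constants, so the right-hand side is $\sqrt{MNT\log(MNT)}+MN^2C_V\log^2(MNT)+O(1)$. Finally I would invoke Proposition~\ref{prop:reduction}: the meta-policy $\pi$ of Algorithm~\ref{alg:meta} converts the singleton-model policy $\pi'$ (Algorithm~\ref{alg:ucb}) into a policy for the original nested model whose regret, as defined in Eq.~\eqref{eq:nested-regret}, is at most the singleton-model regret $\Delta$ just bounded. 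This transfers the estimate verbatim to $\mathrm{Regret}(\{\mat S^{(t)}\}_{t=1}^T)$, and absorbing the constant $C_V$ and the logarithmic factors into $\tilde{O}(\cdot)$ gives $\tilde{O}(\sqrt{MNT}+MN^2)$.

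There is essentially no hard step here, as every ingredient is already established; the only point demanding care is the bound $U\leq NC_V$, which genuinely needs the full range $\gamma_i\in[0,1]$ and $C_V\geq 1$ rather than the special case $\gamma_i=1$. I would also verify that the hypotheses of Proposition~\ref{prop:reduction}---namely $|\mathcal K_i|=K=N+1$ and $u_{i\theta}\in[0,U]$ with $U\leq NC_V$---match exactly the parameters fed to Theorem~\ref{thm:ucb}, so that a single $\Delta$ bounds both regrets and the conversion is lossless.
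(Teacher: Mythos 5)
Your proposal is correct and follows exactly the route the paper intends: the corollary is an immediate consequence of Theorem~\ref{thm:ucb} with $K=N+1$ and $U\leq NC_V$ (the latter justified, as you note, by (A1)--(A2), $\gamma_i\in[0,1]$, and $C_V\geq 1$), transferred to the original nested model via Proposition~\ref{prop:reduction}. The paper gives no separate proof for this corollary precisely because this substitution-and-transfer argument is all that is needed, and you have carried it out with the right care about matching the hypotheses of the proposition to the parameters fed into the theorem.
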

	
	We make several remarks on the regret upper bound in Corollary \ref{cor:regret-preliminary}. {In online and bandit learning literature, the time horizon $T$ is usually considered to be the dominating term asymptotically.}
	Therefore, when $T > M$ and the number of items per nest $N$ is small as compared to $T$, the dominating term in Eq.~(\ref{eq:regret-preliminary}) is $\tilde O(\sqrt{MNT})$.
	This matches the lower bound result $\Omega(\sqrt{MT})$ in Theorem \ref{thm:lower-bound} within a factor of $\sqrt{N}$.
	We give further discussion on this gap of $O(\sqrt{N})$ in Sec.~\ref{sec:lower}. We will also show later in Sec.~\ref{sec:discretization} how to deal with a large $N$ case by considering a ``discretization'' heuristic.
	%
	
	
	In the rest of the section we sketch key steps and lemmas toward the proof of Theorem \ref{thm:ucb}. The detailed proofs of these lemmas are provided in the supplementary material.
	First, the following lemma shows that the estimates
	$\hat \phi_{i,\theta},\hat u_{i,\theta}$ concentrate around the true values $\phi_{i,\theta},u_{i,\theta}$.
	\begin{lemma}
		Suppose $T(i,\theta)\geq 96\ln(2MTK)$.
		With probability $1-T^{-1}$ uniformly over all $i\in[M]$, $\theta\in\mathcal K_i$ and $t\in[T]$
		\begin{align}
		\big|\hat u_{i,\theta}-u_{i,\theta}\big| &\leq\min\left\{U, 3\sqrt{\frac{48\max(\hat u_{i,\theta},\hat u_{i,\theta}^2)\ln(2MTK)}{T(i,\theta)}} +\frac{144\ln(2MTK)}{T(i,\theta)} \right\};\label{eq:u_ci}\\
		\big|\hat\phi_{i,\theta}-\phi_{i,\theta}\big|&\leq\min\left\{1, \sqrt{\frac{\ln(2MTK)}{T(i,\theta)\hat u_{i,\theta}}}\right\}.\label{eq:phi_ci}
		\end{align}
		\label{lem:new-nested-concentration}
		In addition, if $u_{i,\theta}\geq 1$ then $\hat u_{i,\theta}\in[0.5u_{i,\theta},2u_{i,\theta}]$.
	\end{lemma}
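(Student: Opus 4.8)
The plan is to establish both deviation bounds by identifying the exact conditional laws of the per-epoch statistics $\hat n_{i,\tau}$ and $\hat r_{i,\tau}$ and then applying Bernstein- and Hoeffding-type concentration, finishing with a union bound that fixes the logarithmic scale. Fix a nest $i\in[M]$ and a singleton $\theta\in\mathcal K_i$. The first step is to pin down the law of $\hat n_{i,\tau}$ for epochs $\tau\in\mathcal T(i,\theta)$. Within any epoch the outcomes form an i.i.d.\ sequence over $\{0,1,\cdots,M\}$ governed by Eq.~(\ref{eq:simpler-model}), and the epoch terminates at the first no-purchase. Restricting attention to iterations whose outcome is either ``nest $i$'' or ``no-purchase'', each such iteration is ``nest $i$'' with probability $u_{i\theta}/(1+u_{i\theta})$, \emph{independent} of the offerings $\theta_{i'}$ in the other nests. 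Hence $\hat n_{i,\tau}$ is geometric with mean $u_{i\theta}$ and variance $u_{i\theta}(1+u_{i\theta})$, and, crucially for the aggregation strategy, the family $\{\hat n_{i,\tau}\}_{\tau\in\mathcal T(i,\theta)}$ is i.i.d.\ regardless of what was offered in the other nests during those epochs; this is exactly the independence property recorded after Lemma \ref{lem:negative-binomial}.

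Next I would derive the bound (\ref{eq:u_ci}). Since $\hat u_{i\theta}$ is the empirical mean of $T(i,\theta)$ i.i.d.\ geometric variables, which are sub-exponential, a Bernstein inequality gives, with high probability,
\[
\big|\hat u_{i\theta}-u_{i\theta}\big| \lesssim \sqrt{\frac{u_{i\theta}(1+u_{i\theta})\ln(2MTK)}{T(i,\theta)}} + \frac{\ln(2MTK)}{T(i,\theta)},
\]
using that the variance $u_{i\theta}(1+u_{i\theta})\asymp\max(u_{i\theta},u_{i\theta}^2)$. The stated bound is expressed through the \emph{empirical} quantity $\max(\hat u_{i\theta},\hat u_{i\theta}^2)$, so the next step is self-normalization: under the hypothesis $T(i,\theta)\geq 96\ln(2MTK)$ one shows that $\hat u_{i\theta}$ and $u_{i\theta}$ agree up to a constant multiplicative factor, which permits replacing $u_{i\theta}(1+u_{i\theta})$ by $\max(\hat u_{i\theta},\hat u_{i\theta}^2)$ at the cost of slack absorbed into the linear $\ln(2MTK)/T(i,\theta)$ term. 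The explicit constants $96$ and $144$ are precisely what make this substitution go through.

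For the revenue bound (\ref{eq:phi_ci}) I would condition on the realized counts $\{\hat n_{i,\tau}\}_{\tau\in\mathcal T(i,\theta)}$. By the second part of Lemma \ref{lem:negative-binomial} and assumption (A1), conditioned on these counts the revenues collected in nest $i$ decompose into $n_{\mathrm{tot}}:=\sum_{\tau'\in\mathcal T(i,\theta)}\hat n_{i,\tau'}$ independent $[0,1]$-valued random variables with common mean $\phi_{i\theta}$, and $\hat\phi_{i\theta}$ is exactly their empirical average. Hoeffding's inequality then yields $|\hat\phi_{i\theta}-\phi_{i\theta}|\lesssim\sqrt{\ln(2MTK)/n_{\mathrm{tot}}}$; since by definition $n_{\mathrm{tot}}=T(i,\theta)\hat u_{i\theta}$ identically, this is precisely the right-hand side of (\ref{eq:phi_ci}).

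Finally I would make the guarantee uniform by a union bound over the $MK$ pairs $(i,\theta)$ and over the possible values of the count $T(i,\theta)\in\{1,\cdots,T\}$, assigning each event failure probability $O(1/(MKT^2))$ so that the total is at most $T^{-1}$; this is what sets the $\ln(2MTK)$ scale throughout. The hard part will be handling $T(i,\theta)$ as a \emph{random stopping time} generated by the algorithm's own adaptive choices: the epochs collected in $\mathcal T(i,\theta)$ are selected online, so the i.i.d.\ concentration argument must be applied to a fixed reference sequence and then transferred to the adaptively chosen index set—either via a union bound over all possible values of $T(i,\theta)$, leveraging that the $\hat n_{i,\tau}$ are i.i.d.\ irrespective of the other nests' offerings, or via a martingale/optional-stopping argument. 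A secondary technical point is verifying the Bernstein moment condition for the geometric $\hat n_{i,\tau}$ and carrying the explicit constants cleanly through the self-normalization step.
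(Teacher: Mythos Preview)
Your proposal is correct and follows essentially the same approach as the paper: identify the geometric law of $\hat n_{i,\tau}$, apply concentration and self-normalize to pass from $u_{i\theta}$ to $\hat u_{i\theta}$, then use Hoeffding conditional on the counts for the revenue bound, and finish with a union bound over $(i,\theta,t)$. The only minor difference is that the paper invokes a specialized concentration lemma for i.i.d.\ geometric variables (from \cite{Agrawal16MNLBandit}) with a case split on $u_{i\theta}\le 1$ versus $u_{i\theta}>1$, rather than a generic sub-exponential Bernstein inequality; this makes the constants $96$ and $144$ easy to track but is otherwise the same strategy.
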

	
	The following corollary is an immediate consequence of Lemma \ref{lem:new-nested-concentration}:
	\begin{corollary}
	{Suppose $T(i,\theta)\geq 96\ln(2MTK)$.}
		With probability $1-T^{-1}$, $\bar u_{i,\theta}\geq u_{i,\theta}$ and $\bar\phi_{i,\theta}\geq \phi_{i,\theta}$ for all $i\in[M]$, $\theta\in\mathcal K_1\times\cdots\times\mathcal K_M$.
		\label{cor:new-nested-concentration}
	\end{corollary}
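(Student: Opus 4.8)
The plan is to derive the corollary directly from Lemma~\ref{lem:new-nested-concentration} by conditioning on the single $(1-T^{-1})$-probability event $\mathcal A$ on which both concentration inequalities \eqref{eq:u_ci} and \eqref{eq:phi_ci} hold uniformly over all $i\in[M]$ and $\theta\in\mathcal K_i$. The only facts I would use, besides $\mathcal A$, are the trivial a priori ceilings $u_{i\theta}\le U$ (immediate from the definition $U=\max_{i\in[M]}\max_{\theta\in\mathcal K_i}u_{i\theta}$) and $\phi_{i\theta}\le 1$ (from assumption (A1)). The whole argument is deterministic once we are on $\mathcal A$, and no further union bound is needed because Lemma~\ref{lem:new-nested-concentration} already asserts the uniform event.

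First I would split on how often the singleton $(i,\theta)$ has been offered. If $T(i,\theta)<96\ln(2MTK)$, Algorithm~\ref{alg:ucb} hard-codes $\bar u_{i\theta}=U$ and $\bar\phi_{i\theta}=1$, so the dominance inequalities $\bar u_{i\theta}\ge u_{i\theta}$ and $\bar\phi_{i\theta}\ge \phi_{i\theta}$ hold unconditionally. The empty-assortment singletons, for which $\bar u_{i\theta}=u_{i\theta}=0$ and $\bar\phi_{i\theta}=\phi_{i\theta}=0$ are fixed at initialization, are likewise handled with equality and require no probabilistic input.

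In the remaining well-sampled case $T(i,\theta)\ge 96\ln(2MTK)$, I would observe that the confidence radii added in the algorithm are exactly the right-hand sides of \eqref{eq:u_ci} and \eqref{eq:phi_ci}; writing these radii as $B^u_{i\theta}$ and $B^\phi_{i\theta}$, the bands read $\bar u_{i\theta}=\min\{U,\hat u_{i\theta}+B^u_{i\theta}\}$ and $\bar\phi_{i\theta}=\min\{1,\hat\phi_{i\theta}+B^\phi_{i\theta}\}$. On $\mathcal A$ the lemma yields the \emph{one-sided} bound $u_{i\theta}-\hat u_{i\theta}\le|\hat u_{i\theta}-u_{i\theta}|\le B^u_{i\theta}$, hence $u_{i\theta}\le\hat u_{i\theta}+B^u_{i\theta}$; together with $u_{i\theta}\le U$ this gives $u_{i\theta}\le\min\{U,\hat u_{i\theta}+B^u_{i\theta}\}=\bar u_{i\theta}$, since $u_{i\theta}$ lies below both arguments of the minimum. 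The identical reasoning with \eqref{eq:phi_ci} and $\phi_{i\theta}\le 1$ gives $\phi_{i\theta}\le\bar\phi_{i\theta}$, completing the claim on $\mathcal A$.

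There is essentially no technical obstacle here, which is why the result is labeled an immediate corollary; the single point that repays a moment's care is to retain only the upper half of the absolute-value concentration bounds and to check that intersecting the inflated estimate with the deterministic ceilings $U$ and $1$ never clips the band below the true parameter. Both the fact that $u_{i\theta}\le U$ and the fact that $\phi_{i\theta}\le 1$ are precisely what guarantees this, so the $\min$ operations are harmless.
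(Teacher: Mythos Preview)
Your proposal is correct and matches the paper's own treatment: the paper does not give a standalone proof but simply labels the corollary an ``immediate consequence'' of Lemma~\ref{lem:new-nested-concentration}, and your write-up is precisely the routine unpacking of that phrase---case-split on $T(i,\theta)$ versus the threshold $96\ln(2MTK)$, invoke the algorithm's hard-coded ceilings in the under-sampled case, and in the well-sampled case combine the one-sided concentration bound with the deterministic ceilings $u_{i\theta}\le U$ and $\phi_{i\theta}\le 1$ to absorb the $\min$.
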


	Corollary \ref{cor:new-nested-concentration} shows that (with high probability) $\bar u_{i,\theta}$ and $\bar\phi_{i,\theta}$ are valid
	upper bounds for $u_{i,\theta}$ and $\phi_{i,\theta}$.
	Our next corollary shows that $\bar R'$ is also an upper bound for $R'$ at maximizers of $\bar R'$ and $\bar R$.
	Recall that $\bar R'(\vct\theta)=[\sum_{i=1}^M\bar\phi_{i,\theta_i}\bar u_{i,\theta_i}]/[1+\sum_{i=1}^M\bar u_{i,\theta_i}]$ and
	$R'(\vct\theta) = [\sum_{i=1}^M\phi_{i,\theta_i}u_{i,\theta_i}]/[1+\sum_{i=1}^Mu_{i,\theta_i}]$.
	
	We defer its proof to the online supplement.
	\begin{corollary}
	With probability $1-T^{-1}$, $\bar R'(\hat{\vct\theta})\geq R'(\hat{\vct\theta})$ and $\bar R'(\vct\theta^*)\geq R'(\vct\theta^*)$,
	where $\hat{\vct\theta},\vct\theta^*\in\mathcal K_1\times\cdots\times\mathcal K_M$ are maximizers of $\bar R'$ and $R'$, respectively.
	\label{cor:Rub}
	\end{corollary}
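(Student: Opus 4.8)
The plan is to work entirely on the high-probability event of Corollary~\ref{cor:new-nested-concentration}, on which the confidence bands dominate the truth coordinatewise: $\bar u_{i\theta}\ge u_{i\theta}$ and $\bar\phi_{i\theta}\ge\phi_{i\theta}$ for every $i\in[M]$ and $\theta\in\mathcal K_i$. Both asserted inequalities compare $\bar R'$ and $R'$ at a \emph{common} point (either $\hat{\vct\theta}$ or $\vct\theta^*$), so one is tempted to argue that inflating each $\phi$ and each $u$ can only raise the revenue. Inflating the $\phi$'s is indeed harmless, since $R'(\vct\theta)=\frac{\sum_i\phi_{i,\theta_i}u_{i,\theta_i}}{1+\sum_i u_{i,\theta_i}}$ is increasing in each $\phi_{i,\theta_i}$. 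The genuine difficulty, and the main obstacle, is that $R'$ is \emph{not} monotone in the utilities $u_{i,\theta_i}$: each $u_{i,\theta_i}$ appears in both numerator and denominator, and a direct computation shows that $R'$ increases with $u_{i,\theta_i}$ only when $\phi_{i,\theta_i}$ exceeds the revenue contributed by the remaining nests. A naive coordinatewise monotonicity argument therefore fails, and some structural input about the two maximizers is required.

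That structural input is an optimality (exchange) property that I would establish first: if $\vct\theta$ maximizes a revenue of the form $\frac{\sum_i\psi_{i,\theta_i}w_{i,\theta_i}}{1+\sum_i w_{i,\theta_i}}$ over $\mathcal K_1\times\cdots\times\mathcal K_M$, then every \emph{active} nest (one with $w_{i,\theta_i}>0$) satisfies $\psi_{i,\theta_i}\ge(\text{optimal value})$. The proof is a one-line exchange: replacing an active nest $i$ by the empty singleton $\theta_i=\infty$, which is always available in $\mathcal K_i$ with $w_{i,\infty}=\psi_{i,\infty}=0$, changes the revenue by a quantity proportional to $w_{i,\theta_i}\bigl((\text{optimal value})-\psi_{i,\theta_i}\bigr)$, so if $\psi_{i,\theta_i}$ were below the optimal value the exchange would strictly improve the revenue, contradicting optimality. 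Applying this with $(\psi,w)=(\phi,u)$ at the $R'$-maximizer $\vct\theta^*$ yields $\phi_{i,\theta_i^*}\ge R'(\vct\theta^*)$ on active nests, and with $(\psi,w)=(\bar\phi,\bar u)$ at the $\bar R'$-maximizer $\hat{\vct\theta}$ yields $\bar\phi_{i,\hat\theta_i}\ge\bar R'(\hat{\vct\theta})$ on active nests. This also follows from item~2 of Lemma~\ref{lem:nested-popular} together with $R_i(\mathcal L_i(\theta))\ge\theta$ when $\gamma_i>0$, but the exchange argument is self-contained and handles $\gamma_i=0$ uniformly; the inactive/empty nests carry $\bar\phi=\phi=\bar u=u=0$ and may be ignored throughout.

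Finally I would convert each target inequality into a linear statement with nonnegative coefficients, which is where the structural property pays off. For the $R'$-maximizer, writing $R^*:=R'(\vct\theta^*)$, the inequality $\bar R'(\vct\theta^*)\ge R^*$ is equivalent to $\sum_i(\bar\phi_{i,\theta_i^*}-R^*)\bar u_{i,\theta_i^*}\ge R^*$; since $\bar\phi_{i,\theta_i^*}\ge\phi_{i,\theta_i^*}\ge R^*$ on active nests, every coefficient is nonnegative, so I may lower the band values back to the truth term by term, $\sum_i(\bar\phi_{i,\theta_i^*}-R^*)\bar u_{i,\theta_i^*}\ge\sum_i(\phi_{i,\theta_i^*}-R^*)u_{i,\theta_i^*}$, and the right-hand side equals exactly $R^*$ because $\sum_i\phi_{i,\theta_i^*}u_{i,\theta_i^*}=R^*(1+\sum_i u_{i,\theta_i^*})$. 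For the $\bar R'$-maximizer, writing $\bar R^*:=\bar R'(\hat{\vct\theta})$, the inequality $\bar R^*\ge R'(\hat{\vct\theta})$ reduces, after bounding $\phi\le\bar\phi$, to showing $\sum_i(\bar\phi_{i,\hat\theta_i}-\bar R^*)\,u_{i,\hat\theta_i}\le\bar R^*$; here again the coefficients $\bar\phi_{i,\hat\theta_i}-\bar R^*$ are nonnegative on active nests, so replacing $u_{i,\hat\theta_i}\le\bar u_{i,\hat\theta_i}$ only increases the left-hand side to $\sum_i(\bar\phi_{i,\hat\theta_i}-\bar R^*)\bar u_{i,\hat\theta_i}=\bar R^*$, the last identity being the definition of $\bar R^*$. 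Both displays are routine once the sign of the coefficients is secured, so the crux of the whole argument is the exchange property of the previous paragraph together with the rewriting that exposes the nonnegative coefficients.
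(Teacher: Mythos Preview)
Your proof is correct and follows essentially the same approach as the paper's: both hinge on the exchange argument that at a maximizer every active nest has $\psi_{i,\theta_i}\ge(\text{optimal value})$, then use this nonnegativity to pass between $(\phi,u)$ and $(\bar\phi,\bar u)$ in the linearized form $\sum_i(\psi_{i,\theta_i}-\lambda)w_{i,\theta_i}\lessgtr\lambda$. The only cosmetic difference is that the paper routes the same computation through the potential function $\psi_\lambda$ of Lemma~\ref{lem:binary-search}, whereas you work directly with the revenue ratio; the substance is identical.
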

	
	We are now ready to sketch the proof of Theorem \ref{thm:ucb}.
	The first step is to use the classical regret decomposition for UCB-type policies ($\mathcal A$ denotes the success event in Corollary \ref{cor:Rub}).
	\begin{align}
	\mathrm{Regret}(\{\hat{\vct\theta}^{(t)}\}_{t=1}^T)
	&= \mathbb E\sum_{t=1}^T R'(\vct\theta^*) - R'({\vct\theta}^{(t)})\nonumber\\
	&\leq \mathbb E\left[\sum_{t=1}^T R'(\vct\theta^*)-R'({\vct\theta}^{(t)})\bigg|\mathcal A\right]\Pr[\mathcal A] + O(T)\cdot \Pr[\mathcal A^c]\nonumber\\
	&\leq O(1) + \mathbb E\left[\sum_{t=1}^T \bar R'(\vct\theta^*)-\bar R'({\vct\theta}^{(t)}) + \bar R'({\vct\theta}^{(t)})-R'({\vct\theta}^{(t)})\bigg|\mathcal A\right]\label{eq:regret-intermediate-1}\\
	&\leq O(1) + \mathbb E\left[\sum_{t=1}^T \bar R'({\vct\theta}^{(t)})-R'({\vct\theta}^{(t)})\bigg|\mathcal A\right].\label{eq:regret-intermediate-2}\\
	&= O(1) + \mathbb E\left[\sum_\tau |\mathcal E_\tau|\cdot (\bar R'(\hat{\vct\theta}^{(\tau)})-R'(\hat{\vct\theta}^{(\tau)})) \bigg|\mathcal A \right].\label{eq:regret-intermediate-25}
	\end{align}
    Here, $\hat{\vct\theta}^{(\tau)}$ denotes any ${\vct\theta}^{(t)}$ in the $\tau$-th epoch $\mathcal E_\tau$ \footnote{Recall that in Algorithm \ref{alg:ucb}, ${\vct\theta}^{(t)}$ does not change within the same epoch $\mathcal E_\tau$. We write $\hat{\vct\theta}^{(\tau)}$ to highlight that $\hat{\vct\theta}^{(\tau)}$ is the maximizer of $\bar{R}'$ in the $\tau$-th epoch (see Step \ref{line:alg-nested-improved-max} of Algorithm \ref{alg:ucb}).}.
	We also note that  Eq.~(\ref{eq:regret-intermediate-1}) holds because $\Pr[\mathcal A^c]\leq T^{-1}$ and $\bar R'(\vct\theta^*)\geq R'(\vct\theta^*)$, and
	Eq.~(\ref{eq:regret-intermediate-2}) holds because $\bar R'(\vct\theta^{(t)}) \geq \bar R'(\vct\theta^*)$, since $\vct\theta^{(t)}$ is the maximizer of $\bar R'$ at time $t$.
	
	It remains to upper bound the discrepancy between $\bar R'(\hat{\vct\theta}^{(\tau)})$ and $R'(\hat{\vct\theta}^{(\tau)})$ at every epoch $\tau$.
	This is accomplished by the following ``aggregation lemma'', which is proved in the online supplement.
	\begin{lemma}
	With probability $1-T^{-1}$, for all $t\in[T]$, $i\in[M]$ and $\vct\theta=(\theta_1,\cdots,\theta_M)\in\mathcal K_1\times\cdots\times\mathcal K_M$,
	\begin{equation}
	\bar R'(\vct\theta)-R'(\vct\theta)\leq \frac{1}{1+\sum_{i=1}^Mu_{i,\theta_i}}\left[ \sum_{i=1}^M\frac{\bar u_{i,\theta_i}-u_{i,\theta_i}}{1+u_{i,\theta_i}} + \sum_{i=1}^M u_{i,\theta_i}(\bar\phi_{i,\theta_i}-\phi_{i,\theta_i})\right].
	\label{eq:aggregation}
	\end{equation}
	\label{lem:aggregation}
	\end{lemma}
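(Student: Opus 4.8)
The plan is to bound $|\bar R'(\vct\theta)-R'(\vct\theta)|$ by a telescoping argument that cleanly separates the error coming from the optimistic revenue estimates $\{\bar\phi_{i,\theta_i}\}$ from the error coming from the optimistic utility estimates $\{\bar u_{i,\theta_i}\}$. Fix $\vct\theta$ and abbreviate $u_i:=u_{i,\theta_i}$, $\bar u_i:=\bar u_{i,\theta_i}$, $\phi_i:=\phi_{i,\theta_i}$, $\bar\phi_i:=\bar\phi_{i,\theta_i}$. I would introduce the intermediate quantity $\hat V := \frac{\sum_{i}\bar\phi_i u_i}{1+\sum_i u_i}$, which keeps the optimistic revenues $\bar\phi$ but reverts to the \emph{true} utilities $u$, and then write $\bar R'(\vct\theta)-R'(\vct\theta)=(\bar R'(\vct\theta)-\hat V)+(\hat V-R'(\vct\theta))$. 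Throughout I work on the event $\mathcal A$ of Corollary \ref{cor:new-nested-concentration}, so that $\bar u_i\ge u_i$ and $\bar\phi_i\ge\phi_i$ for all $i$, and I use the boundedness $0\le\phi_i,\bar\phi_i\le 1$ from (A1).

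The revenue term is exact and immediate: since $\hat V$ and $R'(\vct\theta)$ share the denominator $1+\sum_i u_i$,
\[
\hat V-R'(\vct\theta)=\frac{\sum_{i=1}^M(\bar\phi_i-\phi_i)u_i}{1+\sum_{i=1}^M u_i},
\]
which is precisely the second summand on the right-hand side of \eqref{eq:aggregation} and is nonnegative on $\mathcal A$. For the utility term I would exploit the fixed-point identities $\bar R'(\vct\theta)\,(1+\sum_j\bar u_j)=\sum_j\bar\phi_j\bar u_j$ and $\hat V\,(1+\sum_j u_j)=\sum_j\bar\phi_j u_j$; subtracting, substituting $\bar\phi_j\bar u_j-\bar\phi_j u_j=\bar\phi_j(\bar u_j-u_j)$, and collecting terms gives the exact identity
\[
\bar R'(\vct\theta)-\hat V=\frac{\sum_{i=1}^M(\bar\phi_i-\hat V)(\bar u_i-u_i)}{1+\sum_{j=1}^M\bar u_j}.
\]
On $\mathcal A$ each $\bar u_i-u_i\ge 0$ and $\bar\phi_i-\hat V\le\bar\phi_i\le 1$, so the plan is to extract the target shape $\frac{1}{1+\sum_j u_j}\sum_i\frac{\bar u_i-u_i}{1+u_i}$ from this single aggregate denominator using the monotonicity bounds $1+\sum_j\bar u_j\ge 1+u_i$ (per nest) and $1+\sum_j\bar u_j\ge 1+\sum_j u_j$ (aggregate), together with a tighter estimate of $\bar\phi_i-\hat V$ than the crude $\le 1$.

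The hard part will be exactly this last step. The exact identity carries only a \emph{single} copy of the denominator $1+\sum_j\bar u_j$, whereas the claimed bound asks that one factor to dominate \emph{simultaneously} the per-nest weight $\tfrac{1}{1+u_i}$ and the aggregate weight $\tfrac{1}{1+\sum_j u_j}$. A single monotonicity step can only spend the denominator once, so the reconciliation must come from the numerator: one should use that $\hat V$ is an averaged quantity, namely $\hat V=\sum_j\bar\phi_j\tfrac{u_j}{1+\sum_k u_k}$ with nonnegative weights $\tfrac{u_j}{1+\sum_k u_k}$ summing to $\tfrac{\sum_j u_j}{1+\sum_j u_j}<1$, so that $\bar\phi_i-\hat V$ is itself suppressed by a factor carrying the missing $\tfrac{1}{1+\sum_j u_j}$. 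This is the genuinely delicate point of the proof, and it is where the boundedness $\bar\phi_i\le 1$ and the averaged structure of $\hat V$ (equivalently of $\bar R'$) must be combined carefully; at worst it is reconciled up to an absolute constant, which is immaterial for the $\tilde O(\cdot)$ regret guarantee in Theorem \ref{thm:ucb}.

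Finally, for the absolute value I would treat the reverse direction by symmetry. Note that $\bar R'(\vct\theta)\ge R'(\vct\theta)$ need \emph{not} hold for a fixed $\vct\theta$, since inflating some $\bar u_i$ in the denominator can lower the ratio; hence a genuine two-sided bound is required. Repeating the same telescoping with the roles of $(\bar\phi,\bar u)$ and $(\phi,u)$ interchanged produces the analogous identity for $R'(\vct\theta)-\bar R'(\vct\theta)$, again controlled by $\bar u_i-u_i\ge 0$ and $\bar\phi_i-\phi_i\ge 0$ on $\mathcal A$. Since every term on the right-hand side of \eqref{eq:aggregation} is nonnegative on $\mathcal A$, the two one-sided estimates combine into the stated bound on $|\bar R'(\vct\theta)-R'(\vct\theta)|$, uniformly over $\vct\theta$, with the failure probability $T^{-1}$ inherited directly from Corollary \ref{cor:new-nested-concentration}.
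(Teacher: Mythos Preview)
Your telescoping is correct and you reach the exact identity $\bar R'(\vct\theta)-\hat V=\bigl[\sum_i(\bar\phi_i-\hat V)(\bar u_i-u_i)\bigr]\big/\bigl(1+\sum_j\bar u_j\bigr)$, but your proposed resolution of the ``hard part'' fails. The claim that $\bar\phi_i-\hat V$ is suppressed by a factor carrying $\tfrac{1}{1+\sum_j u_j}$ is simply false: if $u_i=0$ then $\hat V$ does not depend on $\bar\phi_i$ at all, and $\bar\phi_i-\hat V$ can equal $1$ regardless of how large $\sum_j u_j$ is. No averaging structure in $\hat V$ manufactures the missing denominator from the numerator, so your route---as written---cannot reach the stated bound, even up to constants.

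The paper gets the extra factor by a different algebraic move. It multiplies $\bar R'-R'$ through by $A:=1+\sum_i u_i$ \emph{before} bounding, obtaining
\[
A(\bar R'-R')=\sum_{i}\bar\phi_i\Bigl(\bar u_i\tfrac{A}{B}-u_i\Bigr)+\sum_i u_i(\bar\phi_i-\phi_i),\qquad B:=1+\sum_i\bar u_i.
\]
The key point is that the \emph{aggregate} first sum telescopes exactly: $\sum_i\bigl(\bar u_i\tfrac{A}{B}-u_i\bigr)=\tfrac{A\sum_i\bar u_i-B\sum_i u_i}{B}=\tfrac{\sum_i(\bar u_i-u_i)}{B}$, because the cross terms $ (\sum_i\bar u_i)(\sum_{i'}u_{i'})$ cancel. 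One copy of $1/B$ thus appears for free, and then the per-nest bound $B\ge 1+u_i$ yields $\sum_i\tfrac{\bar u_i-u_i}{1+u_i}$; dividing back by $A$ gives the stated inequality. In short, the second denominator factor does not come from the numerator---it comes from premultiplying by $A$ and exploiting this bilinear cancellation. (A caveat: the paper replaces $\bar\phi_i$ by $1$ termwise in the first sum, which is only valid when each $\bar u_i\tfrac{A}{B}-u_i\ge 0$; this can fail, so the paper's argument has its own subtle sign issue, but the correct mechanism is still the premultiply-and-cancel idea, not the suppression of $\bar\phi_i-\hat V$ you propose.)
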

	{
	\begin{remark}
	Comparing Lemma \ref{lem:aggregation} with Lemma A.4 from \citep{Agrawal16MNLBandit},
we can see that there is an additional $1/[1+\sum_{i\in\mathcal M}u_{i,\theta}]$ multiplication term in the error upper bounds.
Such an improvement is made possible by our more careful analysis and insights into the mathematical structures of the MNL choice model,
and is important in dealing with preference parameters $v_{ij}$ larger than one.
	\end{remark}
	}

	Note that $\mathbb E|\mathcal E_\tau|=1 + \sum_{i=1}^M\mathbb E[\hat n_{i,\tau}] = 1 + \sum_{i=1}^Mu_{i,\theta_i}$.
	Combining Lemma \ref{lem:aggregation} with Eq.~(\ref{eq:regret-intermediate-25}) we obtain
	\begin{equation}
	\mathrm{Regret}(\{\hat{\vct\theta}^{(t)}\}_{t=1}^T)
	\leq O(1) + \sum_{\tau} \mathbb{E} \left[ \sum_{i=1}^M\frac{\bar u_{i,\hat \theta_i^{(\tau)}}-u_{i,\hat \theta_i^{(\tau)}}}{1+u_{i,\hat \theta_i^{(\tau)}}} + \sum_{i=1}^M u_{i,\hat \theta_i^{(\tau)}}(\bar\phi_{i,\hat \theta_i^{(\tau)}}-\phi_{i,\hat \theta_i^{(\tau)}})\bigg| \mathcal A\right].
	\label{eq:regret-intermediate-26}
	\end{equation}
	
	The following lemmas upper bound (asymptotically) the two terms in Eq.~(\ref{eq:regret-intermediate-26}) separately.
	\begin{lemma}
	Conditioned on event $\mathcal A$, it holds that
	\begin{equation}
	\sum_{\tau}\sum_{i=1}^M\frac{\bar u_{i,\hat \theta_i^{(\tau)}}-u_{i,\hat \theta_i^{(\tau)}}}{1+u_{i,\hat \theta_i^{(\tau)}}}
	\lesssim \sqrt{MKT\log(MTK)} + MKU\log^2(MTK).
	\label{eq:regret-final-part1}
	\end{equation}
	\label{lem:regret-final-part1}
	\end{lemma}
	
	\begin{lemma}
	Conditioned on event $\mathcal A$, it holds that
	\begin{equation}
	\sum_{\tau}\sum_{i=1}^Mu_{i,\hat \theta_i^{(\tau)}}(\bar\phi_{i,\hat \theta_i^{(\tau)}}-\phi_{i,\hat \theta_i^{(\tau)}})
	\lesssim \sqrt{MKT\log(MTK)} + MKU\log^2(MTK).
	\label{eq:regret-final-part2}
	\end{equation}
	\label{lem:regret-final-part2}
	\end{lemma}
	
	Lemmas \ref{lem:regret-final-part1} and \ref{lem:regret-final-part2} are proved in the supplementary material.
	Combining both lemmas and Eq.~(\ref{eq:regret-intermediate-26}), we complete the proof of Theorem \ref{thm:ucb}.

\subsection{A discretization heuristic}
\label{sec:discretization}
When the number of items $N$ per nest is large,  we present a useful discretization heuristic that \emph{discretizes} the parameter sets $\mathcal K_i$ into small finite subsets. In other words,  instead of considering level sets defined for thresholds $\theta=r_{ij}$ for all $j\in[N]$ so that $|\mathcal K_i|=N+1$,
we only include level sets whose thresholds are on a finite grid.
Our simulation studies (see Sec.~\ref{sec:numerical}) demonstrate the effectiveness of this method.


More specifically, let $\delta\in(0,1)$ be a granularity parameter to be specified by the retailer. 
Recall the definition of the level set $\mathcal L_i(\theta)=\{j\in[N]: r_{ij}\geq \theta\}$.
In the discretized framework,
we only consider level set threshold parameters $\theta$ that are multiples of $1/\delta$. Let $\mathbb N$ be the set of non-negative integers and define
{\begin{equation}
\tilde{\mathcal K}_i^\delta := \left\{\theta: 0\leq\theta\leq 1,\;\;\theta/\delta\in\mathbb N,\;\; \mathcal L_i(\theta)'s \;\; \text{are distinct} \right\} \cup \{\infty\}, \quad \text{for} \;\; i \in [M]
\label{eq:discretization}
\end{equation}
where each $\theta \in \tilde{\mathcal K}_i^\delta$ corresponds to a \emph{unique} level set $\mathcal{L}_i(\theta)$.
When there are multiple $\theta$'s leading to the same level set, we keep any one of these $\theta$'s in $\tilde{\mathcal K}_i^\delta$ and thus the level sets induced by $\tilde{\mathcal K}_i^\delta$ (i.e., $\{\mathcal{L}_i(\theta): \theta \in \tilde{\mathcal K}_i^\delta\}$) are unique.}
Since duplicate assortment sets are removed in $\tilde{\mathcal K}_i^\delta$,  we have $\tilde{\mathcal K}_i^\delta\subseteq\mathcal K_i$  and thus $|\tilde{\mathcal K}_i^\delta|\leq |\mathcal K_i|=K=N+1$.
Moreover, we also have $|\tilde{\mathcal K}_i^\delta|\leq \lfloor1/\delta\rfloor + 2$ because level set thresholds in $\tilde{\mathcal K}_i^\delta$
must be an integer multiple of $\delta$. On one hand, when $\delta$ is not too small, the size of $\tilde{\mathcal K}_i^\delta$ could be significantly smaller than $N$. On the other hand,  when $\delta\to 0$, we recover the original set $\mathcal K_i$, which gives the full level sets. We shall thus define $\tilde{\mathcal K}_i^\delta := \mathcal K_i$ when $\delta=0$.



The following discretized reduction lemma shows that
by restricting ourselves to $\tilde{\mathcal K}_i^\delta$ instead of $\mathcal K_i$,
the approximation error in terms of  expected revenue can be upper bounded by $\delta$, which goes to zero as we take $\delta\to 0$.
\begin{lemma}[Discretized reduction lemma]
	Fix an arbitrary $\delta\in(0,1)$. Then 
	$$
	\max_{\vct\theta\in\mathcal K_1\times\cdots\times\mathcal K_M} R'(\vct\theta) - \max_{\vct\theta\in\tilde{\mathcal K}_1^\delta\times\cdots\times\tilde{\mathcal K}_M^\delta} R'(\vct\theta) \leq\delta,
	$$
	where $R'(\vct\theta) := [\sum_{i=1}^M\phi_{i,\theta_i}u_{i,\theta_i}]/[1+\sum_{i=1}^Mu_{i,\theta_i}]$.
	\label{lem:discretization}
\end{lemma}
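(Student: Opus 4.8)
The plan is to exhibit a single grid-feasible point $\hat{\vct\theta}\in\tilde{\mathcal K}_1^\delta\times\cdots\times\tilde{\mathcal K}_M^\delta$ whose revenue is within $\delta$ of the unrestricted optimum, obtained by rounding the optimal thresholds \emph{down} to the grid. Write $R^*:=\max_{\vct\theta\in\mathcal K_1\times\cdots\times\mathcal K_M}R'(\vct\theta)$, let $\vct\theta^*$ be the maximizer with $S_i^*=\mathcal L_i(\theta_i^*)$, $u_i^*=u_{i,\theta_i^*}$, $\phi_i^*=R_i(S_i^*)$, and set $\lambda:=R^*-\delta$. I would first record two optimality facts. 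By the mediant inequality, every non-empty nest obeys $R_i(S_i^*)\ge R^*$: if instead $R_i(S_i^*)<R^*$, then deleting that nest (offering $\emptyset$) would strictly raise $R'$, contradicting optimality of $\vct\theta^*$. Second, Lemma~\ref{lem:nested-popular}(2) gives $\theta_i^*\ge\gamma_iR^*+(1-\gamma_i)R_i(S_i^*)$. Together these yield $\theta_i^*\ge R^*$ for every non-empty nest.

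Next I would reduce the claim to a potential inequality. For any fixed $\vct\theta$ one has the elementary equivalence $R'(\vct\theta)\ge\lambda\iff\sum_{i=1}^M(\phi_{i,\theta_i}-\lambda)u_{i,\theta_i}\ge\lambda$ (cross-multiply; this is the true-parameter analogue of Lemma~\ref{lem:binary-search}). Writing $P_i(S;\lambda):=(R_i(S)-\lambda)V_i(S)^{\gamma_i}$, it therefore suffices to find grid thresholds with $\sum_iP_i(\hat S_i;\lambda)\ge\lambda$. A direct computation using $\sum_i\phi_i^*u_i^*=R^*(1+\sum_iu_i^*)$ gives $\sum_iP_i(S_i^*;\lambda)=R^*+\delta\sum_iu_i^*$, so it is enough to bound the total rounding loss $\sum_i\bigl[P_i(S_i^*;\lambda)-P_i(\hat S_i;\lambda)\bigr]$ by $\delta\sum_iu_i^*$ (empty nests contribute $0$ and are discarded).

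I would then take $\hat\theta_i:=\delta\lfloor\theta_i^*/\delta\rfloor$, whose level set is realized by an element of $\tilde{\mathcal K}_i^\delta$. Then $\hat S_i=\mathcal L_i(\hat\theta_i)\supseteq S_i^*$, and every added item has revenue in $[\hat\theta_i,\theta_i^*)$, hence $\ge\hat\theta_i\ge\theta_i^*-\delta\ge\lambda$. Setting $a_i:=R_i(S_i^*)-\lambda$, $b_i:=\hat\theta_i-\lambda$ and $t_i:=V_i(\hat S_i)/V_i(S_i^*)\ge1$, the revenue lower bound $r_{ij}\ge\hat\theta_i$ on the added items gives $P_i(\hat S_i;\lambda)\ge u_i^*\,G_i(t_i)$ with $G_i(t):=[a_i+b_i(t-1)]\,t^{\gamma_i-1}$ and $G_i(1)=a_i$, so the per-nest loss satisfies $P_i(S_i^*;\lambda)-P_i(\hat S_i;\lambda)\le u_i^*\,[a_i-G_i(t_i)]$. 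The two facts above translate into $a_i\ge\delta$ and $b_i\ge(1-\gamma_i)(a_i-\delta)$. Since the loss decreases in $b_i$, I substitute the smallest admissible $b_i$ and write $a_i=\delta+c_i$ with $c_i\ge0$; a one-line algebraic identity then gives $a_i-G_i(t)=\delta(1-t^{\gamma_i-1})+c_i\,B_i(t)$, where $B_i(t):=(1-t^{\gamma_i-1})-(1-\gamma_i)(t-1)t^{\gamma_i-1}$.

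The crux is the elementary monotonicity $B_i(t)\le0$ for $t\ge1$, which follows from $B_i(1)=0$ together with $B_i'(t)=-\gamma_i(1-\gamma_i)t^{\gamma_i-2}(t-1)\le0$. As $c_i\ge0$, this forces $a_i-G_i(t_i)\le\delta(1-t_i^{\gamma_i-1})<\delta$, i.e. $P_i(S_i^*;\lambda)-P_i(\hat S_i;\lambda)\le\delta u_i^*$. Summing yields total loss $\le\delta\sum_iu_i^*$, hence $\sum_iP_i(\hat S_i;\lambda)\ge R^*\ge\lambda$ and $R'(\hat{\vct\theta})\ge R^*-\delta$; since $\hat{\vct\theta}$ is grid-feasible, the maximum over $\tilde{\mathcal K}_1^\delta\times\cdots\times\tilde{\mathcal K}_M^\delta$ is at least $R^*-\delta$, which is the lemma. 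I expect the loss accounting to be the main obstacle: the nonlinearity $u_{i\theta}=V_i(S_i(\theta))^{\gamma_i}$ makes a naive per-nest estimate scale like $\delta\,u_{i,\hat\theta_i}$ with the \emph{enlarged} utility, which does not fit the budget $\delta\sum_iu_i^*$. The whole point is that the two structural facts $a_i\ge\delta$ and $b_i\ge(1-\gamma_i)(a_i-\delta)$ conspire so that the enlargement term $c_iB_i(t)$ is nonpositive, leaving a loss controlled by the \emph{original} $u_i^*$ that telescopes exactly against the available slack.
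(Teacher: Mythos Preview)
Your proof is correct and follows essentially the same approach as the paper: both round the optimal thresholds down to the grid, reduce to a per-nest potential inequality $P_i(\hat S_i;R^*-\delta)\ge P_i(S_i^*;R^*)$, and establish it via the structural inequality of Lemma~\ref{lem:nested-popular}(2) through a derivative/monotonicity computation. Your function $B_i(t)$ and its sign analysis are exactly the paper's Proposition on the monotonicity of $h_i(\Delta)=(V_i+\Delta)^{\gamma_i}[(R_iV_i+\Delta\theta_i^*)/(V_i+\Delta)-R^*]$ under the change of variables $t=1+\Delta/V_i$; the only cosmetic difference is that you work at level $\lambda=R^*-\delta$ from the outset and record $R_i(S_i^*)\ge R^*$ explicitly, whereas the paper shifts by $\delta$ midway and leaves $R_i\ge\theta_i^*$ implicit.
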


{With a pre-specified $\delta$, we run the policy in Algorithm \ref{alg:ucb} on the parameter space $\tilde{\mathcal K}_1^\delta\times\cdots\times\tilde{\mathcal K}_M^\delta$.}
As a result of Lemma \ref{lem:discretization}, the value of $\delta$ can be thought of as a tradeoff between additive bias and multiplicative terms in the final regret.
With a small value of $\delta$, there is almost no additive terms arising from Lemma \ref{lem:discretization}, yet the number of items $N$ per nest will not be reduced too much.
On the other hand, when $\delta$ is large the regret bound in Corollary \ref{cor:regret-preliminary} is improved as the number of items $N$ per nest is now upper bounded by $\lfloor1/\delta\rfloor + 2$.
However, a large $\delta$ value will introduce a large additive bias from Lemma \ref{lem:discretization}.
Hence, a balance has to be achieved for an appropriate value of $\delta$ to deliver the best performance.  {We further demonstrate the performance for different choices of $\delta$ in our simulation studies (see Sec.~\ref{sec:numerical}).}

\section{A regret lower bound}
\label{sec:lower}
We establish the following lower bound on the regret of any dynamic assortment planning policy under nested logit models.
\begin{theorem}
Suppose the number of nests $M$ is divisible by 4 and $\gamma_1=\cdots=\gamma_M=0.5$. {Assume also that (A1) and (A2) hold.}
Then there exists a numerical constant $C_0>0$ such that for any dynamic assortment planning policy $\pi$,
\begin{equation}
\sup_{\{r_{ij},v_{ij}\}} \sum_{t=1}^T R^* - \mathbb E^\pi\left[R(\mat S^{(t)})\right] \geq C_0\sqrt{MT} \;\;\;\;\;\;\text{where}\;\; R^* = \max_{\mat S\in\mathbb S}R(\mat S).
\label{eq:lower-bound}
\end{equation}
\label{thm:lower-bound}
\end{theorem}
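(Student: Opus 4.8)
The plan is to establish the bound via the method of multiple hypotheses (Assouad's lemma). I would construct a family of nested-logit instances indexed by sign vectors $\vct\sigma\in\{-1,+1\}^{m}$ with $m=\Theta(M)$ active nests (the divisibility of $M$ by $4$ lets me split the nests into equal symmetric groups, so that a constant fraction can carry the hypothesis bits in a clean balanced way). In each active nest $i$ I plant a binary decision---whether the optimal level set should include one additional marginal item---whose correct answer is encoded by $\sigma_i$, and I calibrate the item's revenue and utility so that (i) the revenue gap between the two candidate level sets is $\Theta(\epsilon)$ for a parameter $\epsilon$ to be chosen, and (ii) the utilities satisfy $\sum_i V_i(S_i)^{1/2}=\Theta(1)$ on every instance, keeping the no-purchase probability bounded away from $0$ and $1$. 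Consequently each nest is selected with probability $\Theta(1/M)$ per round, so over the horizon nest $i$ yields only $\Theta(T/M)$ informative observations. By Lemma \ref{lem:nested-popular} the optimum on each instance is revenue-ordered, and the threshold condition $\theta_i^*\geq\gamma_iR^*+(1-\gamma_i)R_i(S_i^*)$ in its second item is exactly what I would use to certify that flipping $\sigma_i$ genuinely flips the optimal inclusion decision with the claimed gap.

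Next I would set up the regret decomposition. Because $R(\mat S)=\big(\sum_i R_i(S_i)V_i(S_i)^{1/2}\big)/\big(1+\sum_iV_i(S_i)^{1/2}\big)$ has denominator $\Theta(1)$ and the nests' numerator contributions are additive, the instantaneous regret at time $t$ is bounded below by $c\sum_{i}(\epsilon/M)\cdot\mathbb I[\text{nest }i\text{ offered the wrong level set at }t]$ for a constant $c>0$. Summing over $t$ and taking expectations, the cumulative regret is at least $c\,(\epsilon/M)\sum_{i}\E^\pi[W_i]$, where $W_i$ counts the rounds in which the policy offers the suboptimal configuration in nest $i$. It then suffices to show that, averaged over the family, each $\E^\pi[W_i]$ is $\Omega(T)$ at the critical scale of $\epsilon$.

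The core is the information-theoretic step. Fix a nest $i$ and compare $P_{\vct\sigma}$ with its neighbor $P_{\vct\sigma^{\oplus i}}$ obtained by flipping $\sigma_i$. By the chain rule for KL divergence over the $T$-round trajectory, and since the two instances induce identical conditional laws in every round except through outcomes attributable to nest $i$, the trajectory divergence obeys $\kl(P_{\vct\sigma}\,\|\,P_{\vct\sigma^{\oplus i}})\lesssim \epsilon^2\,\E[n_i]$, where $n_i=\sum_t\mathbb I[i_t=i]$. Since exactly one nest (or no purchase) is realized per round, $\sum_i\E[n_i]\le T$, so the average of $\E[n_i]$ over $i$ is at most $T/M$. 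Choosing $\epsilon\asymp\sqrt{M/T}$ makes the typical trajectory KL $O(1)$; Pinsker's inequality (equivalently the Assouad bound $\sum_i\Pr[\text{err}_i]\gtrsim m(1-\max_i\TV)$) then forces a constant testing error, giving $\E^\pi[W_i]=\Omega(T)$ for a constant fraction of the active nests under a worst-case (or averaged) $\vct\sigma$. Plugging back, the cumulative regret is $\Omega\big((\epsilon/M)\cdot M\cdot T\big)=\Omega(\epsilon T)=\Omega(\sqrt{MT})$, which yields the constant $C_0$.

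The main obstacle I anticipate is the cross-nest coupling introduced by the shared denominator $1+\sum_iV_i(S_i)^{1/2}$ together with the fact that each round reveals only a single nest-level outcome. Two points need care: first, making the instance genuinely a \emph{product} across nests in the information sense---so that flipping $\sigma_i$ perturbs the trajectory law by an amount controlled purely by $\E[n_i]$ and not by the other coordinates---which requires the marginal utility perturbation to be small enough that its effect on the denominator (hence on every other nest's selection probability) is lower order; and second, verifying that the additive lower bound on instantaneous regret survives this coupling, i.e.\ that a wrong decision in nest $i$ really costs $\Omega(\epsilon/M)$ per round uniformly over the other nests' possibly adaptive configurations. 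Aligning the constants in (A1)--(A2) and in Lemma \ref{lem:nested-popular}'s threshold so that the planted gap is simultaneously $\Theta(\epsilon)$ and information-theoretically indistinguishable at the $\sqrt{M/T}$ scale is where the technical weight of the argument lies.
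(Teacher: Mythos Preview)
Your proposal is correct and takes essentially the same approach as the paper: an Assouad-style construction planting a binary hypothesis in each nest, per-nest regret cost $\Theta(\epsilon/M)$ for a wrong choice, neighboring-hypothesis KL of order $T\epsilon^2/M$, and the calibration $\epsilon\asymp\sqrt{M/T}$. The paper's concrete realization indexes hypotheses by size-$M/4$ subsets $U\subseteq[M]$ (the ``Type~A'' nests, with $N=3$ items per nest) rather than by full sign vectors, and certifies the regret gap by direct calculation (Lemma~\ref{lem:S-diff}) rather than by invoking the threshold condition in Lemma~\ref{lem:nested-popular}.

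On the obstacle you flag: the paper does \emph{not} arrange for the nest-selection law to be literally invariant under the flip, so your assertion that ``the two instances induce identical conditional laws in every round except through outcomes attributable to nest $i$'' (hence $\kl\lesssim\epsilon^2\,\E[n_i]$) is not how the argument runs. Instead, the paper scales all utilities as $v_{ij}\asymp 1/M^2$, forcing $V_i(S)^{1/2}\asymp 1/M$ for every nonempty $S$, and then bounds the \emph{per-round} conditional KL directly as $O(\epsilon^2/M)$ (Lemma~\ref{lemma:KL}) by combining two pieces: the nest-level term $\kl(\vct p\|\vct q)\lesssim\epsilon^2/M$ (which is nonzero even when $i_t\ne i_0$, since flipping the type perturbs the shared denominator) and the within-nest term $\Pr[i_t=i_0]\cdot O(\epsilon^2)\lesssim\epsilon^2/M$. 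Both accounting schemes land on the same $T\epsilon^2/M$ trajectory KL, so your plan goes through; but the denominator coupling is handled by the $1/M^2$ utility scaling rather than by making the perturbation's effect on other nests ``lower order'' as you suggest.
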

\begin{remark}
The condition that $M$ is divisible by 4 is only a technical condition and does not affect the main message delivered in Theorem \ref{thm:lower-bound}, which shows necessary dependency on $M$ asymptotically when $M$ is large.
\end{remark}

{
\begin{remark}{(Discussion on the dependency of $M$)}\;
Comparing Theorem \ref{thm:lower-bound} with the regret upper bound in Corollary \ref{cor:regret-preliminary},
we notice that when $T$ (time horizon) is large compared to $M$ (the number of nests), both regret bounds have
an $O(\sqrt{M})$ dependency on $M$.
This suggests that our algorithm and regret analysis delivers \emph{optimal} dependency of regret on the number of nests $M$
in a dynamic nested assortment planning problem.
\end{remark}

\begin{remark}{(Discussion on the dependency of $N$)} \;
Comparing Theorem \ref{thm:lower-bound} with the regret upper bound in Corollary \ref{cor:regret-preliminary}, we notice that there is a gap of $\sqrt{N}$ between the upper and lower bounds.

We conjecture that the \emph{upper bound} with an additional $O(\sqrt{N})$ factor is in fact tight. Actually, because our proposed algorithm
treats each ``level set'' assortments (within each nest) as standalone estimation units, it is intuitive to see that the regret that \emph{our algorithm incurs}
has to scale polynomially with $N$. We conjecture that \emph{any} possible dynamic strategy for nested logit models
has to suffer at least an $O(\sqrt{N})$ term in regret bound.

Unfortunately, due to technical difficulty of constructing lower bounds for problem instances, 
we are unable to extend our lower bound constructions to more than $N=3$ items per nest.
This is because our lower bound construction (to be presented later) uses only $N=3$ items per nest and therefore cannot deliver a lower bound depending on $N$.
We thus leave the question of proving a matching $O(\sqrt{MNT})$ lower bound as an interesting yet challenging open problem.
\label{rem:lowerbound}
\end{remark}
}

In the rest of this section, we provide the proof of Theorem \ref{thm:lower-bound}, while deferring proofs of several technical lemmas to the supplementary material.

\subsection{Construction of adversarial model parameters}\label{subsec:lower-bound-construction}

\begin{table}[t]
\centering
\caption{Adversarial construction of two types of nests.
The revenue parameter $\rho$ is set to $\rho=9\sqrt{2}/(1+\sqrt{2}) \approx 0.694774$.}
\begin{tabular}{lccccccc}
\hline
& \multicolumn{3}{c}{Type A Nest}& & \multicolumn{3}{c}{Type B Nest}\\
\cline{2-4}\cline{6-8}
& Item 1&Item 2& Item 3& & Item 1& Item 2& Item 3\\
\hline
Revenues $r_{ij}$&1& 0.8& $\rho$& & 1& 0.8& $\rho$ \\
Preferences $v_{ij}$& $(1+\epsilon)/M^2$& $(1-\epsilon)/M^2$& $1/M^2$& & $(1-\epsilon)/M^2$&  $(1+\epsilon)/M^2$ & $1/M^2$ \\
\hline
\end{tabular}
\label{tab:construction}
\end{table}

Let $\epsilon>0$ be a small positive parameter depending on $M$ and $T$, which will be specified later.
Each nest $i\in[M]$ in our construction consists of $N=3$ items and is classified into two categories: ``Type A'' and ``Type B'',
with parameter configurations detailed in Table \ref{tab:construction}.
Note that regardless of which type of nest $i\in[M]$ is, the three items in nest $i$ have preference parameters $(1+\epsilon)/M^2$, $(1-\epsilon)/M^2$ and $1/M^2$.
Hence it is impossible to decide the type of a nest without observations of customers' purchasing actions. {Given the model parameters in Table \ref{tab:construction}, it is easy to verify that for a Type A nest, the optimal assortment is $\{1,2\}$, while for a Type B nest, the optimal assortment is $\{1,2,3\}$.}

The following lemma shows that any assortment $S_i$ that does not equal $\{1,2\}$ for Type A nests or $\{1,2,3\}$ for Type B nests
incurs an $\Omega(\epsilon/M)$ regret.
It is proved in the supplementary material.
\begin{lemma}
Let $U\subseteq[M]$ be the set of Type A nests, and by construction $[M]\backslash U$ are all Type B nests.
For any $\mat S=(S_1,\cdots,S_M)\in [N]^M$, define $m_U^\sharp(\mat S) := \sum_{i\in U}\vct 1\{S_i\neq \{1,2\}\} + \sum_{i\notin U}\vct 1\{S_i\neq \{1,2,3\}\}$.
Then there exists a numerical constant $C>0$ such that for all $\mat S$, $R(\mat S^*)-R(\mat S) \geq m_U^\sharp(\mat S)\cdot C\epsilon/M$,
where $\mat S^*\in\arg\max_{\mat S}R(\mat S)$ is the optimal assortment combination under $U$.
\label{lem:S-diff}
\end{lemma}

To avoid confusion, we emphasize that in our \emph{lower bound proof} the notation $U$ refers to a particular type of nest,
instead of upper confidence bounds in algorithm descriptions and the upper bound proof.

\subsection{Reduction to average-case regret}
{

For any policy $\pi$, we want to show a lower bound on the \emph{worst-case} regret
\begin{equation}
\sup_{\{r_{ij},v_{ij}\}} \sum_{t=1}^TR^* - \mathbb E^\pi\left[R(\mat S^{(t)})\right].
\label{eq:worst-case}
\end{equation}

Recall that in our adversarial construction, $U\subseteq[M]$ denotes the set of all Type A nests and the remaining nests $[M]\backslash U$ are  Type B.
The following inequalities show a reduction to average-case regret:
\begin{equation}
\sup_{\{r_{ij},v_{ij}\}} \sum_{t=1}^TR^* - \mathbb E^\pi\left[R(\mat S^{(t)})\right]
\geq \sup_{U\subseteq[M]} \sum_{t=1}^TR^* - \mathbb E_U^\pi\left[R(\mat S^{(t)})\right]
\geq \frac{1}{2^M}\sum_{U\subseteq[M]} \sum_{t=1}^TR^* - \mathbb E^\pi_U\left[R(\mat S^{(t)})\right],
\label{eq:reduction-avg}
\end{equation}
where in $\sup_{U\subseteq[M]}$ or $\sum_{U\subseteq[M]}$ we are optimizing or summing over all $2^M$ subsets of $[M]=\{1,2,\cdots,M\}$.
Here we also use the $\mathbb E_U^\pi$ notation to emphasize that the distribution of $\{\mat S^{(t)}\}$ (and hence the expectation)
depends on both the parameter setting (uniquely determined by the set of Type A nests $U\subseteq[M]$) and the policy $\pi$ itself.

For any $i\in[M]$ and $S\subseteq[N]$, denote $\mn_S(i) := \sum_{t=1}^T\vct 1\{S_i^{(t)}=S\}$ as the random variable
of the number of times assortment $S$ is offered in nest $i$.
Let $\mathbb E_U^\pi[\mn_S(i)]$ be the expectation of $\mn_S(i)$, with expectation taken under model parameters setting $U$ (recall that $U$ is the set of all Type A nests)
and policy $\pi$.
Invoking Lemma \ref{lem:S-diff} and noting that $\sum_{S\subseteq[N]}\mathbb E_U^\pi[\mn_S(i)]= T$ for any $U\subseteq[M]$, $i\in[M]$ and policy $\pi$,
the right-hand side of Eq.~(\ref{eq:reduction-avg}) can be lower bounded by,
\begin{align}
\frac{1}{2^M}\sum_{U\subseteq[M]}&\sum_{t=1}^T\mathbb E^\pi\left[m_U^\sharp(\mat S^{(t)})\cdot \frac{C\epsilon}{M}\right]\nonumber\\
&= \frac{C\epsilon}{M}\frac{1}{2^M}\sum_{U\subseteq[M]}\left[\sum_{i\in U}\sum_{S\neq\{1,2\}}\mathbb E_U^\pi[\mn_S(i)] + \sum_{i\notin U}\sum_{S\neq\{1,2,3\}}\mathbb E_U^\pi[\mn_S(i)]\right]\nonumber\\
&\geq \frac{C\epsilon}{M}\frac{1}{2^M}\sum_{U\subseteq[M]}\left[\sum_{i\in U}\sum_{S\neq\{1,2\}}\mathbb E_U^\pi[\mn_S(i)] + \sum_{i\notin U}\mathbb E_U^\pi[\mn_{\{1,2\}}(i)]\right]\nonumber\\
&= \frac{C\epsilon}{M}\frac{1}{2^M}\sum_{U\subseteq[M]}\left[\sum_{i\in U}(T-\mathbb E_U^\pi[\mn_{\{1,2\}}(i)])+ \sum_{i\notin U}\mathbb E_U^\pi[\mn_{\{1,2\}}(i)]\right]\label{eq:proof-reduction-1}\\
&= \frac{C\epsilon}{M}\frac{1}{2^M}\left(2^M\times \frac{MT}{2}-\sum_{U\subseteq[M]}\left[\sum_{i\in U}\mathbb E_U^\pi[\mn_{\{1,2\}}(i)]- \sum_{i\notin U}\mathbb E_U^\pi[\mn_{\{1,2\}}(i)]\right]\right)\label{eq:proof-reduction-2}\\
&= \frac{C\epsilon T}{2} - \frac{C\epsilon}{M}\frac{1}{2^M}\sum_{U\subseteq[M]}\sum_{i=1}^M (-1)^{\vct 1\{i\in U\}}\times\mathbb E_U^\pi[\mn_{\{1,2\}}(i)].
\label{eq:reduction-avg-intermediate}
\end{align}
Here in Eq.~(\ref{eq:proof-reduction-1}) we use the fact that $\sum_{S\subseteq[N]}\mathbb E_u^\pi[\mn_S(i)]=T$;
Eq.~(\ref{eq:proof-reduction-2}) holds because $\sum_{U\subseteq[M]}\sum_{i\in U}T = \sum_{U\subseteq[M]}\sum_{i\notin U}T$ by symmetry,
and furthermore $\sum_{U\subseteq[M]}(\sum_{i\in U}T+\sum_{i\notin U}T) = \sum_{U\subseteq[M]}\sum_{i=1}^MT = 2^M\times MT$.


Next, for every $U\subseteq[M]$, define $U'=U\oplus i$ as $U'=U\cup\{i\}$ if $i\notin U$, and $U'= U\backslash\{i\}$ if $i\in U$.
Clearly, there is a one-to-one correspondence between $U\subseteq[M]$ and $U\oplus i\subseteq[M]$, for every fixed $i\in[M]$.
The right-hand side of Eq.~(\ref{eq:reduction-avg-intermediate}) can then be simplified as
\begin{align}
&\frac{C\epsilon}{2} - \frac{C\epsilon}{M}\frac{1}{2^M}\sum_{i=1}^M \frac{1}{2}\left[\sum_{U\subseteq[M]}(-1)^{\vct 1\{i\in U\}}\mathbb E_U^\pi[\mn\{1,2\}(i)]
+ \sum_{U\subseteq[M]}(-1)^{\vct 1\{i\in U\oplus i\}}\mathbb E_{U\oplus i}^\pi[\mn\{1,2\}(i)]\right]\nonumber\\
&= \frac{C\epsilon}{2} - \frac{C\epsilon}{M}\frac{1}{2^{M+1}}\sum_{i=1}^M\sum_{U\subseteq[M]}(-1)^{\vct 1\{i\in U\}}\times \big(\mathbb E_U^\pi[\mn_{\{1,2\}}(i)] - \mathbb E_{U\oplus i}^\pi[\mn_{\{1,2\}}(i)]\big).
\label{eq:reduction-avg-final}
\end{align}
}

\subsection{Pinsker's inequality}\label{subsec:pinsker}

Let $P_U^\pi,P_W^\pi$ denote the probabilistic laws under $U$, $W$ and policy $\pi$. Then
for any $S\subseteq[N]$,
\begin{align}
\big|\mathbb E_{U}^\pi[\mn_S(i)]&-\mathbb E_{W}^\pi[\mn_S(i)]\big|
\leq \sum_{j=0}^T j\cdot\big|P_U^\pi[\mn_S(i)=j] - P_{W}^\pi[\mn_S(i)=j]\big| \leq T\cdot  \sum_{j=0}^T\big|P_U^\pi[\mn_S(i)=j] - P_{W}^\pi[\mn_S(i)=j]\big|\nonumber\\
&= T\|P_U^\pi-P_{W}^\pi\|_{\mathrm{TV}} \leq T\sqrt{\frac{1}{2}\min\{\kl(P_U^\pi\|P_{W}^\pi),\kl(P_W^\pi\|P_U^\pi)\}} \label{eq:pinsker}\\
&\leq T\sqrt{\frac{T}{2}\min\{\max_{\mat S}\kl(P_U(\cdot|\mat S)\|P_{W}(\cdot|\mat S)), \max_{\mat S}\kl(P_W(\cdot|\mat S)\|P_U(\cdot|\mat S))\}}.
\label{eq:lb-kl}
\end{align}
{Here $\|P-Q\|_{\mathrm{TV}}$ and $\kl(P||Q)$ denote the total variational distance and Kullback-Leibler divergence between two probability laws $P$ and $Q$.
Eq. \eqref{eq:pinsker} is known as the \emph{Pinsker's inequality} (see e.g., \cite{tsybakov2009introduction,csiszar2011information}).}
Note that in the last term $P_U$ and $P_{W}$ do not have superscript $\pi$, because conditioned on a particular assortment combination $\mat S$
the KL divergence no longer depends on $\pi$.

The following lemma shows that if $U$ and $W$ differ by only one nest, then the KL divergence between $P_U$ and $P_{W}$ is small
\emph{for all $\mat S=(S_1,\cdots,S_M)$}.
\begin{lemma}\label{lemma:KL}
Suppose $|U\triangle W|=1$, where $U\triangle W = (U\backslash W)\cup(W\backslash U)$ is the symmetric difference between subsets $U,W\subseteq[M]$.
Then there exists a universal constant $C'>0$ such that for any $\mat S=(S_1,\cdots,S_M)$, $\min\{\kl(P_U(\cdot|\mat S)\|P_{W}(\cdot|\mat S),\kl(P_W(\cdot|\mat S)\|P_U(\cdot|\mat S))\}) \leq C'\epsilon^2/M$.
\end{lemma}

Invoking Lemma \ref{lemma:KL}, the right-hand side of Eq.~(\ref{eq:lb-kl}) can be further upper bounded by
\begin{align}
T\sqrt{\frac{T}{2}\cdot \frac{C'\epsilon^2}{M}} \lesssim T\sqrt{T\epsilon^2/M}.
\label{eq:lb-kl-simplified}
\end{align}

{
We are now ready to prove Theorem \ref{thm:lower-bound} by simplifying the Eq.~(\ref{eq:reduction-avg-final}) with the help of 
Eqs.~(\ref{eq:lb-kl}) and (\ref{eq:lb-kl-simplified}).
For every $U\subseteq[M]$ and $i\in[M]$, by Eq.~(\ref{eq:lb-kl}) it holds that
$$
\big|\mathbb E_U^\pi[\mn_{\{1,2\}}(i)] - \mathbb E_{U\oplus i}^\pi[\mn_{\{1,2\}}(i)] \lesssim T\sqrt{T\epsilon^2/ M}.
$$
Subsequently, Eq.~(\ref{eq:reduction-avg-final}) can be lower bounded as
$$
\frac{C\epsilon}{2} - \frac{C\epsilon}{M2^{M+1}}\sum_{i=1}^M\sum_{U\subseteq[M]} O(T\sqrt{T\epsilon^2 M})
\geq \frac{C\epsilon}{2} - C\epsilon\times O(T\sqrt{T\epsilon^2/M}).
$$
Setting $\epsilon = c_0\sqrt{M/T}$ for some sufficiently small positive constant $c_0>0$, 
the above inequality is lower bounded by $\Omega(\epsilon T) = \Omega(\sqrt{M/T})$.
This completes the proof of Theorem \ref{thm:lower-bound}.
}





\newcolumntype{G}{>{\centering}p{0.04\textwidth}}
	
\begin{table}[!t]

\scriptsize
\centering
\caption{Median (\textsc{Med}) and Maximum (\textsc{Max}) accumulated regret (summation over $T$ periods) for various algorithms and under various model and parameter settings. The minimum regret for each case is highlighted using the bold font. \textsf{TS} stands for Thompson Sampling, and \textsf{Exp-Exp} stands for Explore-then-Exploit.}
\begin{tabular}{lGGcGGcGGcGGcGGcGGcGc}
\hline
& \multicolumn{2}{c}{$\delta=0$}& &\multicolumn{2}{c}{$\delta=10^{-3}$}& & \multicolumn{2}{c}{$\delta=5\times 10^{-3}$}& & \multicolumn{2}{c}{$\delta= 10^{-2}$}& & \multicolumn{2}{c}{$\delta=5\times 10^{-2}$}& & \multicolumn{2}{c}{\textsf{TS}}& & \multicolumn{2}{c}{\textsf{Exp-Exp}}\\
\cline{2-3}\cline{5-6}\cline{8-9}\cline{11-12}\cline{14-15}\cline{17-18}\cline{20-21}
$(M,N)$& \textsc{Med}& \textsc{Max}& & \textsc{Med}& \textsc{Max}& & \textsc{Med}& \textsc{Max}& & \textsc{Med}& \textsc{Max}& & \textsc{Med}& \textsc{Max}& & \textsc{Med}& \textsc{Max}& & \textsc{Med}& \textsc{Max}\\
\hline
\multicolumn{2}{l}{\emph{$T=100$:}}\\
(5,100) & 5.5 & 6.4 & & 5.5 & 6.0 & & 3.8 & \bf 4.1 & & \bf 3.2 & 4.3 & & 5.4 & 8.5 & & 6.4 & 10.2& & 6.4 & 18.8\\
(10,100) & 4.8 & 6.2 & & 5.4 & 5.5 & & 4.7 & 6.5 & & \bf 2.3 & \bf 3.9 & & 5.8 & 7.0 & & 6.7 & 11.7& & 6.6 & 25.3\\
(5,250) & 10.4 & 14.1 & & 9.8 & 12.0 & & 5.7 & 6.5 & & \bf 3.3 & \bf 3.4 & & 7.0 & 8.3 & & 6.1 & 12.1& & 6.6 & 22.3 \\
(10,250) & 10.8 & 12.0 & & 9.7 & 12.3 & & 5.5 & 7.4 & &\bf 3.0 & \bf 4.4 & & 5.1 & 8.7 & & 6.8 & 9.2& & 6.1 & 17.4\\
(5,1000) & 22.0 & 25.3 & & 16.0 & 18.2 & & 6.2 & 7.5 & &\bf  3.2 & \bf 5.0 & & 6.9 & 10.9 & & 6.1 & 10.1& & 5.5 & 21.6 \\
(10,1000) & 21.5 & 24.1 & & 15.1 & 17.7 & & 5.1 & 6.4 & & \bf 3.1 & \bf 4.9 & & 6.2 & 9.4 & & 6.3 & 9.1 & & 6.4 & 24.8\\
\\
\hline
\multicolumn{2}{l}{\emph{$T=500$:}}\\
(5,100) & \bf 14.3 & \bf 18.5 & & 18.3 & 22.6 & & 26.8 & 30.9 & & 31.9 & 35.3 & & 33.3 & 34.3 & & 32.6 & 42.7& & 25.6 & 88.9 \\
(10,100) &\bf  15.7 & 23.0 & & 16.5 &\bf 22.1 & & 28.4 & 28.9 & & 35.4 & 36.5 & & 35.0 & 36.5 & & 33.0 & 42.9& & 30.7 & 128.0\\
(5,250) & 14.2 & 17.3 & &\bf  12.7 & \bf 14.9 & & 16.4 & 18.4 & & 29.1 & 36.8 & & 32.6 & 34.2 & & 30.6 & 43.4 & & 26.2 & 105.1\\
(10,250) & 13.8 & \bf 15.9 & & \bf 13.0 & 17.4 & & 16.6 & 19.6 & & 29.2 & 35.0 & & 35.8 & 38.6 & & 33.2 & 39.5 & & 30.5 & 47.4\\
(5,1000) & 41.1 & 46.1 & & 22.7 & 25.7 & & \bf 14.1 & \bf17.3 & & 29.4 & 37.4 & & 33.0 & 35.8 & & 30.8 & 39.8& & 27.0 & 94.3\\
(10,1000) & 39.3 & 44.2 & & 21.0 & 27.2 & & \bf 13.7 & \bf18.7 & & 28.0 & 37.0 & & 35.7 & 41.5 & & 32.4 & 39.8\\
& &29.3 & 49.4\\
\hline
\multicolumn{2}{l}{\emph{$T=10000$:}}\\
(5,100) & 491.5 & 505.5 & & \bf 489.4 & \bf 496.5 & & 494.5 & 500.8 & & 503.1 & 511.8 & & 513.4 & 525.2  & & 579.9 & 672.2& & 538.9 & 904.7\\
(10,100) & 548.4 & 558.0 & & 548.6 & 552.9 & & \bf 529.3 &\bf 534.7 & & 538.2 & 544.3 & & 554.3 & 565.2 & & 618.0 & 706.7& & 572.4 & 883.1\\
(5,250) & 534.4 & 543.7 & & 529.7 & 543.9 & & 523.4 & 536.1 & & 519.7 & \bf 525.5 & & 526.1 & 532.2 & & 617.7 & 694.3& & \bf 477.9 & 970.9\\
(10,250) & 551.0 & 560.5 & & 554.5 & 563.3 & & \bf 547.4 & 555.2 & & 548.6 & \bf 555.1 & & 571.6 & 578.4 & & 642.1 & 714.4& & 558.6 & 928.6\\
(5,1000) & 669.0 & 704.4 & & 570.5 & 584.8 & & 538.8 & 552.7 & &  532.9 &\bf  541.3 & & 535.8 & 558.4 & & 621.6 & 671.8& & \bf 489.7 & 863.3\\
(10,1000) & 703.5 & 738.2 & & 613.1 & 633.6 & & 555.7 & 566.3 & & \bf 549.9 & \bf 559.5 & & 567.2 & 578.6 & & 646.6 & 697.4 & &560.7 & 911.0\\
\hline
\end{tabular}
\label{tab:main}
\end{table}

\begin{figure}[!t]
\centering
\includegraphics[width=0.48\linewidth]{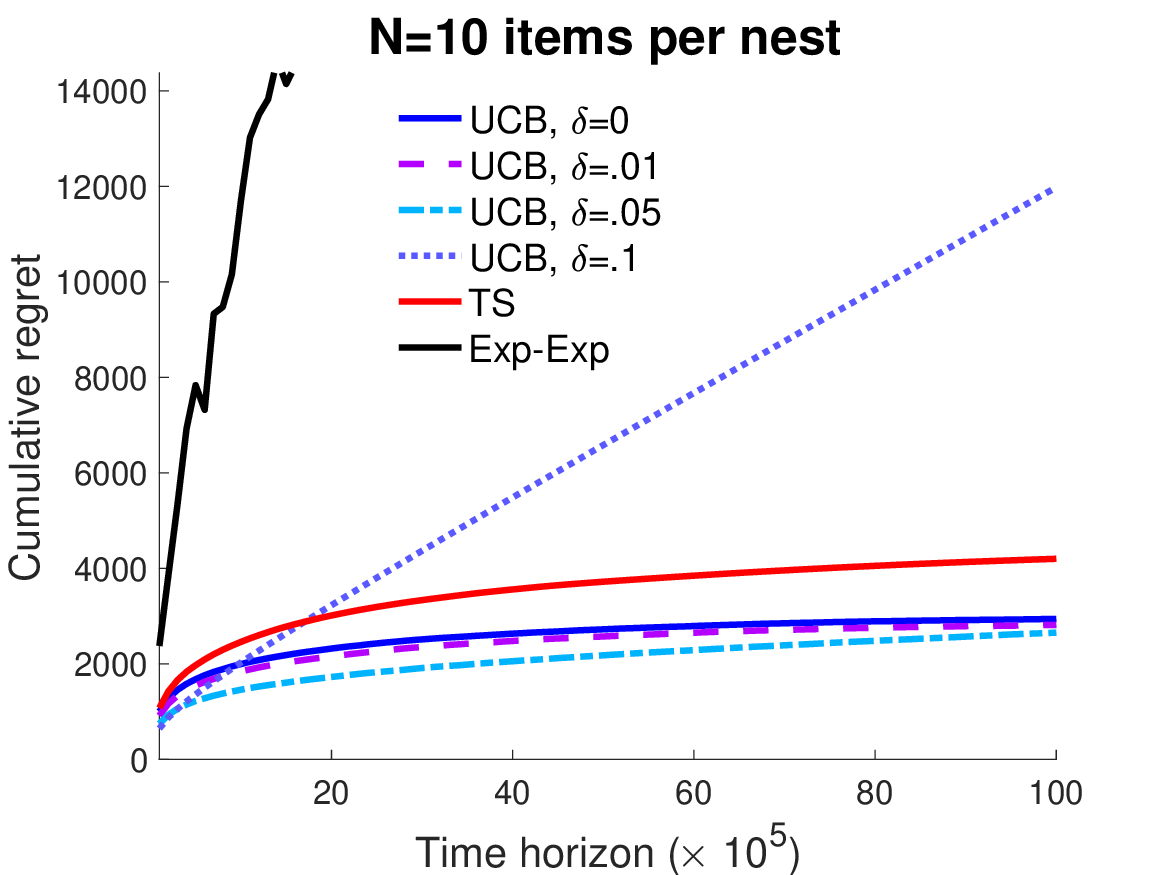}
\includegraphics[width=0.48\linewidth]{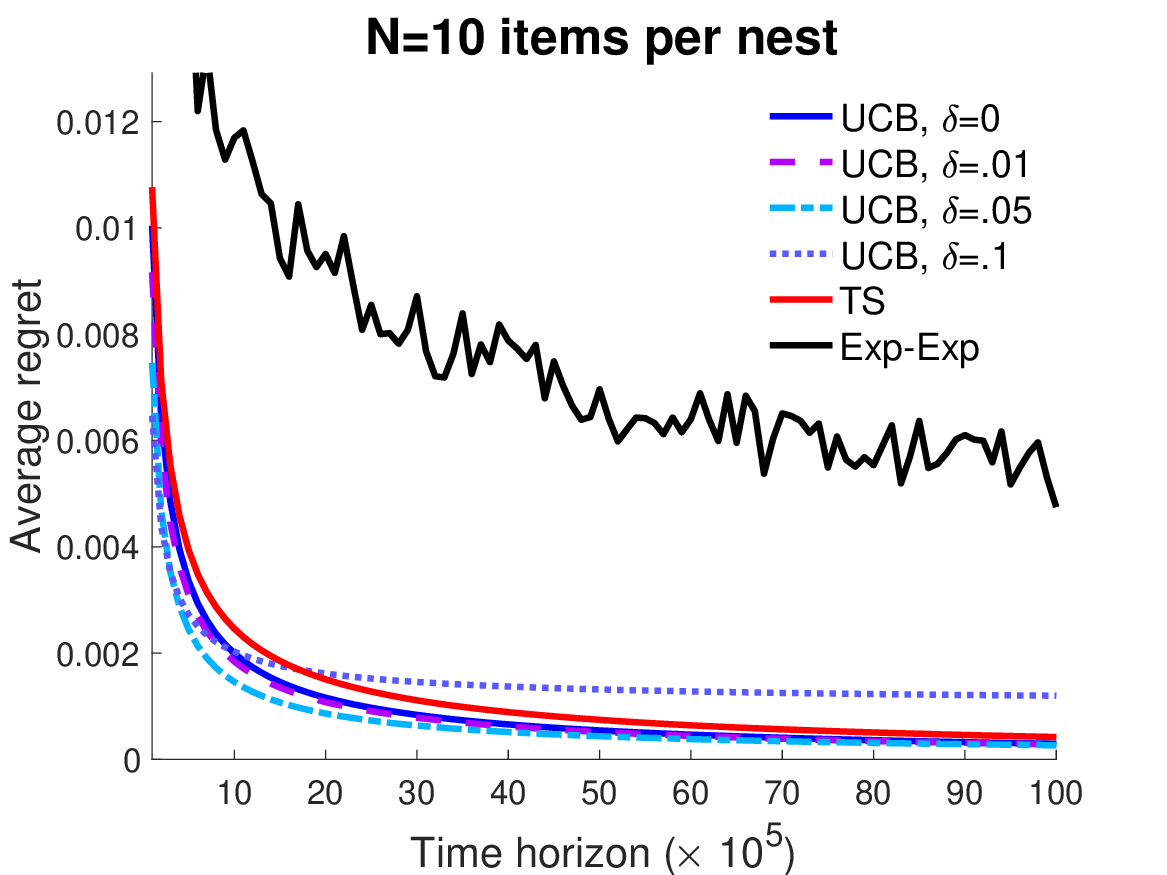}
\includegraphics[width=0.48\linewidth]{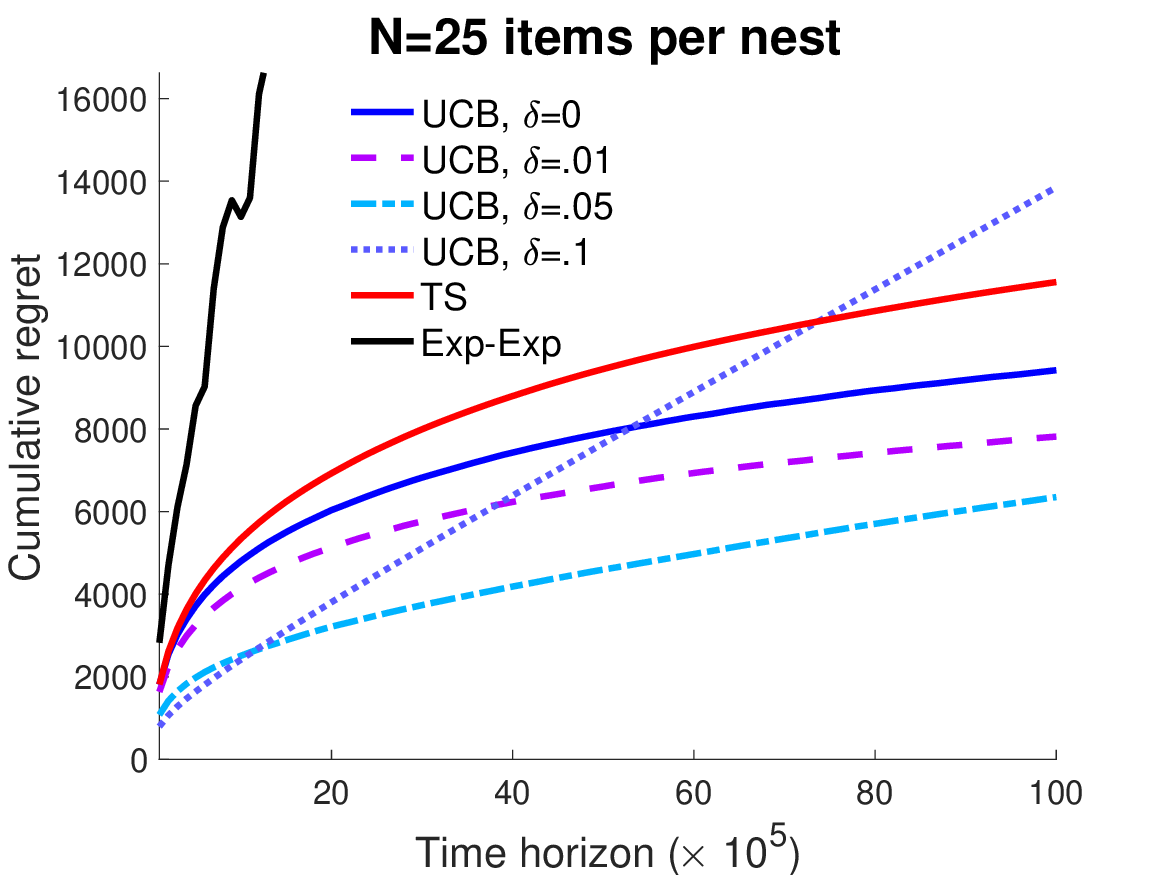}
\includegraphics[width=0.48\linewidth]{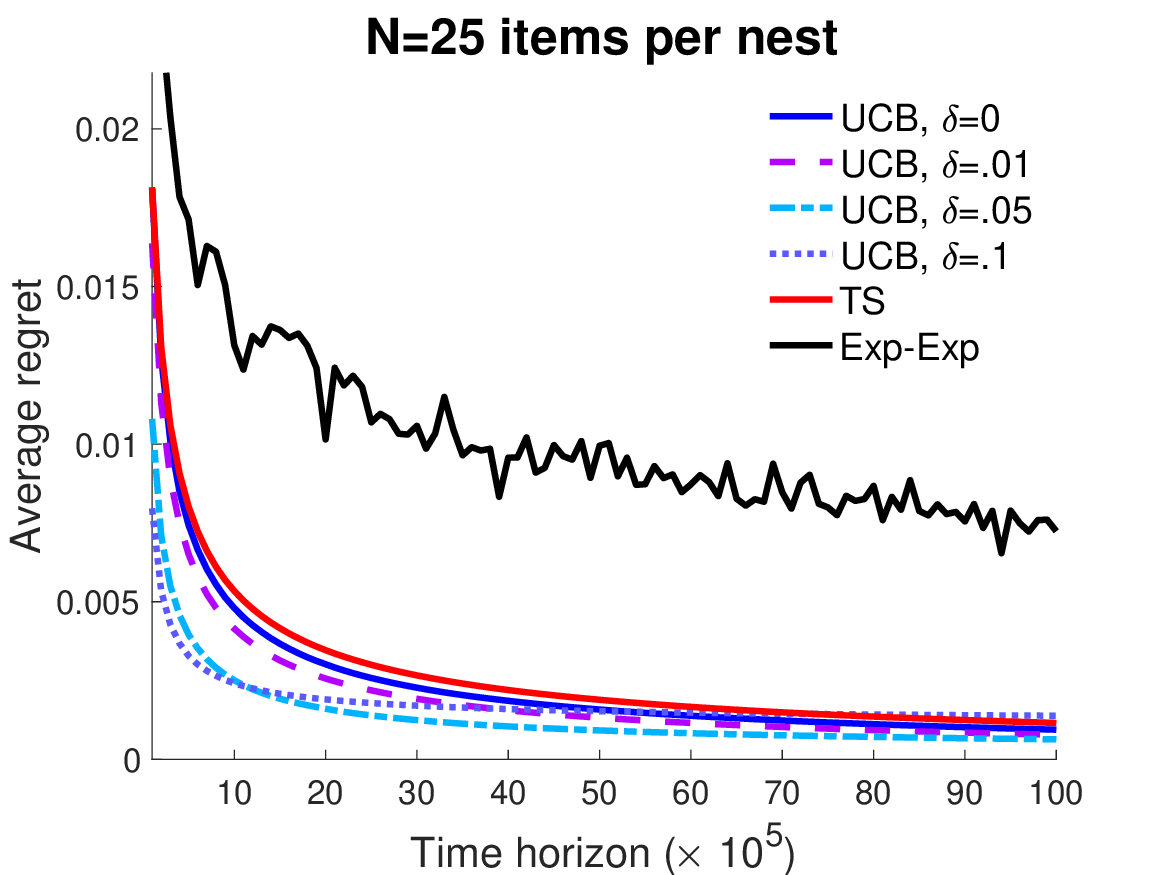}
\includegraphics[width=0.48\linewidth]{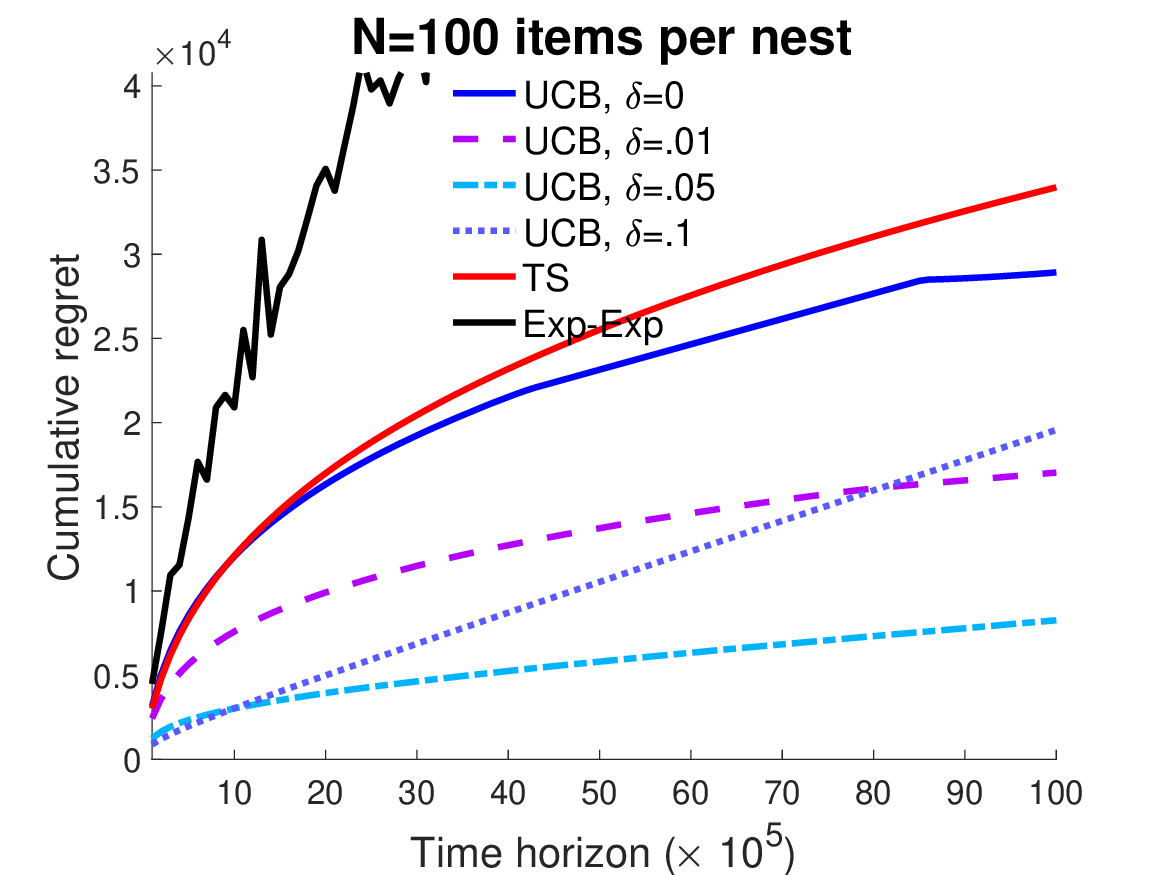}
\includegraphics[width=0.48\linewidth]{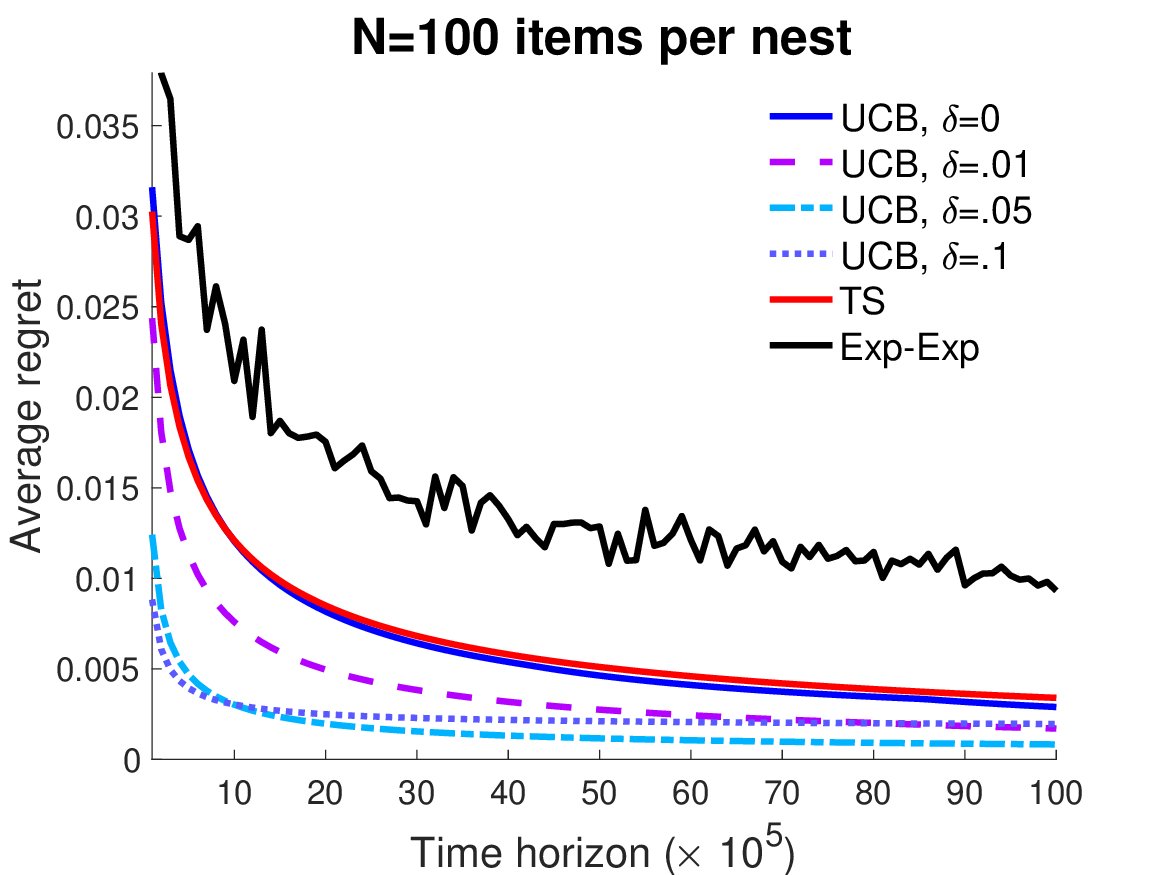}
\caption{Accumulated (left) and average (right) regret of our policy and competitive policies with $M=5$ nests, varying the number of items per nest $N$ and the granularity parameter $\delta$.
\textsf{TS} stands for the Thompson Sampling algorithm and \textsf{Exp-Exp} stands for the exploration-exploitation algorithm,
with details in the main text.}
\label{fig:trend}
\end{figure}

\section{Numerical results}
\label{sec:numerical}

We present  numerical studies of our proposed policies for dynamic nested assortment planning on synthetic data.
The main focus of our simulation is the regret of our policies under various model parameter settings of $M$, $N$, and $T$,
as well as the effect of the discretization granularity $\delta\in[0,1]$ on the regret.


	For each nest $i\in[M]$, the revenue parameters $\{r_{ij}\}_{j=1}^N$ are independently and identically distributed from the uniform distribution on $[0.2, 0.8]$
	and the preference parameters $\{v_{ij}\}_{j=1}^N$ are independently and identically distributed from the uniform distribution on $[10/N(M-1), 20/N(M-1)]$,
	where $N$ is the number of items in each nest.
	The nest discounting parameters $\{\gamma_i\}_{i=1}^M$ are independently and identically distributed from the uniform distribution on $[0.5,1]$.
	
	We consider the different combinations of parameters in terms of $M$ (the number of nests), $N$ (the number of items per nest), $T$ (time horizon length), and $\delta$ (the granularity parameter in the heuristic discretized policy). We note that  $\delta=0$ means 	that no discretization is carried out. 
	For each $(M,N)$ settings, we generate model parameters $\{r_{ij},v_{ij},\gamma_i\}_{i,j=1}^{M,N}$ as described in the previous paragraph,
	and then run the dynamic assortment policy for 100 independent trials.
	The median and maximum accumulated regret over $T$ periods  are reported.

In Table \ref{tab:main}, we compare the accumulated regret of our proposed policies with different granularity parameters $\delta$,
under a range of different parameter settings of number of nests $M$, number of items per nest $N$, and time horizon $T$. We also compare the performances of our algorithms with some competitive algorithm baselines, such as the Thompson Sampling (\textsf{TS}) algorithm and the Explore-then-Exploit (\textsf{Exp-Exp}) algorithm. {The \textsf{TS} algorithm works similar to our UCB algorithm (with $\delta = 0$), where the level-set assortments are first generated for each nest and the algorithm learns and optimizes the assortment while maintaining Beta priors for the aggregate parameters, as suggested in \citep{Agrawal17Thompson}. In the \textsf{Exp-Exp} algorithm, the level-set assortments are also first generated. However, the exploration phase (learning the aggregate parameters) and the exploitation phase ( exploiting the estimated optimal assortment) are separated.}

In Figure  \ref{fig:trend}, we further plot the accumulated and average regret of our policies for time horizon when $T$ is large ($T$ between $10^5$ and $10^7$).
 From both Table \ref{tab:main} and Figure  \ref{fig:trend}, one can see a clear pattern of sub-linear accumulated regret. Moreover, when $N$ is small as compared to $T$,  a smaller discretization granularity leads to better empirical performance; while when $N$ is large, a larger discretization granularity is better. We can observe from both Table \ref{tab:main} and Figure \ref{fig:trend} that our algorithm with the appropriate granularity level $\delta$ consistently outperforms
 the two baseline methods.
The \textsf{TS} algorithm performs similarly to our UCB algorithm with $\delta = 0$. It is also possible to combine the \textsf{TS} algorithm with our discretization heuristic (i.e., setting $\delta$ to be a positive value), and we would expect the similar performance as its UCB counterpart with the same discretization parameter $\delta$. For simplicity and interpretability of the figures, we omit those performance curves.

 We also remark that, because of the inherent instability of the \textsf{Exp-Exp} algorithm (as a consequence of the fact that
 \textsf{Exp-Exp} commits to a \emph{single} assortment for the majority of $T$ time periods),
 the curve for \textsf{Exp-Exp} displayed in Figure \ref{fig:trend} is much more wiggly compared to the other algorithms that are more stable
 with smaller variance.

{
One principle for choosing an appropriate value for $\delta$ is based on the time horizon $T$. When $T$ is larger, the inherent bias could result in a $\delta T$ cumulative regret that is linear with $T$, which is typically reflected by the UCB with $\delta = .1$ curves in all settings of Figure~\ref{fig:trend}. (For large enough $T$, a linear growth can also be observed for some of the UCB with $\delta = 0.05$ curves.) Therefore, in the long term, a smaller $\delta$ would reduce the negative impact to the cumulative regret. However, when $T$ is smaller, a larger $\delta$ means less amount of aggregate parameters to learn, and therefore the algorithm benefits from a quick start. This is clearly reflected in the $N = 25$ setting of Figure~\ref{fig:trend} where the UCB with $\delta = .1$ curve enjoys the lowest regret for small $T$, and gradually loses its advantages as $T$ increases. For future directions, it is a very interesting question to study how to appropriately (and maybe even dynamically) set the values of $\delta$ to derive a better theoretical regret bound.

We note that a linear growth of the regret with $T$ would only occur when the algorithm fails to recover the optimal assortment after the discretization process. On the other hand, if $\delta$ is set to be a small enough value such that the optimal assortment can be found even after the discretization process, such a linear growth of the regret will not occur. In Table~\ref{tab:percentage}, we report the percentage of instances in the corresponding settings of Figure~\ref{fig:trend}, where the optimal assortment can be recovered after discretization. We note that a small percentage value corresponds to a linear growth curve in Figure~\ref{fig:trend}.

\begin{table}[!t]

\centering
\caption{Percentage of the instances in the settings of Figure~\ref{fig:trend} where the optimal assortment can be recovered after discretization.} \label{tab:percentage}
\begin{tabular}{l|cccc}
\hline
			& $\quad\delta = 0\quad$	& $\quad\delta = .01\quad$	& $\quad\delta = .05\quad$	& $\quad\delta = .1\quad$\\
\hline
$N=10$		& $100\%$	& $99\%$			& $66\%$			& $36\%$	 \\
$N=25$		& $100\%$	& $99\%$			& $28\%$			& $2\%$	 \\
$N=100$		& $100\%$	& $99\%$			& $1\%$			& $0\%$	\\
\hline
\end{tabular}
\end{table}
}

We also remark that, when $N$ is small, the gap between two level-set assortments in each nest is potentially large and therefore the bias resulting
from a large $\delta$ value could also be large.
This means that when $N$ is small, giving rise to only a few ``level-set'' assortments,
$\delta$ cannot be set too large because otherwise the optimal level-set assortment might be missed
because of the large gap between integer multiples of $\delta$s.
On the other hand, when $N$ is large, even if $\delta$ is big the potential bias introduced by discretization could still be smaller,
because there might be a ``level-set'' assortment close to every levels of $i\delta$, $i\in\mathbb N$, at the discretization level of $\delta$.
This means that discretization at a larger value of $\delta$ is potentially more beneficial when $N$ is large,
because little additional bias is increased but the number of ``level-set'' assortments to be considered is significantly fewer when $\delta$ is large.

\begin{figure}[!t]
\centering
\includegraphics[width=0.48\linewidth]{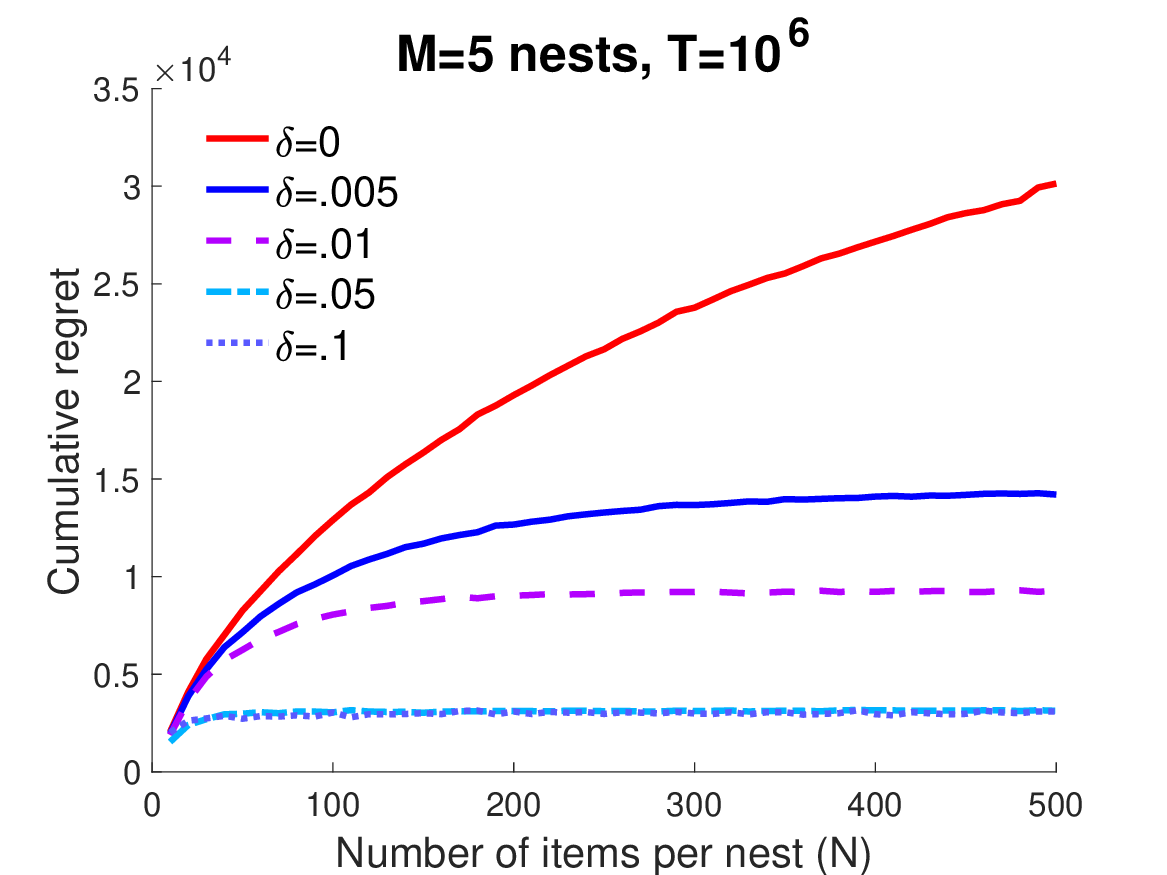}
\includegraphics[width=0.48\linewidth]{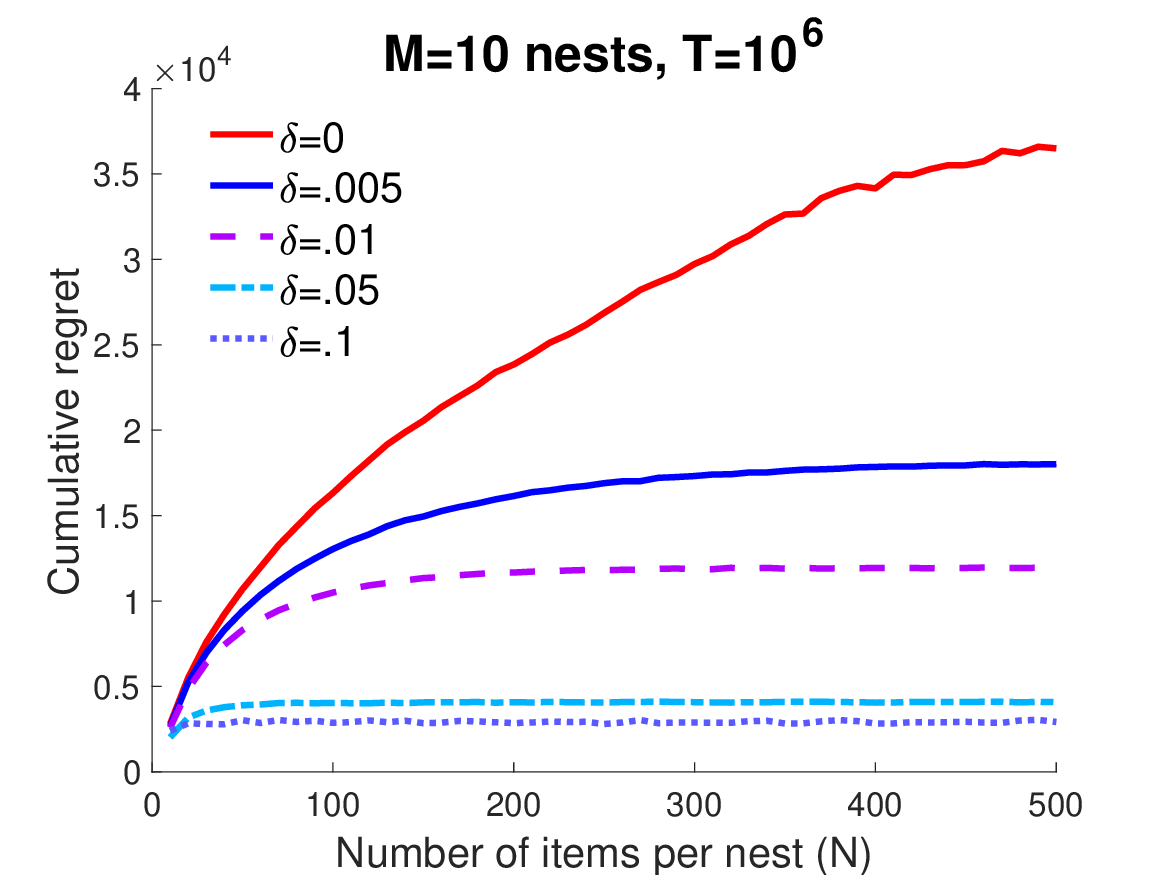}
\caption{Cumulative regret of our algorithm with varying number of products per nest ($N$),
at different levels of discretization granularity ($\delta$).}
\label{fig:varn}
\end{figure}

{
Finally, in Figure \ref{fig:varn} we compare the cumulative regret of our proposed UCB algorithm (at different levels of discretization granularity $\delta$)
by holding $M,T$ fixed and varying the number of products per nest ($N$).
As we observe from Figure \ref{fig:varn}, when $\delta=0$ (i.e., no discretization carried out),
the cumulative regret of our algorithm does scale on the order of $O(\sqrt{N})$ with $N$, suggesting that our upper bound results in
Theorem \ref{thm:ucb} and Corollary \ref{cor:regret-preliminary} are tight.
Figure \ref{fig:varn} also shows that with larger discretization granularity level $\delta$,
the regret of the proposed UCB algorithm scales more mildly with increasing number of products per nest $N$.
}

{
\subsection{Experiments following the setting in \cite{davis2014assortment}}\label{sec:davis-experiment}

In this subsection, we report the simulation results on a set of different classes of the synthetic problem instances. The synthetic problem instances are generated similar as described in \citep{davis2014assortment}. The instance is parameterized by $\epsilon \in (0, 1)$. For each nest $i \in [M]$, we first generate the nest discounting parameter $\gamma_i$ independently from the uniform distribution $[0.5, 1]$. We then generate the first $(N - 1)$ items as follows. For each $j \in [N - 1]$, we independently sample $U_{ij}$   from the uniform distribution over $[0, 4]$, $X_{ij}$  from the uniform distribution over $[0.1, 1]$, and $Y_{ij}$  from the uniform distribution over $[0.01, 0.1]$, and set $r_{ij} = \epsilon^{U_{ij}} \cdot X_{ij}$ and $v_{ij} = \epsilon^{2-U_{ij}} \cdot Y_{ij}$. Finally, we let $r_{iN} = 0$ and $v_{iN}= \epsilon^{-1} \cdot Y_{iN}$ where $Y_{iN}$ is also independently and uniformly sampled from $[0.01, 0.1]$. We note that the main differences between our generating procedure and that of \citep{davis2014assortment} are that the range of $X_{ij}$ is $[1, 10]$ and the range of $Y_{ij}$ is $[0.2, 1.8]$ in  \citep{davis2014assortment}. While the differences for $X_{ij}$ only affect the revenue parameters $\{r_{ij}\}$ (and therefore the revenues of all candidate assortments) up to a scaling factor, the differences for $Y_{ij}$ are because that weight of the no-purchase option is set to $10$ in  \citep{davis2014assortment}, but normalized to $1$ in our paper. Considering the typical value of the discounting parameter $\gamma_i$ (which is $\sim 0.75$), we therefore correspondingly reduce the range of $Y_{ij}$ by $\sim 10^{1/0.75} \approx 20$ times for the normalization purpose. We set $M = 5$ and pick $(\epsilon, N)$ from $\{0.6, 0.4\} \times \{25, 100\}$ to generate four classes of the problem instances. 

In Figure \ref{fig:davis-experiment}, we report the performance of our algorithms and the two baseline algorithms \textsf{TS} and \textsf{Exp-Exp} in these four settings. As one can observe from Figure \ref{fig:davis-experiment}, the comparison between our algorithms and the baseline algorithms are similar to the settings in Figure~\ref{fig:trend}, with one difference that the \textsf{TS} algorithm performs better than our UCB algorithm with $\delta = 0$ (and sometimes our UCB algorithm with $\delta = 0.01$). 

\begin{figure}[t]
\centering
\includegraphics[width=0.48\linewidth]{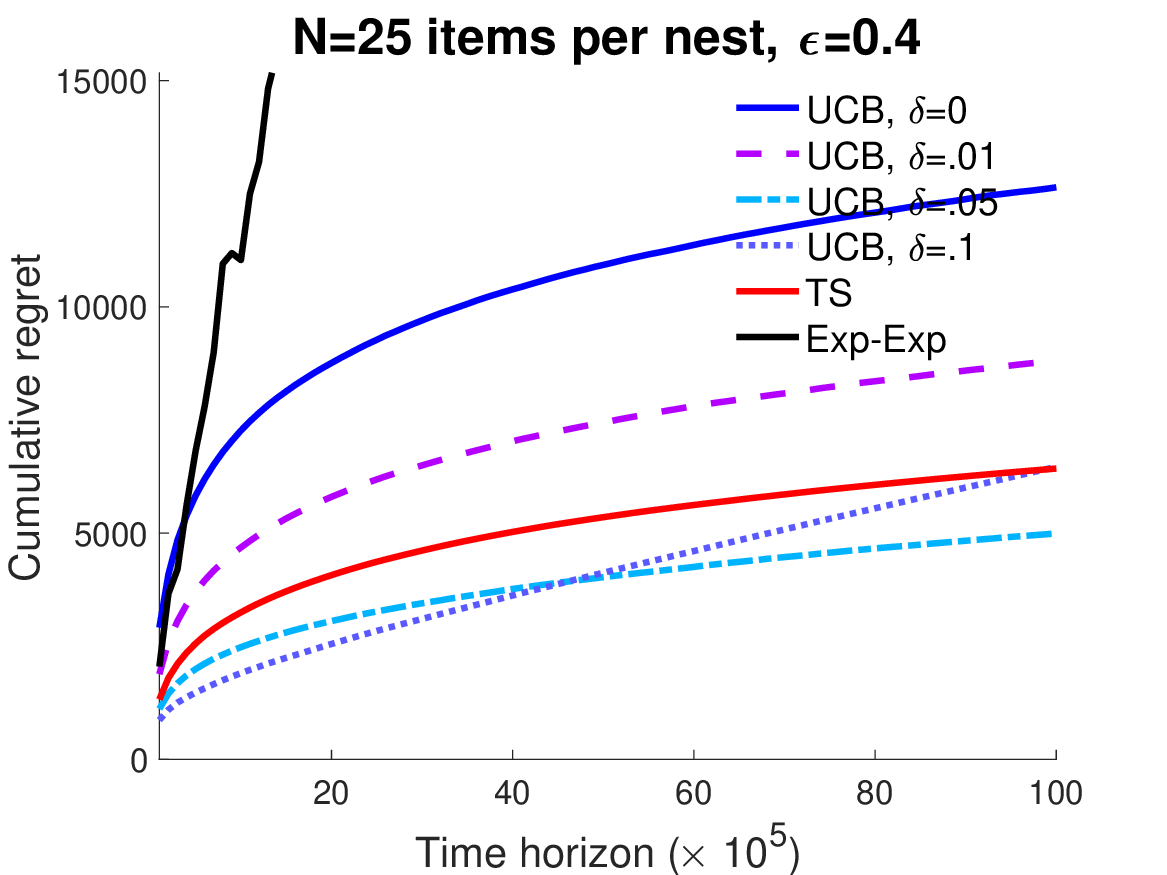}
\includegraphics[width=0.48\linewidth]{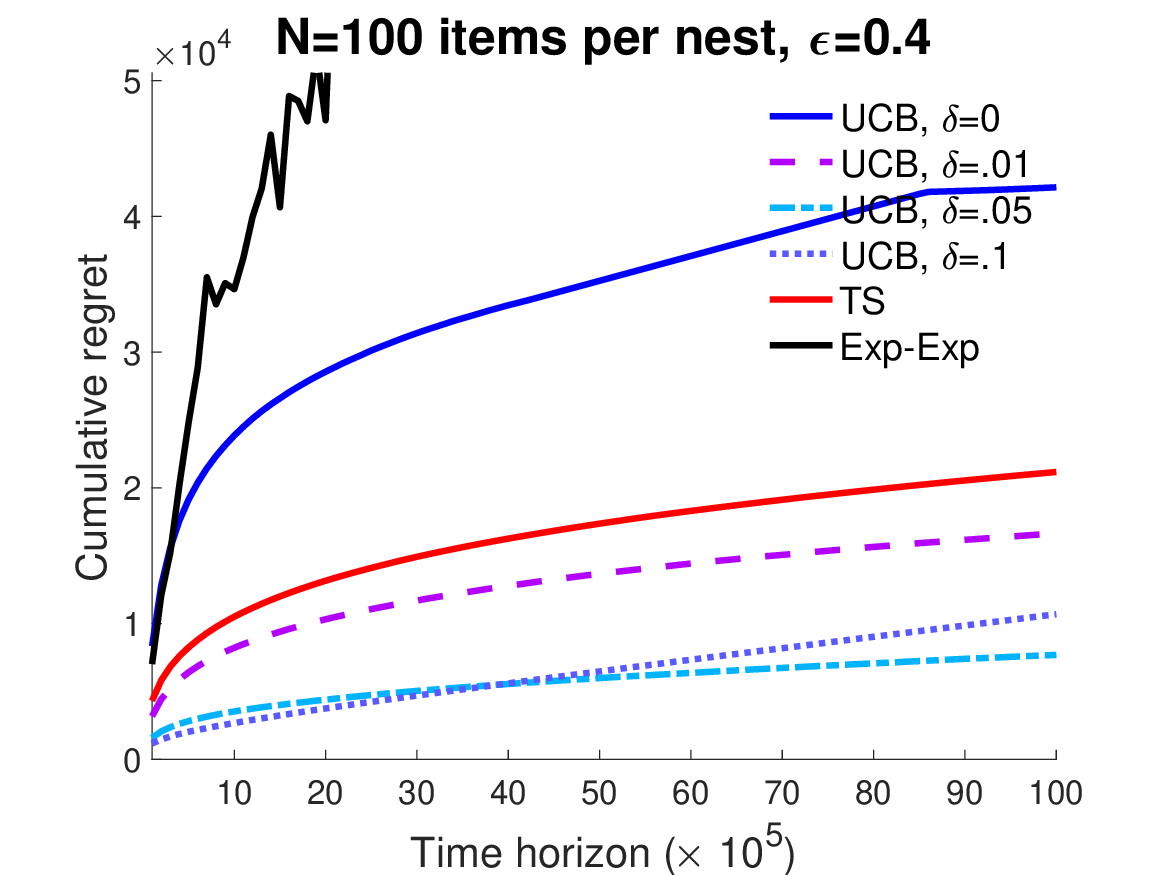}
\includegraphics[width=0.48\linewidth]{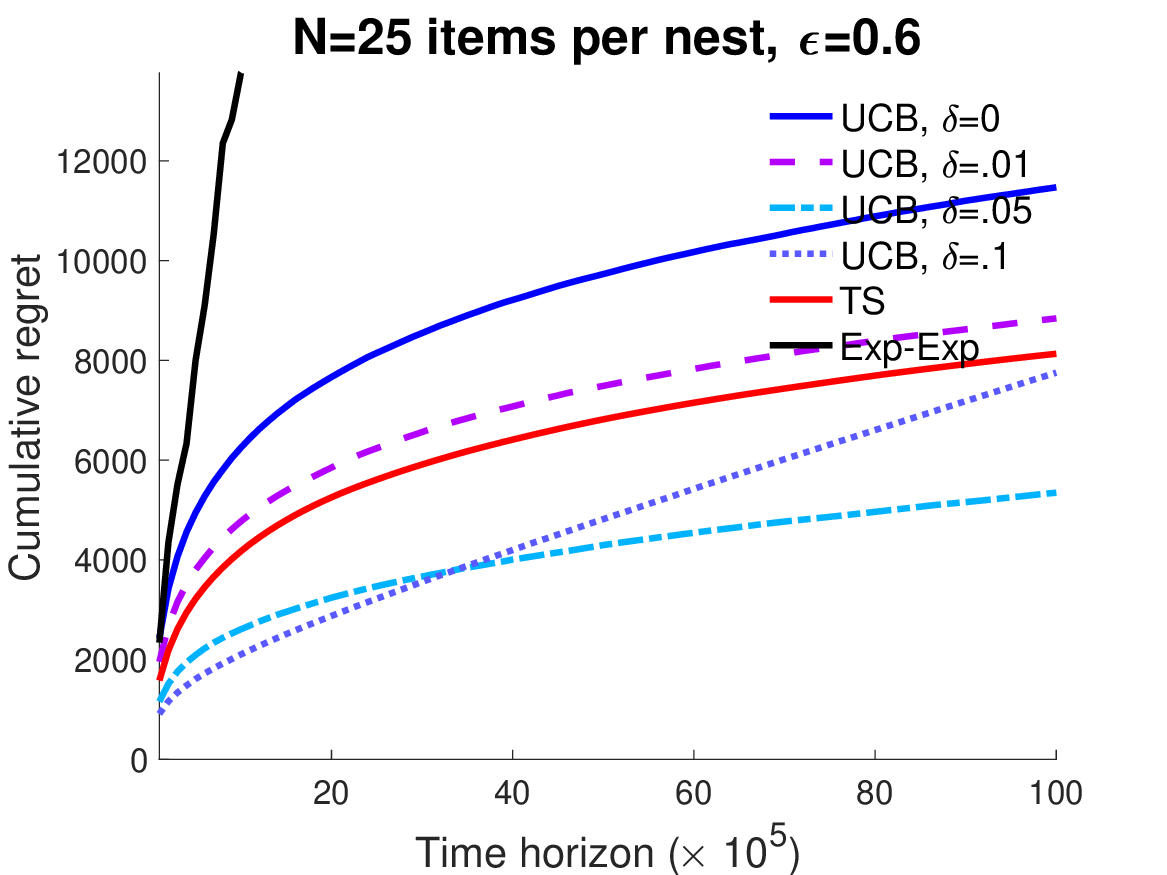}
\includegraphics[width=0.48\linewidth]{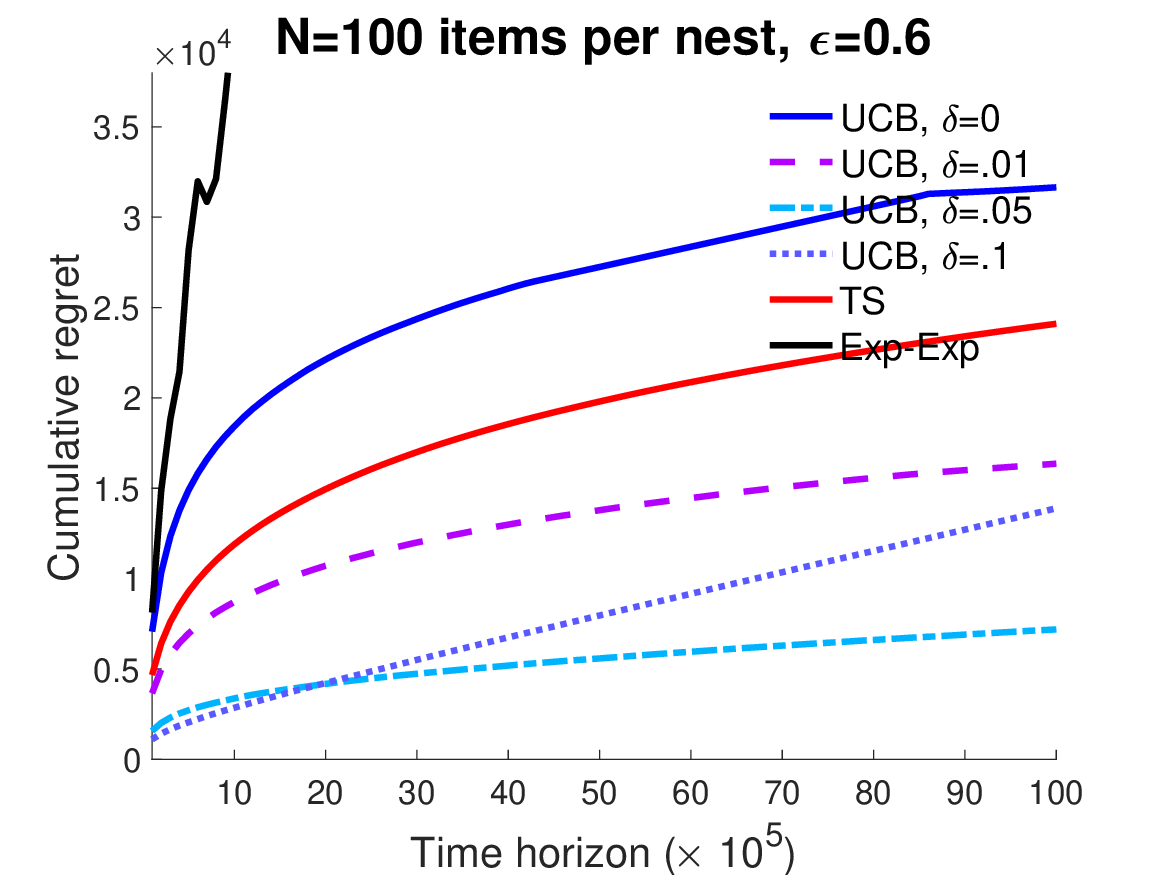}
\caption{Cumulative regret of our algorithm and other comparative methods for experiments outlined in Sec.~\ref{sec:davis-experiment}.
$\epsilon$ is a parameter used in generating problem instances, which is described in further details in the main text.}
\label{fig:davis-experiment}
\end{figure}

}

\section{Conclusions}
\label{sec:conclusions}
	
In this paper, we consider the dynamic assortment planning problem under the nested logit models and we propose the UCB policy to achieve $\tilde{O}(\sqrt{MNT} + MN^2)$ accumulative regret. We also propose the discretization heuristic that shows the improved empirical performance.


	
There are several interesting future directions of the current work. The first technical problem is to investigate the dependency of $N$ in the lower bound. 
Second, it is interesting to further extend the current two-level nested logit models to variants of nested models (e.g., constrained nested logit models \citep{Gallego2014}, $d$-level nested logit models \citep{Li2015d}). Third, the dynamic assortment planning is a relatively new topic in revenue management and the understanding of this problem is still limited. Therefore, most existing work (including this paper) focuses on the stylized models where the assortment is the only decision variable. One potential future work is to  incorporate other operational decisions and constraints, such as prices and inventory constraints.

\section*{Acknowledgment}
We would like to thank the Department Editor, the Senior Editor, and the two referees for their constructive suggestions that greatly improve the paper.  Xi Chen is supported by the NSF Grant via IIS-1845444. Chao Shi would like to thank the National Natural Science Foundation of China (Grant 71501043).

\bibliographystyle{informs2014} 
\bibliography{refs} 

\ECSwitch


\ECHead{Proofs of Statements}
	
\section{Proof of Lemma \ref{lem:negative-binomial}}
\begin{repeatlemma}[Lemma \ref{lem:negative-binomial} (restated).]
		For each epoch $\mathcal E_\tau$ and nest $i\in [M]$, let $\hat\theta_i\in\mathcal K_i$.
		The expectations of the number of iterations and total revenues collected in which nest $i$ is purchased (denoted by $\hat n_{i,\tau}$ and $\hat r_{i,\tau}$ respectively in Algorithm \ref{alg:ucb}) satisfies
		the following \emph{regardless of the other offered assortments $\hat\theta_{i'}$ for $i'\neq i$}:
		\begin{enumerate}
			\item $\mathbb E[\hat n_{i,\tau}] = u_{i,\hat\theta_i}$;
			\item $\mathbb E[\hat r_{i,\tau}|\hat n_{i,\tau}] = \hat n_{i,\tau}\phi_{i,\hat\theta_i}$;
		\end{enumerate}
\end{repeatlemma}
\proof{Proof of Lemma \ref{lem:negative-binomial}}
	Simple calculations show that (see for example Corollary A.1 of \cite{Agrawal16MNLBandit})
	\begin{equation}
	\Pr\left[\hat n_{i,\tau}=k\right] = \left(\frac{u_{i,\hat\theta_i}}{1+u_{i,\hat\theta_i}}\right)^k\left(\frac{1}{1+u_{i,\hat\theta_i}}\right) \;\;\;\;\;\;\;\text{for}\;\; k=0,1,2,\cdots
	\end{equation}
	That is, $\hat n_{i,\tau}$ is a geometric random variable with parameter $1/(1+u_{i,\hat\theta_i})$.
	Hence, $\hat n_{i,\tau}$ is an \emph{unbiased} estimator of $u_{i,\hat\theta_i}$, meaning that $\mathbb E\hat n_{i,\tau}= u_{i,\tau}$.

	The distribution and expectation of $\hat r_{i,\tau}$ can be similarly derived, using the property that $\mathbb E[\hat r_{i,\tau}|i_t=i] = \phi_{i,\hat\theta_i}$.
	
\section{Proof of Lemma \ref{lem:binary-search}}
	\begin{repeatlemma}[Lemma \ref{lem:binary-search} (restated).]
	The following hold for all $\lambda\in[0,1]$:
	\begin{enumerate}
	\item If $\bar R^*\geq\lambda$ then there exists $\vct\theta\in\mathcal K_1\times\cdots\times\mathcal K_M$ such that $\psi_\lambda(\vct\theta)\geq\lambda$;
	furthermore if $\bar R^*>\lambda$ then the inequality is strict;
	\item If $\bar R^*\leq\lambda$ then for all $\vct\theta\in\mathcal K_1\times\cdots\times\mathcal K_M$, $\psi_\lambda(\vct\theta)\leq\lambda$;
	furthermore if $\bar R^*<\lambda$ then the inequalities are strict.
	\end{enumerate}
	\end{repeatlemma}
	\proof{Proof of Lemma \ref{lem:binary-search}}
	Let $\vct\theta^*=(\theta_1^*,\cdots,\theta_M^*)\in\mathcal K_1\times\cdots\times\mathcal K_M$ be a maximizer of $\bar R'$ (i.e., $\bar R^*=\bar R'(\vct\theta^*)$).
	By definition, $\sum_{i=1}^M(\bar\phi_{i,\theta_i^*}-\bar R^*)\bar u_{i,\theta_i^*} =\bar R^*$.
	If $\bar R^*\geq \lambda$, then $\sum_{i=1}^M(\bar\phi_{i,\theta_i^*}-\lambda)\bar u_{i,\theta_i^*}\geq \sum_{i=1}^M(\bar\phi_{i,\theta_i^*}-\bar R^*)\bar u_{i,\theta_i^*} = \bar R^*\geq\lambda$.
	Therefore $\psi_{\lambda}(\vct\theta^*)\geq\lambda$.
	Furthermore, if $\bar R^*>\lambda$ then the last inequality in the chain of inequalities is strict.
	The first property is thus proved.
	
	We next prove the second property.
	Assume by way of contradiction that there exists $\vct\theta=(\theta_1,\cdots,\theta_M)\in\mathcal K_1\times\cdots\times\mathcal K_M$ such that $\psi_\lambda(\vct\theta)>\lambda$,
	meaning that $\sum_{i=1}^M(\bar\phi_{i,\theta_i}-\lambda)\bar u_{i,\theta_i} > \lambda$.
	Re-arranging terms and dividing both sides by $(1+\sum_{i=1}^M\bar u_{i,\theta_i})$ we have $\bar R'(\vct\theta) = [\sum_{i=1}^M\bar\phi_{i,\theta_i}\bar u_{i,\theta_i}]/[1+\sum_{i=1}^M\bar u_{i,\theta_i}] > \lambda$.
	This contradicts the assumption that $\bar R^*=\max_{\vct\theta\in\mathcal K_1\times\cdots\times\mathcal K_m} R'(\vct\theta) \leq \lambda$.
	To prove the second half of the second property, simply replace all occurrences of $>$ by $\geq$.
	
\section{Proof of Lemma \ref{lem:new-nested-concentration}}
\begin{repeatlemma}[Lemma \ref{lem:new-nested-concentration} (restated).]
		Suppose $T(i,\theta)\geq 96\ln(2MTK)$.
		With probability $1-T^{-1}$ uniformly over all $i\in[M]$, $\theta\in\mathcal K_i$ and $t\in[T]$
		\begin{align}
		\big|\hat u_{i,\theta}-u_{i,\theta}\big| &\leq\min\left\{U, 3\sqrt{\frac{48\max(\hat u_{i,\theta},\hat u_{i,\theta}^2)\ln(2MTK)}{T(i,\theta)}} +\frac{144\ln(2MTK)}{T(i,\theta)} \right\};\label{seq:u_ci}\\
		\big|\hat\phi_{i,\theta}-\phi_{i,\theta}\big|&\leq\min\left\{1, \sqrt{\frac{\ln(2MTK)}{T(i,\theta)\hat u_{i,\theta}}}\right\}.\label{seq:phi_ci}
		\end{align}
		In addition, if $u_{i,\theta}\geq 1$ then $\hat u_{i,\theta}\in[0.5u_{i,\theta},2u_{i,\theta}]$.
\end{repeatlemma}
\proof{Proof of Lemma \ref{lem:new-nested-concentration}}
We first prove the upper bound on $|\hat u_{i,\theta}-u_{i,\theta}|$ for fixed $i\in[M]$ and $\theta\in\mathcal K_i$.

\paragraph{Case 1: $u_{i,\theta}\leq 1$.} Let $\delta>0$ be a parameter to be specified later.
{ Applying Lemma \ref{lem:geometric} and noting that $u_{i,\theta}\leq 1$ implies $\mu\leq 1$ and $(1+\mu)^2\leq 4$
in Lemma \ref{lem:geometric},}
 we have
 {
\begin{align*}
\Pr\left[|\hat u_{i,\theta}-u_{i,\theta}| > \delta u_{i,\theta}\right]
&\leq \exp\left\{-\frac{ n u_{i,\theta}\delta^2}{2(1+\delta)\times 4}\right\} + \exp\left\{-\frac{n u_{i,\theta}\delta^2}{6\times 4}\left(3-\frac{2\delta u_{i,\theta}}{1+u_{i,\theta}}\right)\right\}\\
&\leq \exp\left\{-\frac{nu_{i,\theta}\delta^2}{16\max(1,\delta)}\right\} + \exp\left\{-\frac{nu_{i,\theta}\delta^2}{24}(3-2\delta u_{i,\theta})\right\}.
\end{align*}
}

Suppose in addition that {$\delta u_{i,\theta}\leq 1$}. Then
\begin{equation*}
\Pr\left[|\hat u_{i,\theta}-u_{i,\theta}| > \delta u_{i,\theta}\right]
\leq 2\exp\left\{-\frac{nu_{i,\theta}\delta^2}{24}\right\}.
\end{equation*}

Equating the right-hand side of the above inequality with $1/MKT^2$ we have,
\begin{equation*}
\delta = \sqrt{\frac{48\ln(2MTK)}{u_{i,\theta}T_{i,\theta}}}.
\end{equation*}
 Applying the union bound over all $i\in[M]$, $\theta\in\mathcal K_i$ and $t\in[T]$, we have with probability $1-T^{-1}$ that
\begin{equation}
\big|\hat u_{i,\theta}-u_{i,\theta}\big| \leq \delta u_{i,\theta} \leq \sqrt{\frac{48u_{i,\theta}\ln(2MTK)}{T(i,\theta)}}. 
\label{eq:u-true-ci-part1}
\end{equation}

{Note that if $T(i,\theta)\geq 48\ln(2MTK)$ the condition  $\delta u_{i,\theta}\leq 1$ are met.}
Replacing all occurrences of $u_{i,\theta}$ in Eq.~(\ref{eq:u-true-ci-part1}) by $\hat u_{i,\theta}$
and using the fact that $\sqrt{a+b}\leq \sqrt{a}+\sqrt{b}$, we have
{
\begin{align}
\big|\hat u_{i,\theta}-u_{i,\theta}\big|
&\leq  \sqrt{\frac{48\hat u_{i,\theta}\ln(2MTK)}{T(i,\theta)}} + \sqrt{\frac{48|\hat u_{i,\theta}-u_{i,\theta}|\ln(2MTK)}{T(i,\theta)}}\nonumber\\
&\leq \sqrt{\frac{48\hat u_{i,\theta}\ln(2MTK)}{T(i,\theta)}} + \sqrt{\frac{48\ln(2MTK)}{T(i,\theta)}}\times\sqrt{\frac{48 u_{i,\theta}\ln(2MTK)}{T(i,\theta)}} \nonumber\\
&\leq \sqrt{\frac{48\hat u_{i,\theta}\ln(2MTK)}{T(i,\theta)}} + \frac{48\ln(2MTK)}{T(i,\theta)},
\label{eq:u-ci-part1}
\end{align}
where the last inequality holds because $u_{i,\theta}\leq 1$.
}

\paragraph{Case 2: $u_{i,\theta}>1$.}
Let $\delta\in(0,1]$ be a parameter to be specified later.
{Applying Lemma \ref{lem:geometric} and noting that $u_{i,\theta}>1$ implies $\mu\geq 1$ and $(1+\mu)^2\leq 4\mu^2$
in Lemma \ref{lem:geometric},}
 we have
{
\begin{align*}
\Pr\left[|\hat u_{i,\theta}-u_{i,\theta}| > \delta u_{i,\theta}\right]
&\leq \exp\left\{-\frac{T(i,\theta)u_{i,\theta}^2\delta^2}{6\times 4u_{i,\theta}^2}\left(3-\frac{2u_{i,\theta}\delta}{1+u_{i,\theta}}\right)\right\} + \exp\left\{-\frac{T(i,\theta)u_{i,\theta}^2\delta^2}{2\times 4u_{i,\theta}^2}\right\}\\
&\leq \exp\left\{-\frac{T(i,\theta)\delta^2}{24}\left(3-2\delta\right)\right\} +  \exp\left\{-\frac{T(i,\theta)\delta^2}{8}\right\}\\
&\leq 2\exp\{-T(i,\theta)\delta^2/12\},
\end{align*}
where the last inequality holds under the condition that $\delta\leq 0.5$, which we will justify later in the proof.
}

Equating the right-hand side of the above inequality with $1/MKT^2$ we have
\begin{equation*}
\delta = \sqrt{\frac{24\ln(2MTK)}{T(i,\theta)}}
\end{equation*}
and applying the union bound over all $i\in[M]$, $\theta\in\mathcal K_i$ and $t\in[T]$, with probability $1-T^{-1}$,
\begin{equation}
\big|\hat u_{i,\theta}-u_{i,\theta}\big| \leq \delta u_{i,\theta} \leq \sqrt{\frac{24 u_{i,\theta}^2\ln(2MTK)}{T(i,\theta)}}.
\label{eq:u-true-ci-part2}
\end{equation}

Note that $\delta\leq 0.5$ holds if $T(i,\theta)\geq 96\ln(2MTK)$.
In addition, if $T(i,\theta)\geq 96\ln(2MTK)$ we have $|\hat u_{i,\theta}-u_{i,\theta}|\leq 0.5u_{i,\theta}$ and hence
 $\hat u_{i,\theta}\geq 0.5u_{i,\theta}$. Subsequently, Eq.~(\ref{eq:u-true-ci-part2}) implies
 \begin{equation}
 \big|\hat u_{i,\theta}-u_{i,\theta}\big| \leq \sqrt{\frac{96\hat u_{i,\theta}^2\ln(2MTK)}{T(i,\theta)}}.
 \label{eq:u-ci-part2}
 \end{equation}

Finally, combining Eqs.~(\ref{eq:u-ci-part1},\ref{eq:u-ci-part2}) we proved the upper bound on $|\hat u_{i,\theta}-u_{i,\theta}|$.

We next prove the upper bound on $|\hat\phi_{i,\theta}-\phi_{i,\theta}|$.
Recall that for each $\tau\in\mathcal T(i,\theta)$, $\hat r_{i\tau}$ is the sum of $\hat n_{i\tau}$ i.i.d.~random variables with mean $\phi_{i,\theta}$ and within range
$[0,1]$ almost surely.
Also note that $\sum_{\tau'\in\mathcal T(i,\theta)}\hat n_{i,\tau'} = T(i,\theta)\hat u_{i,\theta}$.
Applying Hoeffding's inequality (Lemma \ref{lem:hoeffding}) we have for any $\delta>0$ that
\begin{equation*}
\Pr\left[\big|\hat\phi_{i,\theta}-\phi_{i,\theta}\big| >\delta\right] \leq 2\exp\left\{-2\delta^2\cdot T(i,\theta)\hat u_{i,\theta}\right\}.
\end{equation*}

Equating the right-hand side of the above inequality with $1/M(K+1)T^2$ and applying the union bound,
we have with probability $1-T^{-1}$ uniformly over $i\in[M]$, $\theta\in\mathcal K_i$ and $t\in[T]$ that
\begin{equation}
\big|\hat\phi_{i,\theta}-\phi_{i,\theta}\big| \leq \sqrt{\frac{\ln(2MTK)}{T(i,\theta)\hat u_{i,\theta}}}.
\end{equation}

\section{Proof of Corollary \ref{cor:Rub}}
	\begin{repeatcorollary}[Corollary \ref{cor:Rub} (restated).]
	With probability $1-T^{-1}$, $\bar R'(\hat{\vct\theta})\geq R(\hat{\vct\theta})$ and $\bar R'(\vct\theta^*)\geq R'(\vct\theta^*)$,
	where $\hat{\vct\theta},\vct\theta^*\in\mathcal K_1\times\cdots\times\mathcal K_M$ are maximizers of $\bar R'$ and $R'$, respectively.
	\end{repeatcorollary}
	\proof{Proof of Corollary \ref{cor:Rub}}
	We first prove $\bar R'(\hat{\vct\theta})\geq R'(\hat{\vct\theta})$.
	By definition, $\sum_{i=1}^M(\bar\phi_{i,\hat\theta_i}-\bar R'(\hat{\vct\theta}))\bar u_{i,\hat\theta_i}=\bar R'(\hat{\vct\theta})$.
	In addition, because $\bar R'(\hat{\vct\theta})$ is the maximizer of $\bar R'$,
	setting $\lambda = \bar R'(\hat{\vct\theta})$ and by the second property of Lemma \ref{lem:binary-search} we know that
	$\psi_\lambda(\hat{\vct\theta})=\lambda$ and
	$\psi_\lambda(\vct\theta)\leq\lambda$
	for all $\vct\theta\in\mathcal K_1\times\cdots\times\mathcal K_M$, where $\psi_\lambda(\vct\theta)=\sum_{i=1}^M(\bar\phi_{i,\theta_i}-\lambda)\bar u_{i,\theta_i}$.
	
	We claim that $\bar\phi_{i,\hat\theta_i}\geq\bar R'(\vct\theta)$ whenever $\bar u_{i,\hat\theta_i}>0$.
	Assume the contrary, that $\bar\phi_{i,\hat\theta_i} < \bar R'(\vct\theta)=\lambda$ and $\bar u_{i,\hat\theta_i}>0$ for some $i\in[M]$.
	Consider $\hat{\vct\theta}'=(\hat\theta_1',\cdots,\hat\theta_M')$ defined as $\hat\theta_i'=\infty$ and $\hat\theta_{i'}'=\hat\theta_{i'}$ for all $i'\neq i$.
	Because $\hat\theta_i'=\infty$ we know that $\bar u_{i,\hat\theta_i'}=u_{i,\hat\theta_i'}=0$.
	Subsequently, $\psi_\lambda(\hat{\vct\theta}')=\psi_\lambda(\hat{\vct\theta}) - (\bar\phi_{i,\hat\theta_i}-\lambda)\bar u_{i,\hat\theta_i}>\psi_\lambda(\hat{\vct\theta}) = \lambda$.
	This contradicts the condition that $\psi_\lambda(\vct\theta)\leq\lambda$
	for all $\vct\theta\in\mathcal K_1\times\cdots\times\mathcal K_M$.
	
	Define $\psi_\lambda^0(\vct\theta) := \sum_{i=1}^M(\phi_{i,\theta_i}-\lambda)u_{i,\theta_i}$, which is similar to the definition of $\psi_\lambda$ except all occurrences of $\bar\phi_{i,\cdot}$
	and $\bar u_{i,\cdot}$ are replaced by their true values $\phi_{i,\cdot},u_{i,\cdot}$.
	Because $\bar\phi_{i,\hat\theta_i}\geq \bar R'(\hat{\vct\theta})$ for all $\bar u_{i,\theta_i}>0$,
	and $\bar\phi_{i\cdot},\bar u_{i\cdot}$ are upper bounds of $\phi_{i\cdot},u_{i\cdot}$,
	we conclude that $\psi_\lambda^0(\hat{\vct\theta}) \leq \psi_\lambda(\hat{\vct\theta})=\lambda$,
	implying that $\sum_{i=1}^M(\phi_{i,\hat\theta_i}-\lambda)u_{i,\hat\theta_i} \leq \lambda$.
	Re-arranging terms we have $R'(\hat{\vct\theta}) = [\sum_{i=1}^M\phi_{i,\hat\theta_i}u_{i,\hat\theta_i}]/[1+\sum_{i=1}^Mu_{i,\hat\theta_i}]\leq \lambda = \bar R'(\hat{\vct\theta})$.
	
	We next prove $\bar R'(\vct\theta^*)\geq R'(\vct\theta^*)$.
	Recall that $R'(\vct\theta^*)=[\sum_{i=1}^M\phi_{i,\theta_i^*}u_{i,\theta_i^*}]/[1+\sum_{i=1}^Mu_{i,\theta_i^*}]$.
	Hence, $\psi_\lambda^0(\vct\theta^*)=\lambda$ for $\lambda = R'(\vct\theta^*)$,
	meaning that $\sum_{i=1}^M(\phi_{i,\theta_i^*}-\lambda)u_{i,\theta_i^*}=\lambda$.
	By similar analysis, we know that $\phi_{i,\theta_i^*}\geq\lambda$ for all $u_{i,\theta_i^*}>0$ too.
	Because $\bar\phi_{i\cdot},\bar u_{i,\cdot}$ are upper bounds of $\phi_{i\cdot},u_{i\cdot}$ and $\bar u_{i,\theta_i^*}=0$ if $u_{i,\theta_i^*}=0$,
	we have $\psi_\lambda(\vct\theta^*) = \sum_{i=1}^M(\bar\phi_{i,\theta_i^*}-\lambda)\bar u_{i,\theta_i^*} \geq \sum_{i=1}^M(\phi_{i,\theta_i^*}-\lambda)u_{i,\theta_i^*}=\psi_\lambda^0(\vct\theta^*)=\lambda$.
	This implies that $\sum_{i=1}^M(\bar\phi_{i,\theta_i^*}-R'(\vct\theta^*))\bar u_{i,\theta_i^*}\geq R'(\vct\theta^*)$,
	and therefore $\bar R'(\vct\theta^*)=[\sum_{i=1}^M\bar\phi_{i,\theta_i^*}\bar u_{i,\theta_i^*}]/[1+\sum_{i=1}^M\bar u_{i,\theta_i^*}] \geq R'(\vct\theta^*)$.

\section{Proof of Lemma \ref{lem:aggregation}}	

	\begin{repeatlemma}[Lemma \ref{lem:aggregation} (restated).]
	With probability $1-T^{-1}$, for all $t\in[T]$, $i\in[M]$ and $\vct\theta=(\theta_1,\cdots,\theta_M)\in\mathcal K_1\times\cdots\times\mathcal K_M$,
	\begin{equation}
	\big|\bar R'(\vct\theta)-R'(\vct\theta)\big|\leq
	 \frac{1}{1+\sum_{i=1}^Mu_{i,\theta_i}}\left[\sum_{i=1}^M\frac{\bar u_{i,\theta_i}-u_{i,\theta_i}}{1+u_{i,\theta_i}} + \sum_{i=1}^M u_{i,\theta_i}(\bar\phi_{i,\theta_i}-\phi_{i,\theta_i})\right].
	\end{equation}
	\end{repeatlemma}
	\proof{Proof of Lemma \ref{lem:aggregation}}
	To simplify notations, we shall abbreviation $\phi_i=\phi_{i,\theta_i},u_i=u_{i,\theta_i}$ and $\bar\phi_i=\bar\phi_{i,\theta_i},\bar u_i=\bar u_{i,\theta_i}$.
	We also abbreviate $R'=R'(\vct\theta)$ and $\bar R'=\bar R'(\vct\theta)$.
	
	By definition of $R'$ and $\bar R'$, we have
	\begin{align}
	\left(1+\sum_{i=1}^Mu_i\right)[\bar R'-R']
	&=\left(1+\sum_{i=1}^Mu_i\right)\left[\frac{\sum_{i=1}^M \bar u_i\bar\phi_i}{1+\sum_{i=1}^M\bar u_i} -  \frac{\sum_{i=1}^M  u_i\phi_i}{1+\sum_{i=1}^M u_i}\right]\nonumber\\
	&= \sum_{i=1}^M\bar\phi_i\left(\bar u_i\frac{1+\sum_{i'=1}^Mu_{i'}}{1+\sum_{i'=1}^M\bar u_{i'}}-u_i\right) + \sum_{i=1}^M u_i(\bar\phi_i-\phi_i)\nonumber\\
	&{\leq  \sum_{i=1}^M\max\left\{0,\left(\bar u_i\frac{1+\sum_{i'=1}^Mu_{i'}}{1+\sum_{i'=1}^M\bar u_{i'}}-u_i\right)\right\}+ \sum_{i=1}^M u_i(\bar\phi_i-\phi_i)}\label{eq:aggregation-key}
	\end{align}
	
	The first term on the right-hand side of Eq.~(\ref{eq:aggregation-key}) can be further upper bounded by
	\begin{align}
	 \sum_{i=1}^M\bar u_i\frac{1+\sum_{i'=1}^Mu_{i'}}{1+\sum_{i'=1}^M\bar u_{i'}}-u_i
	 & = \frac{\sum_{i=1}^M\bar u_i(1+\sum_{i'=1}^Mu_{i'}) - \sum_{i=1}^Mu_i(1+\sum_{i'=1}^M u_{i'})}{1+\sum_{i=1}^M \bar u_i}
	 \leq \sum_{i=1}^M\frac{\bar u_i- u_i}{1+u_i}.
	\end{align}
	Here the last inequalities holds because the $\sum_{i,i'=1}^Mu_i\bar u_{i'}$ term cancels out, and $1+\sum_{i'=1}^M\bar u_{i'}\geq 1+\bar u_i\geq 1+u_i$.
	
\section{Proof of Lemma \ref{lem:regret-final-part1}}
	\begin{repeatlemma}[Lemma \ref{lem:regret-final-part1} (restate).]
	Conditioned on event $\mathcal A$, it holds that
	\begin{equation}
	\sum_{\tau}\sum_{i=1}^M\frac{\bar u_{i,\hat \theta_i^{(\tau)}}-u_{i,\hat \theta_i^{(\tau)}}}{1+u_{i,\hat \theta_i^{(\tau)}}}
	\lesssim \sqrt{MKT\log(MTK)} + MKU\log^2(MTK).
	\end{equation}
	\end{repeatlemma}
	\proof{Proof of Lemma \ref{lem:regret-final-part1}}
	
	{Define $T_\tau(i,\theta)$ as the size of $\mathcal T(i,\theta)$ after epoch $\tau$,
	and $T_0(i,\theta)$ as the final size of $\mathcal T(i,\theta)$ when Algorithm \ref{alg:ucb} terminates (i.e., {the total number of epochs in which assortment $\mathcal L(\theta)\in\mathcal K_i$ was offered in nest $i$}).
	Define also $\hat u_{i,\theta}^{(\tau)}$ to be the estimate of $u_{i,\hat \theta_i^{(\tau)}}$ at epoch $\tau$.
	Using Lemma \ref{lem:new-nested-concentration}, the expectation of the first term in Eq.~(\ref{eq:regret-intermediate-26}) can be upper bounded by,
	}
	\begin{align}
	&\sum_\tau\sum_{i=1}^M U\cdot \vct 1\{T_\tau(i,\hat\theta_i^{(\tau)})<3888\ln(2MTK)\} +  \left[\sqrt{\frac{3888\max(\hat u_{i,\theta}^{(\tau)},[\hat u_{i,\theta}^{(\tau)}]^2)\ln(2MTK)}{T_\tau(i,\hat\theta_i^{(\tau)})\cdot (1+u_{i,\theta}^{(\tau)})^2}}\right.\nonumber\\
	&\;\;\;\;\left. + \frac{144\ln(2MTK)}{T_\tau(i,\hat\theta_i^{(\tau)})}\right]\cdot \vct 1\{T_\tau(i,\hat\theta_i^{(\tau)})\geq 3888\ln(2MTK)\}\nonumber\\
	&=\sum_{i=1}^M\sum_{\theta\in\mathcal K_i}\sum_{\ell=0}^{T_0(i,\theta)} U\cdot\vct 1\{\ell<3888\ln(2MTK)\} +
	 \left[\sqrt{\frac{3888\max(\hat u_{i,\theta}^{(\tau)},[\hat u_{i,\theta}^{(\tau)}]^2)\ln(2MTK)}{\ell(1+u_{i,\theta})^2}}\right.\nonumber\\
	 &\;\;\;\;\left.+\frac{144\ln(2MTK)}{\ell}\right]\vct 1\{\ell\geq 3888\ln(2MTK)\}\nonumber\\
	& \leq{\sum_{i=1}^M\sum_{\theta\in\mathcal K_i}\sum_{\ell=0}^{T_0(i,\theta)} U\cdot\vct 1\{\ell<3888\ln(2MTK)\} +
	 \left[\sqrt{\frac{3888\max(2\hat u_{i,\theta}^{(\tau)},4u_{i,\theta}^2)\ln(2MTK)}{\ell(1+u_{i,\theta})^2}}\right.}\nonumber\\
	 &\;\;\;\;{\left.+\frac{144\ln(2MTK)}{\ell}\right]\vct 1\{\ell\geq 3888\ln(2MTK)\}}\label{eq:intermediate-3quarter}\\
	 &\lesssim {\sum_{i=1}^M\sum_{\theta\in\mathcal K_i}\sum_{\ell=0}^{T_0(i,\theta)} U\cdot\vct 1\{\ell<3888\ln(2MTK)\} +
	\left[\sqrt{\frac{\max\{u_{i,\theta},u_{i,\theta}^2\}\ln(MTK)}{\ell(1+u_{i,\theta})^2}}+ \sqrt{\frac{\ln(MTK)}{\ell(1+u_{i,\theta})^2}}\times \right.} \nonumber\\
	&\;\;\;\;{\left. \left(\sqrt{\frac{U^2\ln(MTK)}{\ell}}+\frac{\ln(MTK)}{\ell}\right) + \frac{\ln(MTK)}{\ell}\right]\vct 1\{\ell\geq 3888\ln(2MTK)\}}\label{eq:regret-intermediate-3half}\\
	&\lesssim {\sum_{i=1}^M\sum_{\theta\in\mathcal K_i} U\log(MTK) + \sqrt{u_{i,\theta}T_0(i,\theta)\log(MTK)} + U\log T_0(i,\theta)\log(MTK)}\label{eq:regret-intermediate-4} \\
	&\lesssim MKU\log^2(MKT) + \sum_{i=1}^M\sum_{\theta\in\mathcal K_i}\sqrt{u_{i,\theta}T_0(i,\theta)\log(MTK)}.\label{eq:regret-intermediate-41}
		\end{align}
	
	Here Eq.~(\ref{eq:intermediate-3quarter}) holds because of the following.
	If $\hat u_{i,\theta}^{(\tau)}\leq 2$ then the inequality clearly holds.
	If $\hat u_{i,\theta}^{(\tau)}\geq 2$ and $\ell \geq 3888\ln(2MTK)$, we argue that $u_{i,\theta}\geq 1$ (conditioned on the event $\mathcal A$).
	Assume otherwise that $u_{i,\theta}< 1$.
	By Lemma \ref{lem:new-nested-concentration} and the facts that $\ell\geq 3888\ln(2MTK)$, $\hat u_{i,\theta}^{(\tau)}\geq 1$, it holds that
	$\hat u_{i,\theta}^{(\tau)}\leq u_{i,\theta} +\frac{1}{3}\hat u_{i,\theta}^{(\tau)}+0.1$.
	Re-arranging the terms and noting we have $\hat u_{i,\theta}^{(\tau)}\leq \frac{3}{2}u_{i,\theta}+0.2\leq 1.7$, contradicting $\hat u_{i,\theta}^{(\tau)}\geq 2$.
	Hence $u_{i,\theta}\geq 1$ and in this case Eq.~(\ref{eq:intermediate-3quarter}) remains valid because $\hat u_{i,\theta}^{(\tau)}\in[0.5u_{i,\theta},2u_{i,\theta}]$.
	
	Here Eq.~(\ref{eq:regret-intermediate-3half}) holds by plugging in upper bounds on $|\hat u_{i,\theta}^{(\tau)}-u_{i,\theta}|$ (Lemma \ref{lem:new-nested-concentration}).
	Note that starting from Eq.~(\ref{eq:regret-intermediate-3half}), we drop all numerical constants
	and only report the asymptotic scalings of the terms, as evidenced in the $\lesssim$ notation in Eq.~(\ref{eq:regret-intermediate-3half}).
	Eq.~(\ref{eq:regret-intermediate-4}) holds because $\max\{a,a^2\}/(1+a)^2\lesssim a$,
	 $\sum_{\ell\leq T_0(i, \theta)}\ell^{-1/2}\lesssim \sqrt{T_0(i, \theta)}$ and$\sum_{\ell\leq T_0(i, \theta)}\ell^{-1}\lesssim \log T_0(i, \theta)$;
	 Eq.~(\ref{eq:regret-intermediate-41}) holds by replacing $\log T_0$ with $\log(MTK)$.

	Applying Cauchy-Schwarz inequality and the fact that $\mathbb E[\hat n_{i,\tau}]=u_{i,\theta_i}$, $\mathbb E|\mathcal E_\tau|=1+\sum_{i=1}^M\mathbb E[\hat n_{i,\tau}]$, the summation term in \eqref{eq:regret-intermediate-41} can be further bounded by
	\begin{align}
	\sum_{i=1}^M\sum_{\theta\in\mathcal K_i}\sqrt{u_{i,\theta}T_0(i,\theta)\log(MTK)}
	&\leq \sqrt{M|\mathcal K|}\cdot \sqrt{\sum_{i=1}^M\sum_{\theta\in\mathcal K_i} u_{i,\theta}T_0(i,\theta)\log(MTK)}\nonumber\\
	&= \sqrt{M|\mathcal K|}\cdot \sqrt{\sum_{i=1}^M\sum_{\theta\in\mathcal K_i}\sum_{\tau\in\mathcal T(i,\theta)}\mathbb E[\hat n_{i,\tau}]\log(MTK)}\nonumber\\
	&\leq \sqrt{M|\mathcal K|}\cdot \sqrt{\sum_\tau \mathbb E[|\mathcal E_\tau|]\log(MTK)}\nonumber\\
	&= \sqrt{MKT\log(MTK)}.\nonumber
	\end{align}
	Subsequently,
	\begin{equation*}
	\sum_{\tau}\sum_{i=1}^M\frac{\bar u_{i,\hat \theta_i^{(\tau)}}-u_{i,\hat \theta_i^{(\tau)}}}{1+u_{i,\hat \theta_i^{(\tau)}}}
	\lesssim \sqrt{MKT\log(MTK)} + MKU\log^2(MTK).
	\end{equation*}
	
\section{Proof of Lemma \ref{lem:regret-final-part2}}
	\begin{repeatlemma}[Lemma \ref{lem:regret-final-part2} (restated).]
	Conditioned on event $\mathcal A$, it holds that
	\begin{equation}
	\sum_{\tau}\sum_{i=1}^Mu_{i,\hat \theta_i^{(\tau)}}(\bar\phi_{i,\hat \theta_i^{(\tau)}}-\phi_{i,\hat \theta_i^{(\tau)}})
	\lesssim \sqrt{MKT\log(MTK)} + MKU\log^2(MTK).
	\end{equation}
	\end{repeatlemma}
	\proof{Proof of Lemma \ref{lem:regret-final-part2}}
	We first state the following result is a corollary of Lemma \ref{lem:new-nested-concentration} which gives a lower bound (with high probability) on $T(i,\theta)\hat u_{i,\theta}$
	when $u_{i,\theta}$ is not too small.
	Its proof is given at the end of this section.
	\begin{corollary}
	With probability $1-T^{-1}$ for all $i\in[M]$, $\theta\in\mathcal K_i$ such that $u_{i,\theta}\geq 768\ln(2MTK)/T(i,\theta)$ and $T(i,\theta)\geq 96\ln(2MTK)$, we have
	$T(i,\theta)\hat u_{i,\theta}\geq 0.5T(i,\theta)u_{i,\theta}$.
	\label{cor:uik-chernoff}
	\end{corollary}
		\proof{Proof of Corollary \ref{cor:uik-chernoff}}
	First consider the case of $u_{i,\theta}\geq 1$.
	By Eq.~(\ref{eq:u-true-ci-part2}) in the proof of Lemma \ref{lem:new-nested-concentration}, if $T(i,\theta)\geq 96\ln(2MTK)$ we have $|\hat u_{i,\theta}-u_{i,\theta}|\leq 0.5u_{i,\theta}$ and therefore $T(i,\theta)\hat u_{i,\theta}\geq 0.5 T(i,\theta)u_{i,\theta}$.
	
	In the rest of the proof we consider the case of $768\ln(2MTK)/T(i,\theta)\leq u_{i,\theta}\leq 1$. By Eq.~(\ref{eq:u-true-ci-part1}) in the proof of Lemma \ref{lem:new-nested-concentration}, we have
	$$
	T(i,\theta)\hat u_{i,\theta} \geq T(i,\theta)u_{i,\theta} - \sqrt{48T(i,\theta)u_{i,\theta}\ln(2MTK)} - 48\ln(2MTK).
	$$
	
	Under the condition that $u_{i,\theta}\geq 768\ln(2MTK)/T(i,\theta)$, the above inequality yields $T(i,\theta)\hat u_{i,\theta}\geq 0.5T(i,\theta)u_{i,\theta}$.
	$\square$
	
	Combining Corollary \ref{cor:uik-chernoff} with Lemma \ref{lem:new-nested-concentration} and noting that $|\bar\phi_{i,\theta}-\phi_{i,\theta}|\leq 1$ always holds, the second term on the right-hand side
	of Eq.~(\ref{eq:regret-intermediate-26}) can be upper bounded by
	\begin{align}
	&\sum_{i=1}^M\sum_{\theta\in\mathcal K_i}\sum_{\ell=0}^{T_0(i,\theta)}
	U\vct 1\{\ell< 96\ln(2MTK)\} +\left[\frac{768\ln(2MTK)}{\ell} + u_{i,\theta}\sqrt{\frac{2\ln(2MTK)}{T(i,\theta)u_{i,\theta}}}\right]\cdot \vct 1\{\ell\geq 96\ln(2MTK)\}\nonumber\\
	&\lesssim \sum_{i=1}^M\sum_{\theta\in\mathcal K_i} U\log(MKT) + \sqrt{u_{i,\theta}T_0(i,\theta)\log(MKT)} + \log T_0(i,\theta)\log(MKT).
	\end{align}
	
	Using similar derivation as in Eq.~(\ref{eq:regret-final-part1}), we have
	\begin{equation*}
	\sum_{\tau}\sum_{i=1}^Mu_{i,\hat \theta_i^{(\tau)}}(\bar\phi_{i,\hat \theta_i^{(\tau)}}-\phi_{i,\hat \theta_i^{(\tau)}})
	\lesssim \sqrt{MKT\log(MTK)} + MKU\log^2(MTK).
	\end{equation*}
	

\section{Proof of Lemma \ref{lem:S-diff}}
\begin{repeatlemma}[Lemma \ref{lem:S-diff} (restated).]
Let $U\subseteq[M]$ be the set of Type A nests, and by construction $[M]\backslash U$ are all Type B nests.
For any $\mat S=(S_1,\cdots,S_M)\in [N]^M$, define $m_U^\sharp(\mat S) := \sum_{i\in U}\vct 1\{S_i\neq \{1,2\}\} + \sum_{i\notin U}\vct 1\{S_i\neq \{1,2,3\}\}$.
Then there exists a numerical constant $C>0$ such that for all $\mat S$, $R(\mat S^*)-R(\mat S) \geq m_U^\sharp(\mat S)\cdot C\epsilon/M$,
where $\mat S^*\in\arg\max_{\mat S}R(\mat S)$ is the optimal assortment combination under $U$.
\end{repeatlemma}
\proof{Proof of Lemma \ref{lem:S-diff}}
For any $U\subseteq[M]$, $S\subseteq\{1,2,3\}$ and $\mat S=(S_1,\cdots,S_M)\in[3]^M$,
define $m_{U,S}^\sharp(\mat S) := \sum_{i\in U}\vct 1\{S_i = S\}$ and similarly $m_{U^c,S}^\sharp(\mat S) := \sum_{i\notin U}\vct 1\{S_i=S\}$.
Denote also $\mat S^*=(S_1^*,\cdots,S_M^*)$ as the optimal assortment combination, in which $S_i=\{1,2\}$ for all $i\in U$ and $S_i=\{1,2,3\}$ for all $i\notin U$.
Let also $R_U(\cdot)$, $V_U(\cdot)$, $R_{U^c}(\cdot)$, $V_{U^c}(\cdot)$
be revenue and preference of assortment selections in nests of Type A ($R_U(\cdot)$ and $V_U(\cdot)$) or Type B ($R_{U^c}(\cdot)$ and $V_{U^c}(\cdot)$), respectively.
Recall that $|U|=M/4$ and $|U^c|=3M/4$. We then have
\begin{align}
R(\mat S^*)-R(\mat S)
&= \frac{R_U(\{1,2\})V_U(\{1,2\})^{1/2}\cdot M/4 + R_{U^c}(\{1,2,3\})V_{U^c}(\{1,2,3\})^{1/2}\cdot 3M/4}{1 + V_U(\{1,2\})^{1/2}\cdot M/4 + V_{U^c}(\{1,2,3\})^{1/2}\cdot 3M/4}\nonumber\\
&\;\;\ - \frac{\sum_{S\subseteq\{1,2,3\}}m_{U,S}^\sharp(\mat S)\cdot R_U(S)V_U(S)^{1/2} + m_{U^c,S}^\sharp(\mat S)\cdot R_{U^c}(S)V_{U^c}(S)^{1/2}}{1 +
\sum_{S\subseteq\{1,2,3\}} m_{U,S}^\sharp(\mat S)\cdot V_U(S)^{1/2} + m_{U^c,S}^\sharp(\mat S)\cdot V_{U^c}(S)^{1/2}}.
\end{align}

We next list the values of $V_U(\cdot), R_U(\cdot), V_{U^c}(\cdot)$ and $R_{U^c}(\cdot)$ under our adversarial construction, shown in Table \ref{tab:construction}.
\begin{eqnarray*}
S = \emptyset:&& V_U(S)^{1/2} = 0,\;\; R_U(S) = 0,\;\; V_{U^c}(S)^{1/2} = 0, \;\;R_{U^c}(S) = 0;\\
S = \{1\}:&& V_U(S)^{1/2}= \frac{\sqrt{1+\epsilon}}{M},\;\; R_U(S) = 1, \;\; V_{U^c}(S)^{1/2} = \frac{\sqrt{1-\epsilon}}{M},\;\; R_{U^c}(S) = 1;\\
S = \{2\}:&& V_U(S)^{1/2} = \frac{\sqrt{1-\epsilon}}{M},\;\; R_U(S) = .8,\;\; V_{U^c}(S)^{1/2} = \frac{\sqrt{1+\epsilon}}{M}, \;\; R_{U^c}(S) = .8;\\
S=\{3\}: && V_U(S)^{1/2} = \frac{1}{M},\;\; R_U(S) = \rho,\;\; V_{U^c}(S)^{1/2} = \frac{1}{M},\;\; R_{U^c}(S) = \rho;\\
S = \{1,2\}:&& V_U(S)^{1/2}  = \frac{\sqrt{2}}{M},\;\; R_U(S) = .9+.1\epsilon,\;\; V_{U^c}(S)^{1/2} = \frac{\sqrt{2}}{M}, R_{U^c}(S) = .9-.1\epsilon;\\
S=\{1,3\}:&& V_U(S)^{1/2} = \frac{\sqrt{1+\epsilon}}{M},\;\; R_U(S) = \frac{1+\rho+\epsilon}{2+\epsilon},\;\; V_{U^c}(S)^{1/2} = \frac{\sqrt{1-\epsilon }}{M},\;\; R_{U^c}(S) = \frac{1+\rho-\epsilon}{2-\epsilon};\\
S=\{2,3\}:&& V_U(S)^{1/2} = \frac{\sqrt{1-\epsilon}}{M},\;\;R_U(S) = \frac{.8+\rho-.8\epsilon}{2-\epsilon},\;\;V_{U^c}(S)^{1/2} = \frac{\sqrt{1+\epsilon }}{M},\;\; R_{U^c}(S) =\frac{.8+\rho+.8\epsilon}{2+\epsilon};\\
S=\{1,2,3\}:&& V_U(S)^{1/2} = \frac{\sqrt{3}}{M}, \;\; R_U(S) = \frac{1.8+\rho+.2\epsilon}{3},\;\; V_{U^c}(S)^{1/2} = \frac{\sqrt{3}}{M},\;\; R_{U^c}(S) = \frac{1.8+\rho-.2\epsilon}{3}.
\end{eqnarray*}
	
Plugging the values of $V_U(\cdot),R_U(\cdot),V_{U^c}(\cdot),R_{U^c}(\cdot)$ into $R(\mat S^*)-R(\mat S)$, and taking $\epsilon\to 0^+$,
by detailed algebraic calculations we proved the lemma.
%
	
\section{Proof of Lemma \ref{lemma:KL}}
\begin{repeatlemma}[Lemma \ref{lemma:KL} (restated).]
Suppose $|U\triangle W|=1$, where $U\triangle W = (U\backslash W)\cup(W\backslash U)$ denotes the symmetric difference between subsets $U,W\subseteq[M]$.
Then there exists a constant $C'>0$ such that for any $\mat S=(S_1,\cdots,S_M)$, $\min\{\kl(P_U(\cdot|\mat S)\|P_{W}(\cdot|\mat S),\kl(P_W(\cdot|\mat S)\|P_U(\cdot|\mat S))\}) \leq C'\epsilon^2/M$.
\end{repeatlemma}
\proof{Proof of Lemma \ref{lemma:KL}}
By symmetry we may assume without loss of generality that $W=U\cup\{i_0\}$ for some $i_0\notin U$.
The random variables observable are $(i,j)$ where $i\in[M]\cup\{0\}$ indicates the nest in which a purchase is made (if no purchase is made then $i=0$)
and $j\in[N]=\{1,2,3\}$ is the particular item purchased in nest $i$ (if $i=0$ simply define $j=0$ with probability 1).
The KL divergence $\kl(P_U(\cdot|\mat S)\|P_W(\cdot|\mat S))$ can then be written as
\begin{align}
\kl(P_U(\cdot|\mat S)\|P_W(\cdot|\mat S))
&= -\mathbb E_U\left[\log\frac{P_W(i,j|\mat S)}{P_U(i,j|\mat S)}\right]
= -\mathbb E_U\left[\log\frac{P_W(i|\mat S)}{P_U(i|\mat S)}\right] - \mathbb E_U\left[\log\frac{P_W(j|i,\mat S)}{P_U(j|i,\mat S)}\right].
\label{eq:cond-kl}
\end{align}

We next upper bound the first term on the right-hand side of Eq.~(\ref{eq:cond-kl}).
By the nested model, the nest-level purchase action $i\in[M]\cup\{0\}$ follows a categorical distribution of $M+1$ categories,
parameterized by probabilities $\vct p=(p_0,\cdots,p_M)$ under $U$ and $\vct q=(q_0,\cdots,q_M)$ under $W$.
By elementary algebra (see for example Lemma 3 in \citep{Chen:18tight}), $\kl(\vct p\|\vct q)$ can be upper bounded as
\begin{equation*}
\kl(\vct p\|\vct q) = -\sum_{i=0}^M p_i\log\frac{q_i}{p_i} \leq \sum_{i=0}^M\frac{|p_i-q_i|^2}{q_i}.
\end{equation*}

Note that $U$ and $W$ only differ in nest $i_0$.
Using the nested model description and $\gamma_i\equiv 0.5$, it is easy to verify that $|p_i-q_i|\lesssim \epsilon/M$
for $i\in\{0,i_0\}$, $|p_i-q_i|\lesssim \epsilon/M^2$ if $i\notin\{0,i_0\}$, $q_0\gtrsim \Omega(1)$ and $q_i\gtrsim 1/M$ for all $i\geq 1$.
Subsequently,
\begin{equation}
\kl(\vct p\|\vct q) \lesssim \epsilon^2 / M. \label{eq:kl-term1}
\end{equation}

We proceed to upper bound the second term on the right-hand side of Eq.~(\ref{eq:cond-kl}).
Because $U$ and $W$ only differ in nest $i_0$, this term is non-zero only if $i=i_0$.
Conditioned on $i=i_0$, it is easy to verify that $\kl(P_U(\cdot|i_0,S_{i_0})\|P_W(\cdot|i_0,S_{i_0}))\lesssim \epsilon^2$
for all $S_{i_0}\subseteq[N]$.
In addition, $\max\{P_U(i_0|\mat S),P_W(i_0|\mat S)\}\lesssim 1/M$.
Subsequently,
\begin{equation}
- \mathbb E_U\left[\log\frac{P_W(j|i,\mat S)}{P_U(j|i,\mat S)}\right]
= P_U(i_0|\mat S)\cdot \kl(P_U(\cdot|i_0,S_{i_0})\|P_W(\cdot|i_0,S_{i_0})) \lesssim \epsilon^2/M.
\label{eq:kl-term2}
\end{equation}

Combining Eqs.~(\ref{eq:kl-term1},\ref{eq:kl-term2}) we complete the proof of Lemma \ref{lemma:KL}.

\section{Proof of Lemma \ref{lem:discretization}}
\begin{repeatlemma}[Lemma \ref{lem:discretization} (restated).]
	Fix an arbitrary $\delta\in(0,1)$. Then 
	$$
	\max_{\vct\theta\in\mathcal K_1\times\cdots\times\mathcal K_M} R'(\vct\theta) - \max_{\vct\theta\in\tilde{\mathcal K}_1^\delta\times\cdots\times\tilde{\mathcal K}_M^\delta} R'(\vct\theta) \leq\delta,
	$$
	where $R'(\vct\theta) := [\sum_{i=1}^M\phi_{i,\theta_i}u_{i,\theta_i}]/[1+\sum_{i=1}^Mu_{i,\theta_i}]$.
\end{repeatlemma}
\proof{Proof of Lemma \ref{lem:discretization}}

	Let $\vct{\theta}^*=(\theta_1^*,\cdots,\theta_M^*)\in\mathcal K_1\times\cdots\times\mathcal K_M$ be the assortment that maximizes $R'$.
	Define $\tilde\theta_i^* := \lfloor\theta_i^*/\delta\rfloor\cdot \delta$ for all $i\in[M]$ and $\tilde{\vct\theta}^* := (\tilde\theta_1^*,\cdots,\tilde\theta_M^*)$.
	It is easy to verify that $\tilde{\theta}^*\in\tilde{\mathcal K}_1^\delta\times\cdots\times\tilde{\mathcal K}_M^\delta$.
	Therefore, it suffices to prove that $R'(\tilde{\vct\theta}^*) \geq R^*-\delta$ where $R^*=R'(\vct\theta^*)$.
	
	To simplify notations, abbreviate $R_i=R_i(\mathcal L_i(\theta_i^*))$, $V_i=V_i(\mathcal L_i(\theta_i^*))$, $\tilde R_i = R_i(\mathcal L_i(\tilde\theta_i^*))$ and $\tilde V_i = V_i(\mathcal L_i(\tilde\theta_i^*))$,
	where $R_i(\cdot)$ and $V_i(\cdot)$ are defined in Eqs.~(\ref{eq:vi},\ref{eq:ri}).
	Denote also that $x_i := \tilde V_i-V_i$.
	By definition of $R_i$ and $\tilde R_i$, we have $R_iV_i = \sum_{r_{ij}\geq\theta_i^*}r_{ij}v_{ij}$ and $\tilde R_i\tilde V_i = \sum_{r_{ij}\geq\tilde\theta_i^*}r_{ij}v_{ij}$.
	Subsequently,
	\begin{equation}
	\tilde R_i\tilde V_i = R_iV_i + \sum_{\theta_i^*>r_{ij}\geq\tilde\theta_i^*}r_{ij}v_{ij} \geq R_iV_i + x_i(\theta_i^*-\delta).
	\label{eq:intermediate1-discretization}
	\end{equation}
	Here the last inequality holds because $|\theta_i^*-\tilde\theta^*|\leq\delta$ and $\sum_{\theta_i^*>r_{ij}\geq\tilde\theta_i^*}v_{ij} = \tilde V_i-V_i = x_i$.
	Subsequently,
	\begin{align}
	\tilde V_i^{\gamma_i}\left[\tilde R_i-(R^*-\delta)\right]
	&= (V_i+x_i)^{\gamma_i}\left[\tilde R_i-(R^*-\delta)\right]\\
	&\geq (V_i+x_i)^{\gamma_i}\left[\frac{R_iV_i+x_i(\theta_i^*-\delta)}{V_i+x_i} - R^*+\delta\right]\label{eq:intermediate2-discretization}\\
	&\geq (V_i+x_i)^{\gamma_i}\left[\frac{R_iV_i+x_i,\theta_i^*}{V_i+x_i}-R^*\right].\label{eq:intermediate3-discretization}
	\end{align}
	Here in Eq.~(\ref{eq:intermediate2-discretization}) we apply Eq.~(\ref{eq:intermediate1-discretization}),
	and Eq.~(\ref{eq:intermediate3-discretization}) holds because $x_i/(V_i+x_i)\leq 1$.
	
	\begin{proposition}
		For $i\in[M]$ define function $h_i(\Delta) := (V_i+\Delta)^{\gamma_i}[(R_iV_i+\Delta\theta_i^*)/(V_i+\Delta)-R^*]$.
		Then $h_i$ is monotonically non-decreasing in $\Delta$ for $\Delta\geq 0$.
		\label{prop:monotonicity}
	\end{proposition}
	
	Invoking Proposition \ref{prop:monotonicity}, we have that for all $i\in[M]$,
	\begin{equation}
	\tilde V_i^{\gamma_i}\left[\tilde R_i-(R^*-\delta)\right]
	\geq (V_i+x_i)^{\gamma_i}\left[\frac{R_iV_i+x_i,\theta_i^*}{V_i+x_i}-R^*\right] \geq V_i^{\gamma_i}\left[R_i-R^*\right].
	\end{equation}
	Summing over $i\in[M]$ on both sides of the above inequality and using the definition that $R^*=(\sum_{i\in[M]}R_iV_i^{\gamma_i})/(1+\sum_{i\in[M]}V_i^{\gamma_i})$,
	\begin{equation}
	\sum_{i\in[M]}\tilde V_i^{\gamma_i}\left[\tilde R_i-(R^*-\delta)\right]
	\geq \sum_{i\in[M]}R_iV_i^{\gamma_i} - \left(\sum_{i\in[M]}V_i^{\gamma_i}\right)R^* = R^* \geq R^*-\delta.
	\end{equation}
	Re-organizing terms we have
	$$
	R'(\tilde{\vct\theta}^*)=\frac{\sum_{i=1}^M\phi_{i,\tilde\theta_i^*}u_{i,\tilde\theta_i^*}}{1+\sum_{i=1}^Mu_{i,\tilde\theta_i^*}}
	= \frac{\sum_{i\in[M]} R_i(\mathcal L_i(\tilde\theta_i^*))V_i(\mathcal L_i(\tilde\theta_i^*))^{\gamma_i}}{1+\sum_{i\in[M]}V_i(\mathcal L_i(\tilde\theta_i^*))^{\gamma_i}}
	 = \frac{\sum_{i\in[M]}\tilde R_i\tilde V_i^{\gamma_i}}{1+\sum_{i\in[M]}\tilde V_i^{\gamma_i}} \geq R^*-\delta,
	$$
	which completes the proof.

	\begin{repeatproposition}[Proposition \ref{prop:monotonicity} (restated).]
	For $i\in[M]$ define function $h_i(\Delta) := (V_i+\Delta)^{\gamma_i}[(R_iV_i+\Delta\theta_i^*)/(V_i+\Delta)-R^*]$.
	Then $h_i$ is monotonically non-decreasing in $\Delta$ for $\Delta\geq 0$.
    \end{repeatproposition}
\proof{Proof of Proposition \ref{prop:monotonicity}}
	Note that $h_i(\Delta) = (V_i+\Delta)^{\gamma_i-1}(R_iV_i+\Delta\theta_i^*) - (V_i+\Delta)^{\gamma_i} R^*$.
	Differentiating $h_i$ with respect to $\Delta$ we have
	\begin{equation}
	h_i'(\Delta) = (\gamma_i-1)(V_i+\Delta)^{\gamma_i-2}(R_iV_i+\Delta\theta_i^*) + \theta_i^*(V_i+\Delta)^{\gamma_i-1} - \gamma_i(V_i+\Delta)^{\gamma_i-1}R^*.
	\end{equation}
	Using the second property of Lemma \ref{lem:nested-popular} that $\theta_i^* \geq \gamma_iR^*+(1-\gamma_i)R_i(S_i^*)$,
	we have for all $\Delta\geq 0$ that
	\begin{align}
	h_i'(\Delta)
	&\geq (\gamma_i-1)(V_i+\Delta)^{\gamma_i-2}(R_iV_i+\Delta\theta_i^*) + \left[\gamma_iR^*+(1-\gamma_i)R_i\right](V_i+\Delta)^{\gamma_i-1} \nonumber\\
	&\;\;\;\;- \gamma_i(V_i+\Delta)^{\gamma_i-1}R^*\\
	&= (1-\gamma_i)(V_i+\Delta)^{\gamma_i-2}\left[R_i(V_i+\Delta)-R_iV_i-\Delta\theta_i^*\right]\\
	&= (1-\gamma_i)(V_i+\Delta)^{\gamma_i-2}\cdot (R_i-\theta_i^*)\Delta \geq 0.
	\end{align}
	The lemma is then proved, because $h_i'(\Delta)\geq 0$ for all $\Delta\geq 0$.

\section{References to some concentration inequalities}

\begin{lemma}[Hoeffding's inequality \citep{hoeffding1963probability}]
	Suppose $X_1,\cdots,X_n$ are i.i.d.~random variables such that $a\leq X_i\leq b$ almost surely.
	Then for any $t>0$,
	$$
	\Pr\left[\left|\frac{1}{n}\sum_{i=1}^n{X_i} - \mathbb EX\right| > t\right] \leq 2\exp\left\{-\frac{2nt^2}{(b-a)^2}\right\}.
	$$
	\label{lem:hoeffding}
\end{lemma}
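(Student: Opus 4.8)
The plan is to establish the bound by the classical Chernoff (exponential Markov) method, with the technical heart being a moment generating function estimate for a single bounded variable. First I would reduce to a one-sided statement: writing $S_n := \sum_{i=1}^n (X_i - \mathbb{E}X)$, the event $\{\bar X_n - \mathbb{E}X > t\}$ coincides with $\{S_n > nt\}$, so it suffices to show $\Pr[S_n > nt]\le \exp\{-2nt^2/(b-a)^2\}$. Applying the same estimate to the variables $-X_i$ (which lie in an interval of the same width $b-a$) controls the lower tail, and a union bound over the two tails produces the leading factor $2$. For the upper tail, fix any $s>0$ and combine the exponential Markov inequality with independence:
\begin{equation*}
\Pr[S_n > nt] \le e^{-snt}\,\mathbb{E}\!\left[e^{sS_n}\right] = e^{-snt}\prod_{i=1}^n \mathbb{E}\!\left[e^{s(X_i - \mathbb{E}X)}\right].
\end{equation*}

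The main obstacle, and the only genuinely nontrivial step, is to bound each factor $\mathbb{E}[e^{s(X_i-\mathbb{E}X)}]$ (Hoeffding's lemma). Let $Y := X_i - \mathbb{E}X$, so that $\mathbb{E}Y=0$ and $Y$ takes values in an interval $[\alpha,\beta]$ of width $\beta-\alpha=b-a$ with $\alpha\le 0\le\beta$. By convexity of $y\mapsto e^{sy}$, I would bound $e^{sy}$ on $[\alpha,\beta]$ by the chord joining its endpoints; taking expectations and using $\mathbb{E}Y=0$ yields $\mathbb{E}[e^{sY}]\le e^{\psi(s)}$ for the explicit cumulant function $\psi(s)=\log\!\big((\beta e^{s\alpha}-\alpha e^{s\beta})/(\beta-\alpha)\big)$. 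I would then verify $\psi(0)=\psi'(0)=0$ and $\psi''(s)\le (b-a)^2/4$ for all $s$; the second derivative is the variance of an auxiliary exponentially tilted two-point law, hence at most a quarter of its squared range by Popoviciu's inequality. A second-order Taylor expansion, $\psi(s)=\tfrac{s^2}{2}\psi''(\xi)$ for some $\xi$, then gives $\psi(s)\le s^2(b-a)^2/8$, i.e. $\mathbb{E}[e^{sY}]\le \exp\{s^2(b-a)^2/8\}$.

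Finally I would combine these estimates and optimize over the free parameter $s$. Substituting the MGF bound into the product and using that all $n$ factors are identical gives
\begin{equation*}
\Pr[S_n > nt] \le \exp\!\left\{-snt + \frac{n s^2 (b-a)^2}{8}\right\}.
\end{equation*}
The exponent is a convex quadratic in $s$, minimized at $s^\star = 4t/(b-a)^2>0$, and plugging this value in yields exactly $-2nt^2/(b-a)^2$. Recalling that $\{S_n>nt\}=\{\bar X_n-\mathbb{E}X>t\}$ and adding the symmetric lower-tail estimate via the union bound completes the proof with the stated constant and the factor $2$.
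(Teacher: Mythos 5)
Your proposal is correct. Note that the paper does not prove this lemma at all: it is stated in the appendix as a cited classical result from Hoeffding (1963), so there is no internal proof to compare against. Your argument is the standard (and essentially original) one — Chernoff's exponential Markov bound combined with Hoeffding's lemma, where the MGF of a centered bounded variable is controlled by the chord/convexity bound, the cumulant $\psi$ is shown to satisfy $\psi(0)=\psi'(0)=0$ and $\psi''\leq (b-a)^2/4$, and the resulting quadratic exponent $-snt + ns^2(b-a)^2/8$ is optimized at $s^\star = 4t/(b-a)^2$ to give $-2nt^2/(b-a)^2$; the two-sided bound with the factor $2$ then follows from the symmetric argument applied to $-X_i$ and a union bound. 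All steps check out, including the variance-of-tilted-two-point-law justification for the bound on $\psi''$.
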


\begin{lemma}[Bernstein's inequality \citep{bernstein1924modification}]
	Suppose $X_1,\cdots,X_n$ are i.i.d.~random variables such that $\mathbb E[(X_i-\mathbb EX_i)^2]\leq\sigma^2$
	and $|X_i|\leq M$ almost surely. Then for any $t>0$,
	$$
	\Pr\left[\left|\frac{1}{n}\sum_{i=1}^n{X_i} - \mathbb EX\right|>t\right] \leq 2\exp\left\{-\frac{nt^2/2}{\sigma^2 + Mt/3}\right\}.
	$$
	\label{lem:bernstein}
\end{lemma}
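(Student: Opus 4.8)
The plan is to prove Bernstein's inequality by the standard exponential-moment (Chernoff) method, handling the upper and lower tails separately and combining them with a union bound to obtain the leading factor $2$. First I would center the summands, writing $Y_i := X_i - \mathbb E[X_i]$ so that $\mathbb E[Y_i]=0$, $\mathbb E[Y_i^2]\le\sigma^2$, and $|Y_i|\le b$ for a boundedness constant $b$ of order $M$ (the precise constant in the $Mt/3$ term is tied to the normalization of this bound on the centered variable). Since $\frac1n\sum_i X_i - \mathbb E[X] = \frac1n\sum_i Y_i$, it suffices to control $\Pr[\sum_{i=1}^n Y_i > nt]$.

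The crux is a bound on the moment generating function of a single centered, bounded summand: for $0\le s<3/b$ I would show
\[
\mathbb E\big[e^{sY_i}\big]\le\exp\!\left(\frac{s^2\sigma^2/2}{1-sb/3}\right).
\]
This follows by expanding $e^{sY_i}=1+sY_i+\sum_{k\ge2}(sY_i)^k/k!$, taking expectations so the linear term vanishes, and estimating the higher moments through $\mathbb E[Y_i^k]\le\mathbb E[|Y_i|^k]\le b^{k-2}\mathbb E[Y_i^2]\le\sigma^2 b^{k-2}$ for $k\ge2$. Using the elementary bound $k!\ge 2\cdot 3^{k-2}$, the series is dominated by a geometric one, $\sum_{k\ge2}\frac{s^k b^{k-2}}{k!}\le\frac{s^2}{2}\sum_{j\ge0}(sb/3)^j=\frac{s^2/2}{1-sb/3}$, and $1+x\le e^x$ then gives the displayed form.

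By independence, $\mathbb E[e^{s\sum_i Y_i}]=\prod_i\mathbb E[e^{sY_i}]\le\exp\!\big(\tfrac{ns^2\sigma^2/2}{1-sb/3}\big)$, so applying Markov's inequality to $e^{s\sum_i Y_i}$ yields, for every admissible $s$,
\[
\Pr\!\Big[\tfrac1n\textstyle\sum_i Y_i>t\Big]\le\exp\!\left(-snt+\frac{ns^2\sigma^2/2}{1-sb/3}\right).
\]
I would then optimize the exponent over $s$; the choice $s=t/(\sigma^2+bt/3)$ is admissible (it lies strictly below $3/b$) and, after substitution, collapses the exponent exactly to $-\tfrac{nt^2/2}{\sigma^2+bt/3}$. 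Running the identical argument with $-Y_i$ in place of $Y_i$ bounds the lower tail by the same quantity, and a union bound over the two events produces the stated inequality with its factor $2$.

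The step I expect to be the main obstacle is the moment-generating-function bound: deriving the higher-moment estimate cleanly from only the variance and boundedness assumptions, and choosing the factorial lower bound so that the resulting geometric series reproduces precisely the constant $3$ in the denominator of the target bound. Once that estimate is in place, the tensorization over independent summands, the Markov step, the choice of the optimal $s$, and the union bound over the two tails are all routine.
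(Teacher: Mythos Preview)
The paper does not prove this lemma; it appears in an appendix section titled ``References to some concentration inequalities'' and is simply cited as a classical result of Bernstein (1924) without proof. Your proposal---the standard Chernoff/exponential-moment argument with the factorial bound $k!\ge 2\cdot 3^{k-2}$ to sum the geometric series, followed by the optimizing choice $s=t/(\sigma^2+bt/3)$---is the textbook proof and is correct.

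One small caveat worth making explicit: under the paper's hypothesis $|X_i|\le M$ on the \emph{uncentered} variable, the centered variable satisfies only $|Y_i|\le 2M$, so your argument as written delivers $2Mt/3$ rather than $Mt/3$ in the denominator. You already flag this normalization issue; to match the stated constant one typically either assumes the bound $M$ directly on $|X_i-\mathbb E X_i|$ (the more common formulation) or accepts the harmless factor-of-two loss. This is a cosmetic discrepancy in how the lemma is quoted, not a flaw in your argument.
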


The following result is cited from Theorem 5 of \citep{Agrawal16MNLBandit}.

\begin{lemma}[Concentration of geometric random variables \citep{Agrawal16MNLBandit}]
	Suppose $X_1,\cdots,X_n$ are i.i.d.~geometric random variables with parameters $p>0$,
	meaning that $\Pr[X_i=k]=(1-p)^kp$ for $k=0,1,2,\cdots$.
	Define $\mu := \mathbb EX_i = (1-p)/p$. Then
	\begin{equation*}
	\Pr\left[\frac{1}{n}\sum_{i=1}^nX_i > (1+\delta)\mu\right] \leq \left\{\begin{array}{ll}\exp\left\{-\frac{n\mu\delta^2}{2(1+\delta)(1+\mu)^2}\right\},& \text{if }\mu\leq 1,\\
	\exp\left\{-\frac{n\delta^2\mu^2}{6(1+\mu)^2}\left(3-\frac{2\delta\mu}{1+\mu}\right)\right\},& \text{if }\mu\geq 1, \delta\in(0,1);\end{array}\right.
	\end{equation*}
	\begin{equation*}
	\Pr\left[\frac{1}{n}\sum_{i=1}^n X_i < (1-\delta)\mu\right] \leq \left\{\begin{array}{ll}
	\exp\left\{-\frac{n\delta^2\mu}{6(1+\mu)^2}\left(3-\frac{2\delta\mu}{1+\mu}\right)\right\},& \text{if }\mu\leq 1,\\
	\exp\left\{-\frac{n\delta^2\mu^2}{2(1+\mu)^2}\right\},& \text{if }\mu \geq 1.\end{array}\right.
	\end{equation*}
	\label{lem:geometric}
\end{lemma}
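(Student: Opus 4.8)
The plan is to prove both tails by the Chernoff (exponential-moment) method, exploiting the closed form of the geometric moment generating function. Writing $p=1/(1+\mu)$ and $1-p=\mu/(1+\mu)$, direct summation gives
\begin{equation}
\mathbb{E}[e^{sX_i}] = \sum_{k=0}^\infty e^{sk}(1-p)^kp = \frac{1}{1+\mu-\mu e^{s}}, \qquad s<\ln\tfrac{1+\mu}{\mu},
\end{equation}
so that for any admissible $s>0$,
\begin{equation}
\Pr\Big[\tfrac1n\textstyle\sum_i X_i > (1+\delta)\mu\Big] \le \exp\{-n\,g(s)\}, \qquad g(s):=s(1+\delta)\mu + \ln(1+\mu-\mu e^{s}),
\end{equation}
with the analogous expression (now using $s<0$, where admissibility is automatic since $e^{-s}<1$) for the lower tail.

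First I would optimize $g$. Solving $g'(s)=0$ gives $e^{s^\ast}=(1+\delta)(1+\mu)/(1+(1+\delta)\mu)$, which one verifies is admissible precisely because $\delta>0$; substituting it back yields the exact large-deviation rate, a Kullback--Leibler--type quantity in $\delta$ and $\mu$. The remaining task is purely analytic: lower-bound this exact rate by the cleaner quadratic and cubic expressions in the statement. I would split on $\mu\le 1$ versus $\mu\ge 1$ and use that $\mathrm{Var}(X_i)=\mu(1+\mu)$, which is the source of the $(1+\mu)^2$ denominators. An equivalent and often cleaner route is the negative-binomial/binomial duality: since $\sum_i X_i \ge m$ if and only if fewer than $n$ successes occur among the first $m+n-1$ Bernoulli($p$) trials, the geometric tail reduces to a binomial tail, to which the standard multiplicative Chernoff bounds apply directly; this conversion naturally produces the stated forms, with the two regimes swapping the roles of the quadratic and the cubic $\big(3-\tfrac{2\delta\mu}{1+\mu}\big)$ estimates between the upper and lower tails.

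The main obstacle is not the setup but the book-keeping in this final step: matching the exact rate to the precise constants ($2$ and $6$) and to the specific algebraic forms in each regime requires careful control of the remainder of $\ln(1+\mu-\mu e^{s})$ (or, in the duality route, of the binomial Chernoff exponent) uniformly over the range $\delta\in(0,1)$, and keeping track of which regime inherits the extra $(1-\tfrac{2\delta\mu}{3(1+\mu)})$-type correction. Since this inequality is quoted verbatim as Theorem~5 of \cite{Agrawal16MNLBandit}, the route I would actually take in the paper is to invoke that reference after checking that our parametrization ($\Pr[X_i=k]=(1-p)^kp$ with mean $\mu=(1-p)/p$) coincides with theirs, thereby importing the constants rather than reproving them.
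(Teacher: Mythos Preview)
Your proposal is correct and matches the paper's treatment: the paper does not prove this lemma at all but simply cites it as Theorem~5 of \cite{Agrawal16MNLBandit}, exactly as you conclude at the end. Your sketch of the Chernoff/MGF route and the negative-binomial/binomial duality is reasonable background, but since the paper imports the result verbatim, no proof is needed beyond verifying the parametrization agrees.
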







\end{document}